\def\preamble{}
\tikzset{
    -Latex,auto,node distance =1 cm and 1 cm,semithick,
    state/.style ={ellipse, draw, minimum width = 0.7 cm},
    point/.style = {circle, draw, inner sep=0.04cm,fill,node contents={}},
    bidirected/.style={Latex-Latex,dashed},
    el/.style = {inner sep=2pt, align=left, sloped}
}
\definecolor{myRed}{rgb}{0.75,0,0}
\definecolor{myGreen}{rgb}{0,0.75,0}
\declaretheorem{theorem}
\declaretheorem{lemma}
\declaretheorem{proposition}
\newenvironment{customprop}[1]
  {\innercustomprop}
  {\endinnercustomprop}
\newenvironment{customthm}[1]
  {\innercustomthm}
  {\endinnercustomthm}
\theoremstyle{definition}
\declaretheorem{definition}
\declaretheorem{example}
\newenvironment{customdef}[1]
  {\innercustomdef}
  {\endinnercustomdef}
\newcommand{\Parents}{Pa}
\DeclareSymbolFont{symbolsC}{U}{txsyc}{m}{n}
\DeclareMathSymbol{\boxright}{\mathrel}{symbolsC}{128}
\def\*#1{\mathbf{#1}}
\newcommand{\pai}[1]{\*{pa}_{#1}}
\newcommand{\Pai}[1]{\*{Pa}_{#1}}
\newcommand{\ui}[1]{\*{u}_{#1}}
\newcommand{\Ui}[1]{\*{U}_{#1}}
\theoremstyle{definition}
\DeclareMathOperator{\unif}{Unif}
\newcommand{\xdasharrow}[2][->]{
\tikz[baseline=-\the\dimexpr\fontdimen22\textfont2\relax]{
\node[anchor=south,font=\scriptsize, inner ysep=1.5pt,outer xsep=8pt](x){#2};
\draw[shorten <=3.4pt,shorten >=3.4pt,dashed,#1](x.south west)--(x.south east);
}
}
\def\ddefloop#1{\ifx\ddefloop#1\else\ddef{#1}\expandafter\ddefloop\fi}
\def\ddef#1{\expandafter\def\csname bb#1\endcsname{\ensuremath{\mathbb{#1}}}}
\def\ddef#1{\expandafter\def\csname c#1\endcsname{\ensuremath{\mathcal{#1}}}}
\def\ddef#1{\expandafter\def\csname h#1\endcsname{\ensuremath{\widehat{#1}}}}
\def\ddef#1{\expandafter\def\csname v#1\endcsname{\ensuremath{\boldsymbol{#1}}}}
\def\ddef#1{\expandafter\def\csname v#1\endcsname{\ensuremath{\boldsymbol{\csname #1\endcsname}}}}
\newcommand*{\indep}{%
  \mathbin{%
    \mathpalette{\@indep}{}%
  }%
}
\newcommand*{\nindep}{%
  \mathbin{
    \mathpalette{\@indep}{\not}
  }%
}
\newcommand*{\@indep}[2]{%
  \sbox0{$#1\perp\m@th$}
  \sbox2{$#1=$}
  \sbox4{$#1\vcenter{}$}
  \rlap{\copy0}
  \dimen@=\dimexpr\ht2-\ht4-.2pt\relax
  \kern\dimen@
  {#2}%
  \kern\dimen@
  \copy0 
} 
\icmltitlerunning{Causal Abstraction Inference under Lossy Representations}
\begin{document}

\twocolumn[
\icmltitle{Causal Abstraction Inference under Lossy Representations}



\icmlsetsymbol{equal}{*}

\begin{icmlauthorlist}
\icmlauthor{Kevin Xia}{columbia}
\icmlauthor{Elias Bareinboim}{columbia}
\end{icmlauthorlist}

\icmlaffiliation{columbia}{CausalAI Lab, Columbia University}

\icmlcorrespondingauthor{Kevin Xia}{kmx2000@columbia.edu}

\icmlkeywords{Machine Learning, Causality, ICML}

\vskip 0.3in
]



\printAffiliationsAndNotice{}  

\begin{abstract}
The study of causal abstractions bridges two integral components of human intelligence: the ability to determine cause and effect, and the ability to interpret complex patterns into abstract concepts. Formally, causal abstraction frameworks define connections between complicated low-level causal models and simple high-level ones. One major limitation of most existing definitions is that they are not well-defined when considering lossy abstraction functions in which multiple low-level interventions can have different effects while mapping to the same high-level intervention (an assumption called the abstract invariance condition). In this paper, we introduce a new type of abstractions called projected abstractions that generalize existing definitions to accommodate lossy representations.
We show how to construct a projected abstraction from the low-level model and how it translates equivalent observational, interventional, and counterfactual causal queries from low to high-level.
Given that the true model is rarely available in practice we prove a new graphical criteria for identifying and estimating high-level causal queries from limited low-level data.
Finally, we experimentally show the effectiveness of projected abstraction models in high-dimensional image settings.
\end{abstract}

\section{Introduction}

The ability to determine cause and effect, and the ability to interpret complex patterns into abstract concepts, are two integral components of human intelligence. From the causality perspective, causal reasoning is vital in planning courses of actions, determining blame and responsibility, and generalizing across changing environments. From the abstraction perspective, humans generally grasp better intuition when understanding something at a high-level. For example, a human can easily parse the object in an image as a dog or a car instead of interpreting it as a collection of pixel values. Combining these two modes of reasoning is vital for building more advanced AI systems.

Causal inference is often studied under the semantics of structural causal models (SCMs) \citep{pearl:2k}. An SCM models reality with a collection of mechanisms and exogenous distributions. Each SCM induces a collection of distributions categorized into three successively more descriptive layers known as the Ladder of Causation or Pearl Causal Hierarchy (PCH) \citep{pearl:mackenzie2018, bareinboim:etal20}. These three layers refer to the observational ($\cL_1$), interventional ($\cL_2$), and counterfactual ($\cL_3$) distributions. In many causal inference tasks, the goal is to infer a quantity from a higher layer using data from lower layers, a problem known as \emph{cross-layer inference}. It is understood that it is generally impossible to infer higher layer information without additional assumptions (a result known as the Causal Hierarchy Theorem or CHT \citep{bareinboim:etal20}), so understanding the necessary assumptions for performing inferences is a key component of any causal inference task.


Existing works on causal abstractions have made significant progress in defining abstraction principles, proving insightful properties, and learning abstraction functions in practice \citep{rubenstein:etal17-causalsem, beckers2019abstracting, Beckers2019-BECACA-8, geiger2023causal, pmlr-v213-massidda23a, DBLP:conf/clear2/ZennaroDAWD23, felekis:etal24}. Causal abstractions are typically studied by comparing a high-level model $\cM_H$, defined over high-level variables $\*V_H$, with its low-level counterpart $\cM_L$, defined over $\*V_L$. An abstraction function $\tau$ maps from $\*V_L$ to $\*V_H$, and $\cM_H$ is formally defined as an abstraction of $\cM_L$ if it satisfies key properties with respect to $\tau$ such as commutativity with interventions. More recently, this notion has been relaxed to only enforcing properties between distributions of $\cM_H$ and $\cM_L$ from the PCH \citep{xia:bareinboim24}. For example, rather than saying $\cM_H$ is a full abstraction of $\cM_L$, one can say that $\cM_H$ is an abstraction of $\cM_L$ specifically for interventional quantities in $\cL_2$ or for a single causal effect $P(y \mid do(x)) \in \cL_2$. \citet{xia:bareinboim24} also shows the synergy between causal abstraction theory and representation learning \citep{10.1109/TPAMI.2013.50}, which has shown great success in many deep learning applications by mapping high-dimensional data like images or text to simpler representation spaces. These definitions of causal abstractions have accomplished formalizing a broad topic of human intelligence into mathematical language.

One particular limitation of existing definitions of abstractions is known as the Abstract Invariance Condition (AIC), which states, informally, that two values cannot be abstracted together if they have different downstream impacts. This is illustrated in Fig.~\ref{fig:aic-violation}. For example, a nutritionist may have collected data on two types of cholesterol, HDL and LDL, and are studying their impact on heart disease \citep{steinberg2007, truswell2010}. They would like to abstract the two together by summing them as total cholesterol (TC). However, this violates the AIC, as it is known that HDL decreases rate of heart disease while LDL increases it, so the sum is ambiguous (a lossy representation).\footnote{See App.~\ref{app:examples} Ex.~\ref{ex:noaic-heart-disease} for a more concrete explanation.} Nonetheless, it may still be desirable to have a consistent formalism in which these kinds of ambiguous abstractions are well-defined, since in many practical settings (where representation learning or dimensionality reduction is needed), the AIC is clearly violated or is impossible to verify.


In this paper, we study this extension of causal abstractions, which we later define as \emph{projected abstractions}, referring to the idea that an abstraction that violates the AIC results in a loss of information that is then characterized in the exogenous space. The proposed formalism generalizes abstractions both on the SCM and on the PCH level to allow for mathematically consistent abstractions even with AIC violations. Projected abstractions have many uses in practice, resulting in tractable causal inference and high-quality causal sampling even in the presence of extreme dimensionality reduction, a result which we show in the experiments.

To summarize, in Sec.~\ref{sec:soft-abs}, we generalize abstractions to settings which the AIC does not hold and provide an algorithm for constructing the high-level model. In Sec.~\ref{sec:inference}, we show how to perform causal inference from data within this class of abstractions when the true model is not observed. In Sec.~\ref{sec:experiments}, we empirically demonstrate the power of abstractions at performing causal inference in high-dimensional image settings. All proofs can be found in App.~\ref{app:proofs}. Appendices can be found in the full technical report, \citet{xialossy:tr}.

\begin{figure}
\centering
\includegraphics[width=\linewidth]{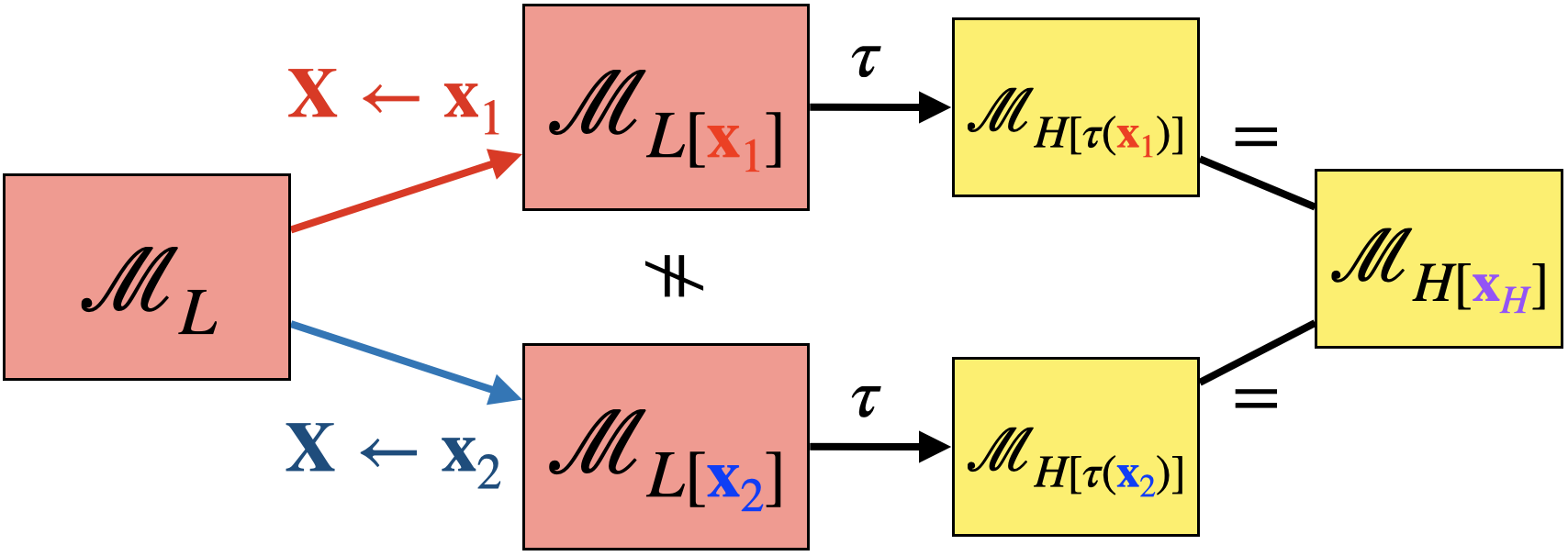}
\caption{An illustration of AIC violations. On the low level, two different interventions may be performed (e.g., {\color{red}$\*X \gets \*x_1$} and {\color{blue}$\*X \gets \*x_2$}). However, after applying the abstraction function $\tau$ to obtain the high-level model, both interventions are mapped to the same result ($\tau({\color{red}\*x_1}) = \tau({\color{blue}\*x_2}) = {\color{violet}\*x_H}$). If $\cM_{L}$ behaves differently under $\color{red}\*x_1$ compared to $\color{blue}\*x_2$, $\cM_H$ cannot stay consistent with both models.}
\label{fig:aic-violation}
\vspace{-0.1in}
\end{figure}

\subsection{Preliminaries}
\label{sec:prelims}

We now introduce the notation and definitions used throughout the paper. We use uppercase letters ($X$) to denote random variables and lowercase letters ($x$) to denote corresponding values. Similarly, bold uppercase ($\*X$) and lowercase ($\*x$) letters denote sets of random variables and values respectively. We use $\cD_{X}$ to denote the domain of $X$ and $\cD_{\mathbf{X}} = \cD_{X_1} \times \dots \times \cD_{X_k}$ for the domain of $\mathbf{X} = \{X_1, \dots, X_k\}$. We denote $P(\*X = \*x)$ (often shortened to $P(\*x)$) as the probability of  $\*X$ taking the values $\*x$ under the distribution $P(\*X)$. 

We utilize the basic semantic framework of structural causal models (SCMs) \citep{pearl:2k}, following the presentation in \citet{bareinboim:etal20}. 

\begin{definition}[Structural Causal Model (SCM)]
    \label{def:scm} 
    An SCM $\cM$ is a 4-tuple $\langle \*U, \*V, \cF, P(\*U)\rangle$, where $\*U$ is a set of exogenous variables (or ``latents'') that are determined by factors outside the model; $\*V$ is a set $\{V_1, V_2, \ldots, V_n\}$ of (endogenous) variables of interest that are determined by other variables in the model -- that is, in $\*U \cup \*V$; $\cF$ is a set of functions $\{f_{V_1}, f_{V_2}, \ldots, f_{V_n}\}$ such that each $f_{V_i}$ is a mapping from (the respective domains of) $\Ui{V_i} \cup \Pai{V_i}$ to $V_{i}$, where $\Ui{V_i} \subseteq \*U$, $\Pai{V_i} \subseteq \*V \setminus V_{i}$, and the entire set $\cF$ forms a mapping from $\*U$ to $\*V$. That is, for $i=1,\ldots,n$, each $f_{V_i} \in \cF$ is such that $v_i \leftarrow f_{V_i}(\pai{V_i}, \ui{V_i})$; and $P(\*U)$ is a probability function defined over the domain of $\*U$.
    \hfill $\blacksquare$
\end{definition}

Each $\cM$  induces a causal diagram $\cG$, where every $V_i \in \*V$ is a vertex, there is a directed arrow $(V_j \rightarrow V_i)$ for every $V_i \in \*V$ and $V_j \in \Pai{V_i}$, and there is a dashed-bidirected arrow $(V_j  \dashleftarrow \dashrightarrow V_i)$ for every pair $V_i, V_j \in \*V$ such that $\Ui{V_i}$ and $\Ui{V_j}$ are not independent (Markovianity is not assumed). Our treatment is constrained to \emph{recursive} SCMs, which implies acyclic causal diagrams, with finite discrete domains over endogenous variables $\mathbf{V}$. 

Counterfactual (and also interventional and observational) quantities can be computed from SCM $\cM$ as follows: 
\begin{definition}[Layer 3 Valuation {\citep[Def.~7]{bareinboim:etal20}}] 
\label{def:l3-semantics}
An SCM $\cM$ induces layer $\cL_3(\cM)$, a set of distributions over $\*V$, each with the form $P(\*Y_*) = P(\*Y_{1[\*x_1]}, \*Y_{2[\*x_2], \dots})$ such that 
\begin{align}
    \label{eq:def:l3-semantics}
    & P^{\cM}(\*y_{1[\*x_1]}, \*y_{2[\*x_2]}, \dots) = \\
    & \int_{\cD_{\mathbf{U}}} \mathbf{1}\left[\*Y_{1[\*x_1]}(\*u)=\*y_1, \*Y_{2[\*x_2]}(\*u) = \*y_2, \dots \right] dP(\*u) \nonumber
\end{align}
where ${\*Y}_{i[\*x_i]}(\*u)$ is evaluated under 
 $\mathcal{F}_{\*x_i}\! :=\! \{f_{V_j}\! :\! V_j \in \*V \setminus \*X_i\} \cup \{f_X \leftarrow x\! :\! X \in \*X_i\}$. $\cL_2$ is the subset of $\cL_3$ for which all $\*x_i$ are equal, and $\cL_1$ is the subset for which all $\*X_i = \emptyset$.
 \hfill $\blacksquare$
\end{definition}
Each $\*Y_i$ corresponds to a set of variables in a world where the original mechanisms $f_X$ are replaced with constants $\*x_i$ for each $X \in \*X_i$; this is also known as the mutilation procedure. This procedure corresponds to interventions, and we use subscripts to denote the intervening variables (e.g. $\*Y_{\*x}$) or subscripts with brackets when the variables are indexed (e.g. $\*Y_{1[\*x_1]}$). For instance, $P(y_x, y'_{x'})$ is the probability of the joint counterfactual event $Y=y$ had $X$ been $x$ and $Y=y'$ had $X$ been $x'$. 

We use the notation $\cL_i(\cM)$ to denote the set of $\cL_i$ distributions from $\cM$. We use $\bbZ$ to denote a set of quantities from Layer 2 (i.e. $\bbZ = \{P(\*V_{\*z_k})\}_{k=1}^{\ell}$), and $\bbZ(\cM)$ denotes those same quantities induced by SCM $\cM$ (i.e. $\bbZ(\cM) = \{P^{\cM}(\*V_{\*z_k})\}_{k=1}^{\ell}$).

The theory of causal abstractions developed in this paper build on the foundations of constructive abstraction functions, under which individual distributions of the PCH are well-defined between low and high-level models.

\begin{restatable}[Inter/Intravariable Clusterings {\citep[Def.~5]{xia:bareinboim24}}]{definition}{interintraclusters}
    \label{def:var-clusterings}
    Let $\cM$ be an SCM over $\*V$.
    \begin{enumerate}
        \item A set $\bbC$ is said to be an intervariable clustering of $\*V$ if $\bbC = \{\*C_1, \*C_2, \dots \*C_n\}$ is a partition of a subset of $\*V$. $\bbC$ is further considered admissible w.r.t.~$\cM$ if for any $\*C_i \in \bbC$ and any $V \in \*C_i$, no descendent of $V$ outside of $\*C_i$ is an ancestor of any variable in $\*C_i$. That is, there exists a topological ordering of the clusters of $\bbC$ relative to the functions of $\cM$.
        \item A set $\bbD$ is said to be an intravariable clustering of variables $\*V$ w.r.t.~$\bbC$ if $\bbD = \{\bbD_{\*C_i} : \*C_i \in \bbC\}$, where $\bbD_{\*C_i} = \{\cD^{1}_{\*C_i}, \cD^{2}_{\*C_i}, \dots, \cD^{m_i}_{\*C_i}\}$ is a partition (of size $m_i$) of the domains of the variables in $\*C_i$, $\cD_{\*C_i}$ (recall that $\cD_{\*C_i}$ is the Cartesian product $\cD_{V_1} \times \cD_{V_2} \times \dots \times \cD_{V_k}$ for $\*C_i = \{V_1, V_2, \dots, V_k\}$, so elements of $\cD_{\*C_i}^j$ take the form of tuples of the value settings of $\*C_i$).
        \hfill $\blacksquare$
    \end{enumerate}
\end{restatable}

\begin{restatable}[Constructive Abstraction Function {\citep[Def.~6]{xia:bareinboim24}}]{definition}{consabsfunc}
    \label{def:tau}
    A function $\tau: \cD_{\*V_L} \rightarrow \cD_{\*V_H}$ is said to be a constructive abstraction function w.r.t.~inter/intravariable clusters $\bbC$ and $\bbD$ iff
    \begin{enumerate}
        \item There exists a bijective mapping between $\*V_H$ and $\bbC$ such that each $V_{H, i} \in \*V_H$ corresponds to $\*C_i \in \bbC$;
        \item For each $V_{H, i} \in \*V_H$, there exists a bijective mapping between $\cD_{V_{H, i}}$ and $\bbD_{\*C_i}$ such that each $v_{H, i}^j \in \cD_{V_{H, i}}$ corresponds to $\cD^j_{\*C_i} \in \bbD_{\*C_i}$; and 
        \item $\tau$ is composed of subfunctions $\tau_{\*C_i}$ for each $\*C_i \in \bbC$ such that $\*v_H = \tau(\*v_L) = (\tau_{\*C_i}(\*c_i) : \*C_i \in \bbC)$, where $\tau_{\*C_i}(\*c_i) = v^j_{H,i}$ if and only if $\*c_i \in \cD^{j}_{\*C_i}$. We also apply the same notation for any $\*W_L \subseteq \*V_L$ such that $\*W_L$ is a union of clusters in $\bbC$ (i.e. $\tau(\*w_L) = (\tau_{\*C_i}(\*c_i) : \*C_i \in \bbC, \*C_i \subseteq \*W_L)$).
    \hfill $\blacksquare$
    \end{enumerate}
\end{restatable}

Finally, we state the AIC formally below.

\begin{restatable}[Abstract Invariance Condition (AIC)]{definition}{aicdef}
    \label{def:invariance-condition}
    Let $\cM_L = \langle \*U_L, \*V_L, \cF_L, P(\*U_L) \rangle$ be an SCM and $\tau: \cD_{\*V_L} \rightarrow \cD_{\*V_H}$ be a constructive abstraction function relative to $\bbC$ and $\bbD$. The SCM $\cM_L$ is said to satisfy the abstract invariance condition (AIC, for short) with respect to $\tau$ if, for all $\*v_1, \*v_2 \in \cD_{\*V_L}$ such that $\tau(\*v_1) = \tau(\*v_2)$, $\forall \*u \in \cD_{\*U_L}, \*C_i \in \bbC$, the following holds:
    \begin{equation}
        \label{eq:aic}
        \begin{split}
            & \tau_{\*C_i} \left( \left ( f^L_V(\pai{V}^{(1)}, \*u_V): V \in \*C_i \right ) \right) \\
            &= \tau_{\*C_i} \left( \left ( f^L_V(\pai{V}^{(2)}, \*u_V): V \in \*C_i \right ) \right),
        \end{split}
    \end{equation}
    where $\pai{V}^{(1)}$ and $\pai{V}^{(2)}$ are the values corresponding to $\*v_1$ and $\*v_2$.
    \hfill $\blacksquare$
\end{restatable}



A table summarizing the notation can be found in App.~\ref{app:notation-table}, detailed explanations of these definitions can be found in App.~\ref{app:extended-prelim}, and additional useful definitions from prior work can be found in App.~\ref{app:additional-defs}.

\section{Abstractions under AIC Violations}
\label{sec:soft-abs}

The abstract invariance condition (AIC) states, in words, that two low-level values cannot map to the same high-level value if they have different downstream effects. This is a critical property that must hold for existing definitions of abstractions to be well-defined.
In this paper, we will use the following running example to illustrate the key points.

\begin{example}
    \label{ex:noaic-issue}
    For concreteness, consider a setting in which different insurance companies ($Z$) offer various insurance plans ($X$), which affect whether an insurance claim is approved ($Y$). For simplicity, suppose there are two insurance companies ($z_1$ and $z_2$) that offer three insurance plans ($x_1$, $x_2$, and $x_3$), and the claim is either approved ($Y = 1$) or not approved ($Y = 0$). Suppose the true model $\cM^* = \cM_L = \langle \*U_L, \*V_L, \cF_L, P(\*U_L)\rangle$ is described as
    {
    \allowdisplaybreaks
    \begin{align}
        \*U_L &= \{U_Z, U_X^{z_1}, U_X^{z_2}, U_Y^{x_1}, U_Y^{x_2}, U_Y^{x_3}\} \nonumber \\
        \*V_L &= \{Z, X, Y\} \nonumber \\
        \cF_L \! &= \!
        \begin{cases}
            f^L_Z(u_Z) &= u_Z \\
            f^L_X(z, u_X^{z_1}, u_X^{z_2}) &= u_X^{z} \\
            f^L_Y(x, u_Y^{x_1}, u_Y^{x_2}, u_Y^{x_3}) &= u_Y^x
        \end{cases} \label{eq:ex-insurance-scm} \\
        P(\*U_L)\! &= \! \begin{cases}
            P(U_Z = z_1) = 0.5 \\
            P(U_X^{z_1}) \! = \! \{x_1 \! \rightarrow \! 0.4; x_2 \! \rightarrow \! 0.1; x_3 \! \rightarrow \! 0.5\} \\
            P(U_X^{z_2}) \! = \! \{x_1 \! \rightarrow \! 0.1; x_2 \! \rightarrow \! 0.4; x_3 \! \rightarrow \! 0.5\}\\
            P(U_Y^{x_1} = 1) = 0.9, P(U_Y^{x_2} = 1) = 0.1, \\
            P(U_Y^{x_3} = 1) = 0.9
        \end{cases} \nonumber
    \end{align}
    }
    The interpretation of the model is as follows: Insurance plans $x_1$ and $x_3$ are very effective, with $0.9$ probability of claim acceptance, while $x_2$ is very ineffective at only $0.1$ probability. Insurance company $z_1$ is more reputable than $z_2$ and is more likely to offer plan $x_1$ over $x_2$, while company $z_2$ prefers to offer plan $x_2$ over $x_1$.

    Suppose an important factor of consideration not shown in the model is that $x_1$ and $x_2$ are cheaper insurance plans, while $x_3$ is more expensive. A data scientist who is studying this model may choose to abstract the different plans away, categorizing them simply as ``cheap'' and ``expensive'' plans. Formally, they would study a set of higher-level variables $\*V_H = \{Z_H, X_H, Y_H\}$, where $Z_H = Z$, $Y_H = Y$, and $X_H$ has a domain $\cD_{X_H} = \{x_C, x_E\}$ corresponding to cheap and expensive plans respectively. There exists an abstraction function $\tau: \cD_{\*V_L} \rightarrow \cD_{\*V_H}$ such that $\tau$ maps $x_1$ and $x_2$ to $x_C$ (cheap) and maps $x_3$ to $x_E$ (expensive). We will sometimes use the notation $X_L$ to describe $X$ to disambiguate from $X_H$, and we will use the notation $Z$ and $Y$ instead of $Z_H$ and $Y_H$ since the variables are the same on both levels.

    This immediately brings the AIC into question. If the data scientist is interested in the causal effect of cheap plans on claim acceptance (i.e., $P(Y_{X_H = x_C} = 1)$), whether $x_C$ refers to $x_1$ or $x_2$ is ambiguous. To witness, note that
    \begin{align}
        P(Y_{X_L = x_1} = 1) &= 0.9 \\
        P(Y_{X_L = x_2} = 1) &= 0.1.
    \end{align}
    Since $\tau(x_1) = \tau(x_2) = x_C$, but $P(Y_{x_1}) \neq P(Y_{x_2})$, the AIC is clearly violated, leaving the intervention on $x_C$ ambiguous.
    \hfill $\blacksquare$
\end{example}

Fundamentally, the issue with AIC violations is clear: formal definitions of abstractions expect an equality between low-level and corresponding high-level quantities, but it is not well-defined when one high-level quantity corresponds to multiple differing low-level quantities. In practice, the AIC can be a difficult restriction. Generally, it is assumed to be true whenever abstractions are applied, but it is difficult to verify given that the true SCM and functions are rarely available in real-world settings. The assumption is also likely to be incorrect when applying abstractions na\"ively, for example, by performing representation learning or dimensionality reduction without taking the AIC into account. By definition, dimensionality reduction is a lossy transformation of the original data, and the AIC is violated if any of the lost information is relevant for downstream functions.

Even when the AIC does not hold, it does not necessarily mean that these lossy transformations should not be used. Representation learning and dimensionality reduction are often performed to improve tractability or interpretability at the cost of some lost information. Hence, it would still be desirable to perform causal inferences in the high-level space even under AIC violations. To address the issue of different low-level quantities matching the same high-level quantity, one can reinterpret the high-level quantity as a distribution over its corresponding low-level quantities, where the randomness in the distribution results from the lost information from the abstraction (i.e., a hard intervention on the high-level translates to a soft intervention on the low-level).


\subsection{Projected Abstractions}

\begin{figure}
\centering
\includegraphics[width=\linewidth]{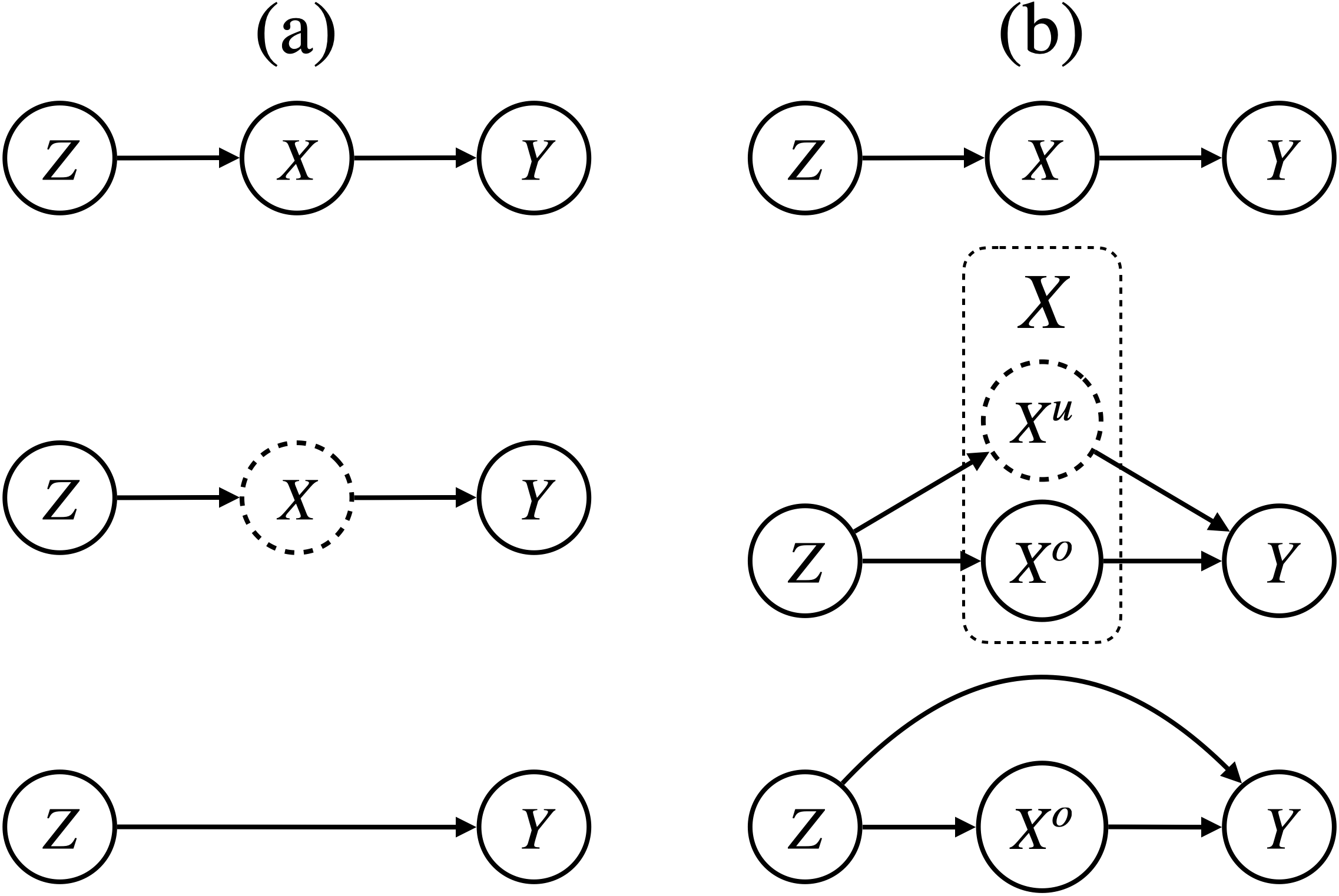}
\caption{Comparison between (a) full SCM projections and (b) partial SCM projections. When $X$ is fully projected away, its function is subsumed by its child's function $f_Y$. When $X$ is partially projected, it is split into observed portion $X^o$ and unobserved portion $X^u$. The role of $X^o$ is preserved, while $X^u$ is subsumed into the function $f_Y$.}
\label{fig:full-vs-partial-proj}
\vspace{-0.1in}
\end{figure}

The discussion on relaxing the AIC begins with the concept of SCM projections \citep{lee:bar19a}, which can be viewed as a primitive form of abstraction. An SCM $\cM$ projected to a subset of variables $\*W \subseteq \*V$ is a functionally identical SCM defined over $\*W$, where the functions of $\*V \setminus \*W$ are subsumed by other downstream functions (see App.~\ref{app:proofs} Def.~\ref{def:scm-proj} for the full definition and App.~\ref{app:examples} Ex.~\ref{ex:scm-projection} for an example). In the context of constructive abstraction functions, the act of projecting away a variable can be viewed as excluding the variable from all intervariable clusters. This brings the first major insight in addressing AIC violations. In general, when reducing the granularity of a variable, some parts of the variable deemed less important are abstracted away while others are retained. While by definition, SCM projections only allow for entire variables to be included or excluded, one could conceive of SCM projections in which variables are only partially projected away (see App.~\ref{app:examples} Ex.~\ref{ex:scm-partial-projection} for an example). Formally, partial SCM projections can be defined as follows.

\begin{restatable}[Partial SCM Projection]{proposition}{partialscmproj}
    \label{prop:partial-scm-projection}
    Let $\*V$ be a set of variables and $\*W \subseteq \*V$ be a subset. For each $W_i \in \*W$, let $\delta_i: \cD_{W_i^o} \times \cD_{W_i^u} \rightarrow \cD_{W_i}$ be a surjective function mapping new variables $W_i^o$ and $W_i^u$ to $W_i$. $W_i^o$ and $W_i^u$ are called the observed and unobserved projections of $W_i$ respectively. Denote $\delta(\*W^o, \*W^u) = \*W$, where $\*W^o = \{W_i^o : W_i \in \*W\}$ and $\*W^u = \{W_i^u : W_i \in \*W\}$. For any SCM $\cM = \langle \*U, \*V, \cF, P(\*U) \rangle$, there exists an SCM $\cM' = \langle \*U' = \*U \cup \*W^u, \*V' = \*W^o, \cF', P(\*U') \rangle$ such that, for all $\*u \in \cD_{\*U}$, $\*X \subseteq \*W$, and $\*x \in \cD_{\*X}$,
    \begin{equation}
        \label{eq:partial-scm-projection}
        \*w^o_{\*x} = \cM'_{[\*x^o]}(\*u, \*x^u, \*z^u),
    \end{equation}
    where $\delta(\*w^o_{\*x}, \*w^u_{\*x}) = \*W_{\*x}(\*u)$, $\delta(\*x^o, \*x^u) = \*x$, $\*Z^u = \*W^u \setminus \*X^u$, and $\*z^u$ are the corresponding values from $\*w^u_{\*x}$. $\cM'$ is called a partial SCM projection of $\cM$ over $\*W^o$.    
    \hfill $\blacksquare$
\end{restatable}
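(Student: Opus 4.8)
The plan is to exhibit $\cM'$ explicitly and then verify the functional identity~(\ref{eq:partial-scm-projection}) by a single induction along a topological order of $\*V$ in $\cM$. The construction rests on two ingredients: the $\delta_i$-decomposition of each kept variable $W_i$ into its observed coordinate $W_i^o$ and its unobserved coordinate $W_i^u$, and the ordinary (full) SCM projection of \citet{lee:bar19a}, which already tells us how the mechanisms of $\*V\setminus\*W$ can be subsumed (``inlined'') into the mechanisms of their kept descendants without disturbing $\cL_3$ restricted to the kept variables. The only new element relative to \citet{lee:bar19a} is that the coordinates $\*W^u$ are \emph{reclassified} as exogenous inputs of $\cM'$ rather than recomputed; the content of the proposition is then that this reclassification is harmless as long as $\*W^u$ is fed the values it would actually take in $\cM$.

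Concretely, I would first fix for each $W_i\in\*W$ a right inverse $s_i=(s_i^o,s_i^u)$ of the surjection $\delta_i$, so that $\delta_i\circ s_i=\mathrm{id}$; this exists since $\delta_i$ maps onto the finite set $\cD_{W_i}$. With $s$ and $\delta$ denoting the coordinatewise versions, the decompositions named in the statement are $\*w^o_{\*x}=s^o(\*W_{\*x}(\*u))$, $\*w^u_{\*x}=s^u(\*W_{\*x}(\*u))$, and $(\*x^o,\*x^u)=s(\*x)$. I then set $\*U'=\*U\cup\*W^u$ and $\*V'=\*W^o$, let $P(\*U')$ be $P(\*U)$ together with any fixed law on $\cD_{\*W^u}$ (the choice is immaterial, as the identity to be proved constrains only the mechanisms $\cF'$ evaluated pointwise), and define each $f'_{W_i^o}$ so that it ``runs $\cM$'': writing $\cM^{[\*W]}$ for the ordinary projection onto $\*W$, the arguments of $f'_{W_i^o}$ are the coordinates $w_j^o$ of the parents $W_j$ of $W_i$ in $\cM^{[\*W]}$, the matching exogenous coordinates $w_j^u$, and the $\*U$-coordinates that feed $W_i$ and its $\*V\setminus\*W$-ancestors; internally $f'_{W_i^o}$ reconstructs, in topological order, each $\*V\setminus\*W$-ancestor of $W_i$ via the original $f_V$ --- reading the value of any needed $W_j$ as $\delta_j(w_j^o,w_j^u)$ --- and outputs $s_i^o$ of the value that $f_{W_i}$ returns on the reconstructed parents and the supplied $\*U$-coordinates. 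Acyclicity is inherited from $\cM$ (the directed edges among $\*W^o$ mirror those of $\cM^{[\*W]}$ and the $\delta_j$-reconstructions add no cycles), so $\cM'$ is a legitimate SCM over $\*W^o$ (taking the new domains to be finite and discrete).

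The verification of~(\ref{eq:partial-scm-projection}) I would carry out by induction along a topological order of $\*V$, proving the joint claim: when $\cM'$ is evaluated under the intervention $\*X^o\gets\*x^o$ with exogenous values $\*u$ on $\*U$, $\*x^u$ on $\*X^u$, and the corresponding coordinates of $\*w^u_{\*x}$ on $\*Z^u=\*W^u\setminus\*X^u$, the value reconstructed for each $V\in\*V$ --- where ``the value of $W_i$'' means $\delta_i$ applied to its computed $o$-coordinate and its fed $u$-coordinate --- equals $V_{[\*x]}(\*u)$ in $\cM$. Because $\*X\subseteq\*W$, no $\*V\setminus\*W$-variable is intervened upon, and the step for such a variable is immediate from the induction hypothesis together with the fact that $f'$ applies the original $f_V$. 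For $W_i\in\*X$, the computed $o$-coordinate is pinned to $x_i^o$ by the intervention and the fed $u$-coordinate is $x_i^u$, so the reconstructed value is $\delta_i(x_i^o,x_i^u)=x_i=V_{[\*x]}(\*u)$ by $\delta\circ s=\mathrm{id}$. For $W_i\notin\*X$, the induction hypothesis says $f_{W_i}$ is applied to the $\cM$-mutilated parent values, hence returns $W_{i[\*x]}(\*u)$, so the computed $o$-coordinate is $s_i^o(W_{i[\*x]}(\*u))$ and, since the fed $u$-coordinate equals $s_i^u(W_{i[\*x]}(\*u))$ by the definition of $\*z^u$, the reconstructed value is $\delta_i\bigl(s_i(W_{i[\*x]}(\*u))\bigr)=W_{i[\*x]}(\*u)$. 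Reading off the $\*W^o$-coordinates of the final state then gives $\cM'_{[\*x^o]}(\*u,\*x^u,\*z^u)=s^o(\*W_{\*x}(\*u))=\*w^o_{\*x}$, which is~(\ref{eq:partial-scm-projection}).

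I expect the only genuine difficulty to be organizational: one must align the argument lists of the $f'_{W_i^o}$ with the parent structure that the ordinary projection of \citet{lee:bar19a} induces on the $\delta$-expanded model, and check that the single induction correctly interleaves the reconstruction of the $\*V\setminus\*W$-variables with the $\*W^o$-coordinates along directed paths. The one substantive point --- that a hard intervention $W_i\gets x_i$ in $\cM$ corresponds in $\cM'$ to pinning \emph{both} coordinates of $W_i$, the observed one by intervening on $W_i^o$ and the unobserved one by supplying $x_i^u$ as an exogenous input, so that every downstream reconstruction reads $\delta_i(x_i^o,x_i^u)=x_i$ --- is exactly the $W_i\in\*X$ case above; everything else delegates to the correctness of the full SCM projection.
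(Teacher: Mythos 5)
Your construction is essentially the paper's own: the paper proves this proposition by pointing to Alg.~1 and the proof of Thm.~\ref{thm:alg-construct-mh}, where each new mechanism likewise reconstructs its parents via $\delta(\cdot^o,\cdot^u)$, applies the original low-level function, and maps the output back to the observed coordinate, with correctness verified pointwise along the evaluation order. Your version is, if anything, slightly more careful than the paper's on two points it glosses over --- fixing a right inverse $s$ of the surjection $\delta$ to pin down the non-unique decomposition $(\*w^o_{\*x},\*w^u_{\*x})$, and explicitly delegating the fully-dropped variables $\*V\setminus\*W$ to the ordinary projection of \citet{lee:bar19a} --- so it is correct and takes the same route.
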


In words, a partial SCM projection of $\cM$ over $\*W^o$ is essentially a smaller version of $\cM$ defined only on the variables of $\*W \subseteq \*V$, where each $W_i \in \*W$ is only partially represented in the projection. A function $\delta$ splits $W_i$'s domain into its observed ($W_i^o$) and unobserved ($W_i^u$) portions. Eq.~\ref{eq:partial-scm-projection} ensures that any value of $\*W^o$ obtained from an intervention on the original SCM $\cM_{\*x}$ will match the corresponding output from $\cM'$, when the observed portion of the intervention $\*x^o$ is applied to $\cM'$, while the unobserved portions of $\*x^u$ and $\*w^u$ are passed as unobserved arguments to the functions. A comparison between regular SCM projections and partial SCM projections is shown in Fig.~\ref{fig:full-vs-partial-proj}. The definition of projected abstractions follow.

\begin{definition}[Projected Abstraction]
    \label{def:projected-abs}
    An SCM $\cM_H$ is a projected abstraction of $\cM_L$ if and only if it is a partial SCM projection of a $\tau$-abstraction \citep[Def.~3.13]{beckers2019abstracting} (also Def.~\ref{def:tau-abs} in App.~\ref{app:proofs}) of $\cM_L$.
    \hfill $\blacksquare$
\end{definition}

To provide intuition for projected abstractions, consider the following example.
\begin{example}
    \label{ex:proj-abs}
    Continuing Example \ref{ex:noaic-issue}, given the setup of Eq.~\ref{eq:ex-insurance-scm}, suppose $X_H \in \{x_C, x_E\}$ is given the function
    \begin{equation}
        f_{X}^H(z, u_X^{z_1}, u_X^{z_2}) = \begin{cases}
        x_C & u_X^z \in \{x_1, x_2\} \\
        x_E & u_X^z = x_3
        \end{cases},
    \end{equation}
    and define $X_H^u \in \{x_1, x_2\}$ as a random variable with distribution
    \begin{equation}
        \label{eq:ex-projected-exog}
        P(X_H^u = x_i) = P(X_L = x_i \mid X_L \in \{x_1, x_2\}, z).
    \end{equation}
    Suppose now $Y$ is now given a high-level function
    \begin{equation}
        \label{eq:ex-high-y}
        f_Y^H(x_H, x_H^u, u_Y^{x_i}) = \begin{cases}
            u_Y^{x_1} & x_H = x_C, X_H^u = x_1 \\
            u_Y^{x_2} & x_H = x_C, X_H^u = x_2 \\
            u_Y^{x_3} & x_H = x_E
        \end{cases}.
    \end{equation}
    Observe the intuition from constructing these functions from the perspective of projected abstractions. $f_X^H$ behaves identically to $f_X^L$, except the output remaps the value of $X_L$ to the corresponding $X_H$ (i.e.\ $f_X^H = \tau(f_X^L)$). However, due to the AIC violation, $f_Y^H$ is unable to disambiguate between $x_1$ and $x_2$ if $X_H = x_C$. The solution is to introduce a new exogenous variable $X_H^u$ which represents information in $X_L$ that is not captured in $X_H$ and disambiguates between $x_1$ and $x_2$. $f_Y^H$ then uses both $X_H$ and $X_H^u$ to mimic the behavior of $X_L$. It is clear that $X_L$ can be constructed as $\delta(X_H, X_H^u)$, defined as
    \begin{equation}
        \delta(x_H, x_H^u) = 
        \begin{cases}
            x_1 & x_H = x_C, X_H^u = x_1 \\
            x_2 & x_H = x_C, X_H^u = x_2 \\
            x_3 & x_H = x_E
        \end{cases},
    \end{equation}
    which matches Eq.~\ref{eq:ex-high-y}. Indeed, $\cM_H = \langle \*U_H = \*U_L \cup \{X_H^u\}, \*V_H, \cF_H = \{f_Z^L, f_X^H, f_Y^H\}, P(\*U_H) \rangle$ is a partial SCM projection (and also projected abstraction) of $\cM_L$ over $\*V_H$. The graph corresponding to $\cM_L$ is clearly the top graph of Fig.~\ref{fig:full-vs-partial-proj}(b), but note that through Eq.~\ref{eq:ex-projected-exog}, there is now a dependence from $Z$ to $Y$, so the graph for $\cM_H$ is instead the bottom graph of Fig.~\ref{fig:full-vs-partial-proj}(b).

    It is easy to see that Eq.~\ref{eq:partial-scm-projection} holds in this example. For instance, fix $U_Z = z_1$, $U_X^{z_1} = x_2$, $U_Y^{x_2} = 1$. Clearly, evaluating $\cM_L$ with these values results in $Z = z_1, X = x_2, Y = 1$. Note that $x_2 = \delta(x_C, x_2)$, and this is the only set of values of $X_H, X_H^u$ that map to $x_2$. Indeed, on the high level, with $U_Z = z_1$, $U_X^{z_1} = x_2$, $U_Y^{x_2} = 1, X_H^u = x_2$, it must also be the case that $Z = z_1, X_H = x_C, Y = 1$.
    \hfill $\blacksquare$
\end{example}

Projected abstractions make an important step to working around the AIC as Eq.~\ref{eq:partial-scm-projection} allows for quantities to be well-defined between low and high-level variables by simply obtaining a partial projection of the original SCM $\cM_L$ over the high-level variables $\*V_H$. However, unlike full SCM projections, partial SCM projections are not unique in terms of the induced PCH distributions. Prop.~\ref{prop:partial-scm-projection} guarantees its existence but is underspecified in a couple of ways. First, $P(\mathbf{U}')$ is not fully defined, and it is not clear how $\*W^u$ should be sampled (e.g., it is not clear how Eq.~\ref{eq:ex-projected-exog} is chosen in Ex.~\ref{ex:proj-abs}). Second, Eq.~\ref{eq:partial-scm-projection} does not specify what behavior $\cM'$ should follow when $\*z^u$ does not match $\*w_{\*x}^{\*u}$ (e.g., How should $Y$ depend on $X_H^u$ in Ex.~\ref{ex:proj-abs} if $X_H = x_E$?).

The specific choice of partial SCM projection that best serves as an abstraction can be determined by understanding how low-level interventions relate to high-level interventions. In other words, given a high-level intervention $\*X_H \gets \*x_H$, it is important to define the corresponding low-level soft-intervention $\sigma_{\*X_L}$, which is a distribution over all possible interventions $\*x_L$ that map to $\*x_H$. The consequence of the underspecification of partial SCM projections is that there are many possible choices of defining $\sigma_{\*X_L}$. For a full discussion on how $\sigma_{\*X_L}$ should be decided, see App.~\ref{app:add-results}. A useful general form of $\sigma_{\*X_L}$ is defined as follows. Split $\sigma_{\*X_L}$ into individual soft interventions $\sigma_{\*C_i}$ for each intervariable cluster $\*C_i \subseteq \*X_L$. Then define each $\sigma_{\*C_i}$ as
\begin{equation}
    \label{eq:low-intervention-short}
    P(\sigma_{\*C_i} = \*c_i) = P(\*c_i \mid \tau(\*c_i) = v_{H, i}, \pai{V_{H, i}}, \ui{V_{H, i}}^c).
\end{equation}
In words, a high-level intervention should be equivalent to a distribution over the corresponding low-level interventions that assigns probability to each possible intervention based on their prior probabilities given their parents.\footnote{Here, $\*u_{V_{H, i}}^c$ can informally be thought of as the confounded exogenous parents of $V_{H, i}$. The full definition is somewhat involved, and the subtleties are discussed in App.~\ref{app:discussion-nonmarkov}. Due to space constraints, the main body provides intuition in Markovian settings, where unobserved confounding is not present.}

\begin{example}
    \label{ex:query-definition}
    Continuing Example \ref{ex:noaic-issue}, suppose the data scientist is interested in the causal effect of choosing a cheap insurance plan on claim approval. In other words, she would like to study the intervention $X_H \gets x_C$, which is ambiguous on the low-level as it could refer to either $X_L \gets x_1$ or $X_L \gets x_2$. More specifically, according to Eq.~\ref{eq:low-intervention-short}, $X_H \gets x_C$ corresponds to a soft intervention $\sigma_{X_C}$ on the low level, defined as
    \begin{equation}
        \sigma_{X_L} = \begin{cases}
            x_1 & \text{w.p. } P(x_1 \mid X_L \in \{x_1, x_2\}, z) \\
            x_2 & \text{w.p. } P(x_2 \mid X_L \in \{x_1, x_2\}, z)
        \end{cases}
    \end{equation}
    While there are many ways to disambiguate whether $x_C$ is referring to $x_1$ or $x_2$, this choice of $\sigma_{X_L}$ will assign probabilities based on the prior probabilities of $X_L$ being one of $x_1$ or $x_2$. Moreover, the probabilities change depending on the value of $z$. This makes intuitive sense, since under the intervention $X_H \gets x_C$, we expect that if $Z = z_1$, then $X_L$ is more likely to be $x_1$ than $x_2$, or vice-versa when $Z = z_2$. From a query perspective, this implies that
    \begin{align}
        & P(Y_{X_H = x_C} = 1 \mid Z = z_1) \label{eq:q-cond-z1} \\
        &= P(Y_{\sigma_{X_L}(x_C, Z)} = 1 \mid Z = z_1) \nonumber \\ 
        &= \hspace{-4mm} \sum_{x_i \in \{x_1, x_2\}} \hspace{-4mm}P(x_i \mid X_L \in \{x_1, x_2\}, z_1)P(Y_{x_i} = 1) = 0.74 \nonumber \\
        & \text{Likewise, } P(Y_{X_H = x_C} = 1 \mid Z = z_2) = 0.26 \label{eq:q-cond-z2}
    \end{align}
    \hfill $\blacksquare$
\end{example}

While projected abstractions are defined over the entire SCM, the mapping between low and high-level interventions are more clear at the query-level (i.e., individual interventional and counterfactual distributions of interest). Such quantities can be defined as follows.

\begin{definition}[Generalized Query]
    \label{def:generalized-query}
    Denote $\*Y_{L,*}$ as a set of counterfactual variables over $\*V_L$. That is,
    \begin{equation}
        \label{eq:valid-low-ctf}
        \*Y_{L, *} = \left( \*Y_{L, 1[\sigma_{\*X_{L, 1}}]}, \*Y_{L, 2[\sigma_{\*X_{L, 2}}]}, \dots\right),
    \end{equation}
    where each $\*Y_{L, i[\sigma_{\*X_{L, i}}]}$ corresponds to the potential outcomes of the variables $\*Y_{L, i}$ under the (possibly soft) intervention $\sigma_{\*X_{L, i}}$ over $\*X_{L, i}$. Each $\*Y_{L, i}$ and $\*X_{L, i}$ must be unions of clusters from $\bbC$ (i.e.~$\*Y_{L, i} = \bigcup_{\*C \in \bbC'} \*C$ for some $\bbC' \subseteq \bbC$) such that $\tau(\*Y_{L, i})$ and $\tau(\*X_{L, i})$ are well-defined (i.e.~$\tau(\*Y_{L, i}) = \left(\bigwedge_{\*C \in \bbC'} \tau_{\*C}(\*C) \right)$). For the high-level counterpart, denote
    \begin{equation}
        \label{eq:valid-high-ctf}
        \*Y_{H, *} = \tau(\*Y_{L, *}) = \left(\*Y_{H, 1[\*x_{H, 1}]}, \*Y_{H, 2[\*x_{H, 2}]}, \dots\right), 
    \end{equation}
    such that $\*Y_{H, i} = \tau(\*Y_{L, i})$, and $\*X_{H, i} = \tau(\*X_{L, i})$ for all $i$. For any value $\*y_{H, *} \in \cD_{\*Y_{H, *}}$, denote
    \begin{equation}
        \cD_{\*Y_{L, *}}(\*y_{H, *}) = \{\*y_{L, *} : \*y_{L, *} \in \cD_{\*Y_{L, *}}, \tau(\*y_{L, *}) = \*y_{H, *}\},
    \end{equation}
    that is, the set of all values $\*y_{L, *}$ such that $\tau(\*y_{L, *}) = \*y_{H, *}$.

    For any high-level query
    \begin{equation}
        \label{eq:high-query}
        \tau(Q) = P(\*Y_{H, *} = \*y_{H, *}),
    \end{equation}
    of the form of Eq.~\ref{eq:valid-high-ctf}, its low-level counterpart is 
    \begin{equation}
        \label{eq:low-query}
        Q = \sum_{\*y_{L, *} \in \cD_{\*Y_{L, *}}(\*y_{H, *})} P(\*Y_{L, *} = \*y_{L, *}),
    \end{equation}
    of the form of Eq.~\ref{eq:valid-low-ctf}.
    \hfill $\blacksquare$
\end{definition}

\begin{algorithm}[t]
    \footnotesize
    \caption{Constructing $\cM_H$ from $\cM_L$.}
    \label{alg:map-abstraction}
    \begin{algorithmic}[1]
        \INPUT $\cM_L = \langle \*U_L, \*V_L, \cF_L, P(\*U_L)\rangle$, constructive abstraction function $\tau$ from clusters $\bbC$ and $\bbD$
        \STATE $\*U_H \gets \*U_L, P(\*U_H) \gets P(\*U_L)$
        \STATE $\*V_H \gets \bbC, \cD_{\*V_H} \gets \bbD$
        \FOR{$W \in \*V_L$}
            \STATE $W^o, W^u \gets \texttt{project}(W)$ \COMMENT{construct $\delta$ from Prop.~\ref{prop:partial-scm-projection}}
            \STATE $\*U_H \gets \*U_H \cup \{W^u\}$
        \ENDFOR
        \FOR{$\*C_i \in \bbC$ (and corresponding $V_i \in \*V_H$)}
            \STATE $P(\delta(\*c_i^o, \*C_i^u) = \*c_i \mid \*U_L) \gets P(\*C_i = \*c_i \mid \tau(\*c_i) = v_i, \pai{V_i}, \ui{V_{H, i}}^c$) \COMMENT{from Eq.~\ref{eq:low-intervention-short}}
            \STATE $f_i^H \gets \tau(f_V^L(\delta(\pai{V}^o, \pai{V}^u), \ui{V}): V \in \*C_i)$
        \ENDFOR
        \STATE $\cF_H \gets \{f_i^H: \*C_i \in \bbC\}$
        \STATE \textbf{return} $\cM_H = \langle \*U_H, \*V_H, \cF_H, P(\*U_H)\rangle$
    \end{algorithmic}
\end{algorithm}

This query definition connects the distributions of $\cL_3(\cM_H)$ to corresponding distributions of $\cL_3(\cM_L)$. Compared to earlier definitions, 
Eq.~\ref{eq:valid-low-ctf} has been generalized to account for soft interventions in addition to hard interventions. Under constructive abstractions functions $\tau$, a notion of $Q$-$\tau$ consistency was established for certain queries $Q \in \cL_3(\cM_L)$ (App.~\ref{app:proofs} Def.~\ref{def:q-tau-consistency}), which still apply under this generalized definition. In short, given a low level query $Q$ (Eq.~\ref{eq:low-query}) and its high-level counterpart $\tau(Q)$ (Eq.~\ref{eq:high-query}), $\cM_H$ is said to be $Q$-$\tau$ consistent with $\cM_L$ if $Q^{\cM_L} = \tau(Q)^{\cM_H}$. One can then say that $\cM_H$ is an abstraction of $\cM_L$ specifically for the query $Q$, even if $\cM_H$ may not be $Q'$-$\tau$ consistent with $\cM_L$ for other query choices $Q'$. If $\cM_H$ is $Q$-$\tau$ consistent with $\cM_L$ for all $\tau(Q) \in \cL_i(\cM_H)$, then $\cM_H$ is said to be $\cL_i$-$\tau$ consistent with $\cM_L$.

With $\sigma_{\*X_{L, i}}$ defined in Eq.~\ref{eq:low-intervention-short}, one can then algorithmically construct a projected abstraction consistent in all queries. Given $\cM_L$ and a constructive abstraction function $\tau$ (which may not satisfy the AIC), Alg.~\ref{alg:map-abstraction} can be used to construct the high-level abstraction $\cM_H$. In line 4, each $W \in \*V_L$ is split into its observed and unobserved counterparts $W^o$ and $W^u$. Line 8 assigns each $W^u$ a distribution based on Eq.~\ref{eq:low-intervention-short}. Line 9 builds the high-level function using the low-level function with inputs reconstructed using $\delta$. Finally, the full high-level model $\cM_H$ is assembled and returned in line 10. Under these inputs, Alg.~\ref{alg:map-abstraction} constructs a projected abstraction $\cM_H$ that is $Q$-$\tau$ consistent with $\cM_L$ for all possible high-level $\cL_3$ queries, as shown by the following result.

\begin{restatable}{theorem}{algconstructmh}
    \label{thm:alg-construct-mh}
    The SCM $\cM_H$ constructed by Alg.~\ref{alg:map-abstraction} is a projected abstraction of $\cM_L$ that is $Q$-$\tau$ consistent with $\cM_L$ for all $\tau(Q) \in \cL_3(\cM_H)$.
    \hfill $\blacksquare$
\end{restatable}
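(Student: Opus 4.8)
The plan is to prove the two claims of Theorem~\ref{thm:alg-construct-mh} in turn --- that $\cM_H$ is a projected abstraction of $\cM_L$ (Def.~\ref{def:projected-abs}), and that it is $Q$-$\tau$ consistent with $\cM_L$ for every $\tau(Q) \in \cL_3(\cM_H)$ --- both organized around an intermediate model $\cM_\tau$. I would obtain $\cM_\tau$ from $\cM_L$ by collapsing each cluster $\*C_i \in \bbC$ into a single variable with the \emph{full} product domain $\cD_{\*C_i}$, leaving the mechanisms and $P(\*U_L)$ untouched. Admissibility of $\bbC$ w.r.t.~$\cM_L$ (Def.~\ref{def:var-clusterings}) guarantees $\cM_\tau$ is a well-defined recursive SCM, and because the re-bracketing is a bijection on value tuples, $\cM_\tau$ is a $\tau$-abstraction of $\cM_L$ (in the sense used by Def.~\ref{def:projected-abs}) for which the AIC holds vacuously and which induces, up to renaming, exactly the distributions of $\cL_3(\cM_L)$; any clusterless variable of $\cM_L$ is simply dropped, recovering an ordinary SCM projection \citep{lee:bar19a} as a degenerate case.

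\textbf{$\cM_H$ is a projected abstraction.} Reading off the splitting maps $\delta_i \colon \cD_{V_{H,i}} \times \cD_{V_{H,i}^u} \to \cD_{\*C_i}$ that are implicit in lines~4 and~8 of Alg.~\ref{alg:map-abstraction} --- the observed coordinate of $\delta_i^{-1}$ is exactly $\tau_{\*C_i}$ and the unobserved coordinate carries whatever residual information is needed to recover $\*c_i$ --- I would argue that $\cM_H$ is precisely the partial SCM projection of $\cM_\tau$ over $\*V_H = \{V_{H,i}^o\}$ promised by Prop.~\ref{prop:partial-scm-projection}. This amounts to checking Eq.~\ref{eq:partial-scm-projection} with the algorithm's constructions substituted in: line~9 sets $f_i^H$ to ``reconstruct the cluster's low-level parents through $\delta$, evaluate the $\cM_L$ mechanisms, then apply $\tau_{\*C_i}$,'' which is the projection of $f_{\*C_i}$ onto $\*V_H$; line~8 gives $V_{H,i}^u$ the conditional law that makes $\delta_i(V_{H,i},V_{H,i}^u)$ reproduce $\*C_i$. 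The point that needs genuine care is the self-consistency flagged right after Prop.~\ref{prop:partial-scm-projection}: the residual argument $\*z^u$ supplied to $\cM_H$ must equal the unobserved portion $\*w^u_{\*x}$ that $\cM_H$ actually outputs. I would discharge this by induction along the topological order of $\bbC$, using that each $V_{H,i}^u$ is defined conditionally on $\*U_L$ and on the already-determined ancestor clusters, so the tuple reconstructed by $\delta$ coincides with the genuine $\cM_L$ value at every cluster.

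\textbf{$Q$-$\tau$ consistency.} Fix $\tau(Q) = P(\*Y_{H,*} = \*y_{H,*})$ of the form of Eq.~\ref{eq:high-query} with low-level counterpart $Q$ of the form of Eq.~\ref{eq:low-query} (Def.~\ref{def:generalized-query}), and expand $\tau(Q)^{\cM_H}$ through the Layer-3 valuation (Def.~\ref{def:l3-semantics}) as an integral over $\cD_{\*U_H} = \cD_{\*U_L} \times \prod_i \cD_{V_{H,i}^u}$. The crux is a per-$\*u_L$ identity: integrating out the residual coordinates $(V_{H,i}^u)_i$ turns the $\cM_H$ indicator (a joint over all the counterfactual worlds, each under a \emph{hard} high-level intervention $\*X_{H,i}\gets\*x_{H,i}$) into a sum, over the fiber $\cD_{\*Y_{L,*}}(\*y_{H,*})$ of low-level outcomes mapping to $\*y_{H,*}$, of the $\cM_L$ indicator for that outcome under the \emph{soft} interventions $\sigma_{\*X_{L,i}}$ of Eq.~\ref{eq:low-intervention-short}. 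This works because (i) line~8 was designed so that the marginal of each $V_{H,i}^u$ given the realized high-level parents and confounded exogenous parents is precisely the law of $\sigma_{\*C_i}$, and (ii) under a hard high-level intervention $f_j^H$ still consults $\delta(\*x_{H,i},\*X_{H,i}^u)$, so the realization of the shared exogenous $\*X_{H,i}^u$ is exactly the low-level value selected by $\sigma_{\*X_{L,i}}$ in every world. Integrating back over $\*u_L$ and using that $\cM_\tau$ and $\cM_L$ agree on $\cL_3$ then yields the right-hand side of Eq.~\ref{eq:low-query}; the multi-world bookkeeping follows the nested-counterfactual induction of \citet{xia:bareinboim24}, now carried out with (possibly distinct) soft interventions in each world.

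\textbf{Main obstacle.} The hard part is the non-Markovian version of the per-world identity above. Because Eq.~\ref{eq:low-intervention-short} conditions on the confounded exogenous parents $\ui{V_{H,i}}^c$, one must verify that the family of conditional laws assigned across all clusters in line~8 is jointly realizable as a single $P(\*U_H)$ --- with the $V_{H,i}^u$ correlated with $\*U_L$ in the right way --- and that this single choice simultaneously implements the hard-to-soft translation in \emph{every} counterfactual world of $\*Y_{H,*}$, where the same $\*u$ is reused but the interventions differ and a cluster may be intervened on in one world and free in another. In the Markovian case this collapses to an elementary conditioning identity; in general it requires a c-component/confounded-parent analysis, which I would isolate as a lemma proved by induction on the topological order of $\bbC$ and then invoke in both parts of the argument.
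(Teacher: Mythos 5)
Your proposal takes essentially the same route as the paper's proof: the same intermediate model (the cluster-level SCM with full product domains, i.e.\ trivial intravariable clusters), the same verification that the algorithm's $\delta$-splitting realizes Eq.~\ref{eq:partial-scm-projection}, and the same marginalization over the residual exogenous variables $V_{H,i}^u$ to convert hard high-level interventions into the soft interventions of Eq.~\ref{eq:low-intervention-short} --- you merely run the consistency chain from $\tau(Q)^{\cM_H}$ to $Q^{\cM_L}$ where the paper runs it in the opposite direction. The non-Markovian realizability issue you flag as the main obstacle is indeed the delicate point, which the paper addresses by conditioning each $\sigma_{\*C_i}$ on the confounded-parent response variables $\*R_{\*V_H^c(V_{H,i})}(\ui{\*C_i})$ (Eq.~\ref{eq:low-intervention-general}) and asserting the per-cluster product factorization directly.
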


As an example, it can be verified that running Alg.~\ref{alg:map-abstraction} on $\cM_L$ in Ex.~\ref{ex:noaic-issue} results in the SCM $\cM_H$ from Ex.~\ref{ex:proj-abs}.

\section{Projected Abstraction Inference}
\label{sec:inference}

Alg.~\ref{alg:map-abstraction} finds an abstraction model $\cM_H$ that is consistent with its low-level counterpart $\cM_L$ for all queries, but it requires the full specification of $\cM_L$. In practice, $\cM_L$ typically represents the true model of reality and will not be observed. Inferences of $\cL_2$ and $\cL_3$ queries must be made through limited available data, usually observational ($\cL_1$).

The Causal Hierarchy Theorem \citep[Thm.~1]{bareinboim:etal20} states that cross-layer inference, or inferring higher layer quantities (e.g., $\cL_2$, $\cL_3$) from lower layer data (e.g., $\cL_1$), is generally impossible without additional assumptions. Many such assumptions take the form of a graphical model, such as a causal diagram \citep{pearl:95a}, which imply constraints between causal distributions from causal \citep{bareinboim:etal20} and counterfactual Bayesian networks \citep{correa2024ctfcalc}. In the context of abstractions, when $\tau$ is a constructive abstraction function that satisfies the AIC, it has been shown that one can avoid assuming the entire causal diagram of the low-level model in favor of a cluster causal diagram (C-DAG) \citep{anand:etal23} w.r.t.\ the intervariable clusters $\bbC$. 
Unfortunately, this graphical model is insufficient for the case when the AIC is violated.

\begin{proposition}[C-DAG Insufficiency (Informal)]
    \label{prop:cdag-insufficiency}
    For a constructive abstraction function $\tau$ over intervariable clusters $\bbC$ in which the AIC does not hold, the C-DAG $\cG_{\bbC}$ implies constraints that may be unsound.
    \hfill $\blacksquare$
\end{proposition}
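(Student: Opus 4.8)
The plan is to prove this by a concrete counterexample, and the running insurance model of Ex.~\ref{ex:noaic-issue} together with its projected abstraction in Ex.~\ref{ex:proj-abs} already does essentially all the work. First I would fix what ``$\cG_{\bbC}$ implies constraints'' means: under the cluster-level causal Bayesian network semantics of \citep{anand:etal23}, $\cG_{\bbC}$ is read as a causal diagram over the clusters, so every distribution in $\cL_2(\cM_H)$ (and more generally $\cL_3(\cM_H)$) is required to obey the truncated-factorization / do-calculus constraints entailed by that graph. In the insurance example the intervariable clustering consists of the singletons $\{Z\}$, $\{X\}$, $\{Y\}$, so $\cG_{\bbC}$ is simply the chain $Z \to X \to Y$ inherited from the (Markovian) low-level diagram. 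This chain entails, via the mutilated graph in which the arrow into $X$ is deleted (equivalently Rule~3 of do-calculus), the independence constraint $Y_{X_H} \indep Z$, i.e.\ $P(Y_{X_H = x_H} \mid Z = z) = P(Y_{X_H = x_H})$ for all $x_H, z$.

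Next I would show that this constraint is violated by a bona fide projected abstraction. By Thm.~\ref{thm:alg-construct-mh}, the SCM $\cM_H$ of Ex.~\ref{ex:proj-abs} is a projected abstraction of $\cM_L$ that is $Q$-$\tau$ consistent with $\cM_L$ for every high-level $\cL_3$ query, so it is exactly the kind of high-level model $\cG_{\bbC}$ is meant to describe. Yet the computation in Ex.~\ref{ex:query-definition} gives $P^{\cM_H}(Y_{X_H = x_C} = 1 \mid Z = z_1) = 0.74 \neq 0.26 = P^{\cM_H}(Y_{X_H = x_C} = 1 \mid Z = z_2)$, hence $Y_{X_H} \nindep Z$ in $\cM_H$. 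Thus $\cG_{\bbC}$ entails a constraint that this projected abstraction fails, which is precisely the claimed unsoundness. To sharpen the contrast I would also observe that the same constraint \emph{is} sound at the low level: in $\cM_L$, $P^{\cM_L}(Y_{X_L = x_i} = 1 \mid Z = z) = P(U_Y^{x_i} = 1)$ does not depend on $z$, so $\cM_L$ obeys $\cG_{\bbC}$; the unsoundness is created entirely by the abstraction step, which through the $Z$-dependent soft intervention $\sigma_{X_L}$ of Eq.~\ref{eq:low-intervention-short} re-introduces an effective dependence path from $Z$ to $Y$ absent from $\cG_{\bbC}$ (compare the two graphs in Fig.~\ref{fig:full-vs-partial-proj}(b)).

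Finally, to lift this from a single example to the general statement I would argue that for any constructive abstraction $\tau$ whose AIC violation is ``active'' --- i.e.\ there exist $\*v_1, \*v_2$ with $\tau(\*v_1) = \tau(\*v_2)$ whose clusters have genuinely different downstream behavior and whose relative prior weights under Eq.~\ref{eq:low-intervention-short} depend on some upstream cluster $\*C_j$ non-adjacent in $\cG_{\bbC}$ to the affected downstream cluster --- the constructed $\cM_H$ of Alg.~\ref{alg:map-abstraction} carries a dependence between $\*C_j$ and that downstream cluster which $\cG_{\bbC}$ forbids, by the same argument as above. The bulk of a full write-up is just the arithmetic already displayed in Ex.~\ref{ex:query-definition}, which is routine. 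The only real subtlety --- and the reason the statement is labeled ``informal'' --- is pinning down which formal notion of ``constraint implied by $\cG_{\bbC}$'' is intended (conditional-independence constraints over $\cL_1$, do-calculus-derivable equalities over $\cL_2$, or counterfactual constraints over $\cL_3$) and verifying that the chosen one is genuinely entailed by the chain $Z \to X \to Y$; I would use the $\cL_2$ independence $Y_{X_H} \indep Z$, since it is unambiguous, clearly entailed, and the easiest to falsify.
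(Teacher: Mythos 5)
Your proof is correct, and it is a legitimately different route from the one the paper takes in the appendix. The paper's formal version of this proposition is existential (``there exists $\cM_L$ and $\tau$ such that $\cG_{\bbC}$ is not a CTF-BN for $\cL_3(\cM_H)$''), and its proof is a one-line corollary: since the projected C-DAG $\cG_{\bbC}^{\dagger}$ can strictly contain $\cG_{\bbC}$ and, by the necessity half of Thm.~\ref{thm:proj-cdag-completeness}, every CTF-BN for $\cL_3(\cM_H)$ must contain all edges of $\cG_{\bbC}^{\dagger}$, the smaller graph $\cG_{\bbC}$ cannot be a CTF-BN. You instead give a direct, self-contained counterexample --- the insurance model, the chain $Z \to X \to Y$, the entailed constraint $P(Y_{X_H=x_C}\mid z)=P(Y_{X_H=x_C})$, and the numbers $0.74 \neq 0.26$ from Ex.~\ref{ex:query-definition} --- which is exactly the ``to witness'' discussion the paper gives informally in the main text but never promotes to the appendix proof. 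Your approach buys concreteness and independence from the heavy case analysis in Thm.~\ref{thm:proj-cdag-completeness}; the paper's buys generality (unsoundness whenever $\cG_{\bbC}^{\dagger}\supsetneq\cG_{\bbC}$, for any of the three edge-addition rules, not just the chain pattern). Two small remarks: since the formal statement is existential, your final paragraph attempting to lift the argument to arbitrary ``active'' AIC violations is unnecessary, and as sketched it is essentially a hand-waved version of the necessity proof of Thm.~\ref{thm:proj-cdag-completeness}; and your care in pinning down which constraint is entailed is well placed --- the paper resolves that ambiguity by fixing the CTF-BN of \citet{correa2024ctfcalc} as the formal notion of ``constraints implied by the graph,'' under which your chosen independence $Y_{X_H}\indep Z$ is indeed entailed by the chain.
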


To witness why this is the case, Fig.~\ref{fig:full-vs-partial-proj}(b) shows the issue clearly. Attempting an abstraction in violation of the AIC is akin to performing a partial SCM projection, which may introduce new dependencies between SCM functions, therefore implying new edges in the graph. Ex.~\ref{ex:query-definition} explains this dependence numerically. Since no variables are clustered together in the example, both the original causal diagram $\cG$ and the C-DAG $\cG_{\bbC}$ are represented by the top graph in Fig.~\ref{fig:proj-cdag-examples}. However, this graph implies that $P(Y_{x_H} \mid z) = P(Y_{x_H})$. Evidently, this is not true since Eq.~\ref{eq:q-cond-z1} is not equal to Eq.~\ref{eq:q-cond-z2}. As hinted by the construction in Alg.~\ref{alg:map-abstraction}, the high-level function $f^H_Y$ requires some additional information from $Z$ to decide between interpreting $x_C$ as $x_1$ or $x_2$. This information adds a dependence from $Z$ to the function of $f_Y^H$, which requires adding a directed edge from $Z$ to $Y$.

While the original C-DAG construction is not valid for projected abstraction inferences, one can use a modified version that adds the new required dependencies into the C-DAG.

\begin{definition}[Partially Projected C-DAG]
    \label{def:proj-cdag}
    Let $\tau: \cD_{V_L} \rightarrow \cD_{V_H}$ be a constructive abstraction function w.r.t.\ intervariable clusters $\bbC$ and intravariable clusters $\bbD$. Let $\cG_{\bbC} = \langle \*V_H, \*E_{\bbC} \rangle$ be a C-DAG (with nodes $\*V_H$ and edges $\*E_{\bbC}$), of graph $\cG$ w.r.t.\ $\bbC$. Let $\*V_H^\dagger \subseteq \*V_H$ be the set of AIC violation variables (App.~\ref{app:proofs} Def.~\ref{def:aic-violators}). Then, construct $\cG_{\bbC}^\dagger = \langle \*V_H, \*E_{\bbC}^{\dagger} \rangle$ as follows. Start by setting $\*E_{\bbC}^\dagger \gets \*E_{\bbC}$. Then apply the following rules for all $X \in \*V_H^{\dagger}$.
    \\
    (1) If $Z \rightarrow X \rightarrow Y$ in $\*E_{\bbC}$, then add $Z \rightarrow Y$ into $\*E_{\bbC}^\dagger$.
    \\
    (2) If $Z \xdasharrow[<->]{}  X \rightarrow Y$ in $\*E_{\bbC}$, then add $Z \xdasharrow[<->]{}  Y$ and $X \xdasharrow[<->]{}  Y$ into $\*E_{\bbC}^\dagger$.
    \\
    (3) If $Z \leftarrow X \rightarrow Y$ in $\*E_{\bbC}$, then add $Z \xdasharrow[<->]{} Y$ into $\*E_{\bbC}^\dagger$.
    \\
    Repeat iteratively to accommodate new edges.\footnote{Procedure can be applied algorithmically in one pass by applying all rules for each node in $\*V_H^\dagger$ in topological order.} $\cG^{\dagger}_{\bbC}$ is called the partially projected C-DAG of $\cG$ w.r.t.\ $\bbC$ and $\*V_H^\dagger$.
    \hfill $\blacksquare$
\end{definition}

\begin{figure}
\centering
\includegraphics[width=1\linewidth]{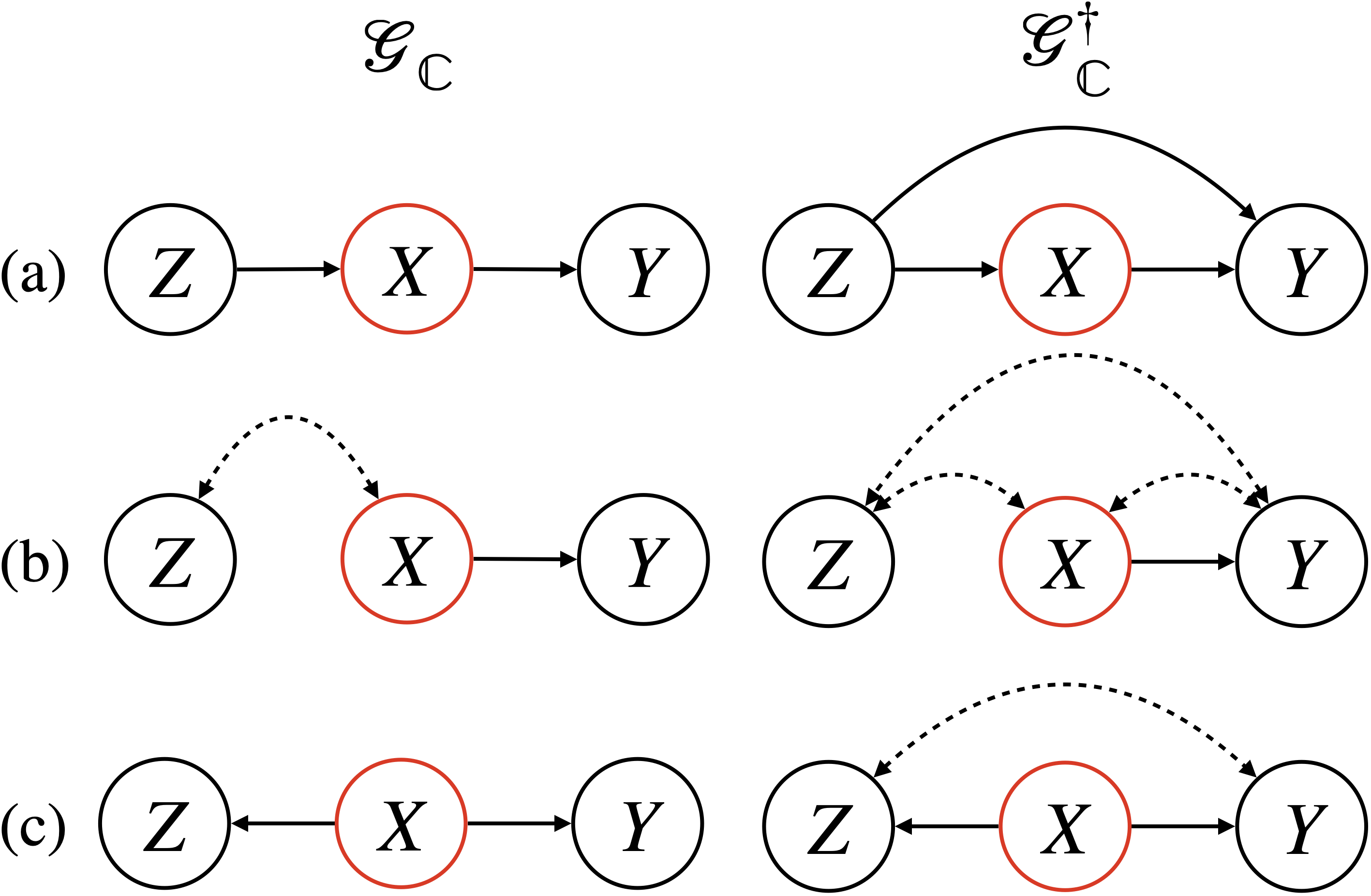}
\caption{Examples of C-DAGs (left) and their corresponding projected C-DAGs (right), with AIC violation variables $\*V_H^{\dagger}$ outlined in red.}
\label{fig:proj-cdag-examples}
\vspace{-0.1in}
\end{figure}

The steps correspond to the intuition discussed earlier--when performing a partial projection, parts of the variables in $\*V_H^{\dagger}$ are projected into the exogenous space, resulting in additional dependences that require additional edge connections. Examples of C-DAGs and their corresponding projected C-DAGs are shown in Fig.~\ref{fig:proj-cdag-examples}. In the figure, rows (a), (b), and (c) correspond to examples of steps 1, 2, 3 respectively. It turns out that this new definition is precisely what is needed for abstraction inference in the absence of the AIC.

\begin{theorem}[Projected C-DAG Sufficiency and Necessity (Informal)]
    \label{thm:proj-cdag-completeness}
    Let $\cM_L$ be an SCM over variables $\*V_L$, $\tau: \cD_{\*V_L} \rightarrow \cD_{\*V_H}$ be a constructive abstraction function w.r.t.\ clusters $\bbC$ and $\bbD$, and $\*V_H^{\dagger}$ be the AIC violation set. The partially projected C-DAG $\cG_{\bbC}^{\dagger}$ w.r.t.\ $\bbC$ and $\*V_H^\dagger$ completely describes all constraints over $\*V_H$.
    \hfill $\blacksquare$
\end{theorem}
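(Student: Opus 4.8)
The informal statement hides a two-sided claim that I would make precise and prove in halves. Fix the low-level diagram $\cG$, the clusters $\bbC,\bbD$, the constructive $\tau$, and the AIC-violation set $\*V_H^{\dagger}$, and consider the family of high-level models $\cM_H$ obtained by Alg.~\ref{alg:map-abstraction} from SCMs $\cM_L$ compatible with $\cG$ (with the canonical soft-intervention choice of Eq.~\ref{eq:low-intervention-short}). \emph{Soundness:} every probabilistic, interventional and counterfactual constraint over $\*V_H$ that is entailed by $\cG_{\bbC}^{\dagger}$ — via $d$-separation, do-calculus, and ctf-calculus on the associated counterfactual Bayesian network \citep{correa2024ctfcalc} — holds in every such $\cM_H$. \emph{Necessity (completeness):} no strictly sparser graph over $\*V_H$ shares this property; equivalently, for any PCH-expressible constraint $\phi$ over $\*V_H$ not entailed by $\cG_{\bbC}^{\dagger}$, there is a witnessing $\cM_L$ (with exactly the prescribed cluster/AIC structure) whose projected abstraction violates $\phi$.

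\textbf{Soundness.} The plan is to show that the causal diagram induced by $\cM_H$ is a subgraph of $\cG_{\bbC}^{\dagger}$, after which soundness of $d$-separation, do-calculus \citep{bareinboim:etal20}, and ctf-calculus \citep{correa2024ctfcalc} immediately transfers all graph-entailed constraints to $\cM_H$. I would read the edges of $\cM_H$ off Alg.~\ref{alg:map-abstraction}: the directed edges are the arguments of $f_i^H = \tau(f^L_V(\delta(\Pai{V}^o,\Pai{V}^u),\ui{V}) : V \in \*C_i)$ in line~9, and the bidirected edges come from dependent exogenous sets, now enlarged by the $W^u$ variables whose law is Eq.~\ref{eq:low-intervention-short}. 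The crux is a case analysis on a node $X \in \*V_H^{\dagger}$: since $X^u$ is sampled conditionally on $\Pai{X}$ and on the confounded exogenous parents $\ui{X}^c$, and is then absorbed into $f^H_Y$ of every child $Y$ of $X$, a new active connection is created from each ``parent-type'' of $X$ to each such $Y$. I would verify these are exactly rule~(1) (observed parent $Z \to X$ gives $Z \to Y$, as in Ex.~\ref{ex:proj-abs}), rule~(2) (confounded parent $Z \Confounded X$, resolved via a shared exogenous source, gives $Z \Confounded Y$ and $X \Confounded Y$), and rule~(3) (a child $Z$ of $X$ sharing $X^u$ with $Y$ gives $Z \Confounded Y$), and that iterating in topological order over $\*V_H^{\dagger}$ reaches closure (finitely many node pairs) and captures chains of AIC violators. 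A separate small lemma would argue the remaining underspecified slack in a partial SCM projection (Prop.~\ref{prop:partial-scm-projection}) never forces edges beyond these.

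\textbf{Necessity.} Here the plan is a ``lifting'' argument mirroring the sufficiency-and-necessity proof for ordinary C-DAGs \citep{anand:etal23}, with the partial projection replacing plain cluster contraction. Given $\phi$ not entailed by $\cG_{\bbC}^{\dagger}$, I would first invoke the completeness side of the graphical calculi — the CHT-style realizability constructions of \citet{bareinboim:etal20} and completeness of do-/ctf-calculus — to obtain an SCM over the ADMG $\cG_{\bbC}^{\dagger}$ that is tight with respect to $\phi$. The work is then to realize this high-level model as the projected abstraction of an honest $\cM_L$ with diagram $\cG$ and AIC-violation set exactly $\*V_H^{\dagger}$: for each $X \in \*V_H^{\dagger}$ I would ``un-project'' by reintroducing an unobserved coordinate $X^u$ (reversing Prop.~\ref{prop:partial-scm-projection}), wiring it and $f_X^L$ so that Eq.~\ref{eq:low-intervention-short} reproduces the chosen high-level mechanism and every edge added by rules~(1)--(3) is \emph{genuinely} active (no cancellation) — possible because those rules only add edges the partial-projection process can physically create, and the low-level mechanisms carry enough free parameters to avoid degeneracy. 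Running Alg.~\ref{alg:map-abstraction} on this $\cM_L$ returns an SCM PCH-equivalent to the tight model, which violates $\phi$; that $\cM_L$ is the required witness.

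\textbf{Main obstacle.} I expect the bottleneck to be the non-Markovian bookkeeping that matches the dependence actually introduced by Alg.~\ref{alg:map-abstraction} to rules~(1)--(3): Eq.~\ref{eq:low-intervention-short} conditions on the confounded exogenous parents $\ui{V_{H,i}}^c$, whose precise definition is delicate, so showing that $X^u$ carries \emph{exactly} the confounding required — enough for soundness, no more for completeness — and that the iterative edge-closure of Def.~\ref{def:proj-cdag} neither over- nor under-shoots is the technical heart. On the necessity side, the analogous difficulty is certifying that the lifted $\cM_L$ activates every added (bidirected) edge while keeping the abstraction $Q$-$\tau$ consistent and not shrinking $\*V_H^{\dagger}$.
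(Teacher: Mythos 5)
Your sufficiency half matches the paper's proof in essentially all respects: the paper likewise establishes (as a standalone lemma) that the causal diagram induced by the model output by Alg.~\ref{alg:map-abstraction} is exactly $\cG_{\bbC}^{\dagger}$ --- via the same case analysis on how $W^u$, distributed per Eq.~\ref{eq:low-intervention-short}, injects dependence on $\Pai{W}$ (rule~1), on the confounded exogenous parents $\ui{\tau(W)}^c$ (rule~2), and on siblings sharing $W^u$ (rule~3) --- and then invokes the fact that a causal diagram is always a CTF-BN for the $\cL_3$ distributions of the SCM that induces it.

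Your necessity half, however, takes a genuinely different and substantially heavier route than the paper, and the divergence matters. The paper formalizes ``necessity'' only as edge-minimality: for the given $\cL_3$-$\tau$ consistent $\cM_H$, no proper subgraph of $\cG_{\bbC}^{\dagger}$ can be a CTF-BN for $\cL_3(\cM_H)$. It proves this by a direct rule-by-rule computation: expanding the high-level query through the soft intervention $\sigma_{\*C_X}$ of Eq.~\ref{eq:low-intervention-short}, it exhibits that $P(y_{\pai{y},z})$ genuinely depends on $z$ when $Z$ is a parent of an AIC violator $X$ (breaking the local exclusion restriction if $Z\rightarrow Y$ is dropped), and that the terms in the c-component factorization cannot be independent when they share $\ui{\*C_X}^c$ or $\sigma_{\*C_X}$ (breaking the independence restriction if the added bidirected edges are dropped). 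Your plan instead aims at full completeness --- every PCH constraint $\phi$ not entailed by $\cG_{\bbC}^{\dagger}$ fails in some model --- via completeness of do-/ctf-calculus plus a ``lifting'' of an arbitrary tight high-level SCM back to a low-level $\cM_L$ with diagram $\cG$ and AIC set exactly $\*V_H^{\dagger}$. That would prove more than the theorem asserts, but it hinges on an unproven surjectivity claim (that every SCM over the ADMG $\cG_{\bbC}^{\dagger}$ arises as a projected abstraction of some admissible $\cM_L$), which is not obviously true and which the paper never needs. If you retreat to the paper's formalization, your witness constructions per added edge reduce to exactly the paper's computations and the lifting lemma disappears; as written, the lifting step is the gap you would still have to close.
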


In other words, the projected C-DAG provides exactly the constraints necessary to solve the task of performing causal inferences across abstractions, even when the AIC is violated. In particular, certain interventional and counterfactual distributions may be inferrable from a combination of the projected C-DAG $\cG_{\bbC}^{\dagger}$ and the available datasets from $\cM_L$. 
Determining precisely which queries can be inferred is known as the identification problem, which is defined below in the context of abstract identification. 

\begin{definition}[Abstract Identification (General)]
    \label{def:abs-id}
    Let $\tau: \cD_{\*V_H} \rightarrow \cD_{\*V_L}$ be a constructive abstraction function. Consider projected C-DAG $\cG_{\bbC}^{\dagger}$, and let $\bbZ = \{P(\*V_{L[\*z_k]})\}_{k=1}^{\ell}$ be a collection of available interventional (or observational if $\*Z_k = \emptyset$) distributions over $\*V_L$. Let $\Omega_L$ and $\Omega_H$ be the space of SCMs defined over $\*V_L$ and $\*V_H$, respectively, and let $\Omega_L(\cG_{\bbC}^\dagger)$ and $\Omega_H(\cG_{\bbC}^\dagger)$ be their corresponding subsets that induce $\cG_{\bbC}^\dagger$. A query $Q$ is said to be $\tau$-ID from $\cG_{\bbC}^\dagger$ and $\bbZ$ iff for every $\cM_L \in \Omega_L(\cG_{\bbC}^\dagger), \cM_H \in \Omega_H(\cG_{\bbC}^\dagger)$ such that $\cM_H$ is $\bbZ$-$\tau$ consistent with $\cM_L$, $\cM_H$ is also $Q$-$\tau$ consistent with $\cM_L$.
    \hfill $\blacksquare$
\end{definition}

In words, a query $Q$ is considered $\tau$-ID if, for any pair of models $\cM_L$ and $\cM_H$ such that both are compatible with $\cG_{\bbC}^\dagger$ and $\bbZ$, they also match in $Q$. In contrast, $Q$ is not $\tau$-ID if there exist $\cM_L$ and $\cM_H$ that are compatible with both $\cG_{\bbC}^\dagger$ and $\bbZ$ but disagree on $Q$ (i.e., $Q^{\cM_L} \neq \tau(Q)^{\cM_H}$). Abstract identification may seem like a difficult property to check, but it turns out that there is a natural connection with the classical identification problem, as shown below.

\begin{restatable}[Dual Abstract ID (General)]{theorem}{dualabsid}
    \label{thm:dual-abs-id}
    Consider a counterfactual query $Q$ over $\*V_L$, a constructive abstraction function $\tau$ w.r.t.~clusters $\bbC$ and $\bbD$, a projected C-DAG $\cG_{\bbC}^\dagger$, and data $\bbZ$ from $\*V_L$. $Q$ is $\tau$-ID from $\cG_{\bbC}^\dagger$ and $\bbZ$ if and only if $\tau(Q)$ is ID from $\cG_{\bbC}^\dagger$ and $\tau(\bbZ)$.
    \hfill $\blacksquare$
\end{restatable}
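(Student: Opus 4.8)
The plan is to establish the biconditional by translating the abstract-identification condition of Definition~\ref{def:abs-id} into the ordinary identification condition via two reductions: first, relating the space of SCM pairs $(\cM_L, \cM_H)$ that are $\bbZ$-$\tau$ consistent and compatible with $\cG_{\bbC}^{\dagger}$ to the space of high-level SCMs $\cM_H$ compatible with $\cG_{\bbC}^{\dagger}$; and second, pushing the low-level query $Q$ and data $\bbZ$ through $\tau$ so that the relevant equalities become statements purely about $\tau(Q)$ and $\tau(\bbZ)$ in $\cM_H$. The key structural fact enabling this is Theorem~\ref{thm:alg-construct-mh}: for any low-level $\cM_L$ (inducing $\cG$ hence $\cG_{\bbC}^{\dagger}$ via the partial-projection construction), Algorithm~\ref{alg:map-abstraction} produces a projected abstraction $\cM_H$ that is $Q'$-$\tau$ consistent with $\cM_L$ for \emph{all} high-level $\cL_3$ queries $Q'$ simultaneously, and (by Theorem~\ref{thm:proj-cdag-completeness}) this $\cM_H$ is compatible with $\cG_{\bbC}^{\dagger}$. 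This gives a canonical ``lift'' $\cM_L \mapsto \cM_H$ and a canonical ``witness companion'' for any $\cM_H$, which is exactly what is needed to move between the two quantifier structures.

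For the ($\Leftarrow$) direction, suppose $\tau(Q)$ is ID from $\cG_{\bbC}^{\dagger}$ and $\tau(\bbZ)$. Take any $\cM_L \in \Omega_L(\cG_{\bbC}^{\dagger})$ and any $\cM_H \in \Omega_H(\cG_{\bbC}^{\dagger})$ that is $\bbZ$-$\tau$ consistent with $\cM_L$; I must show $\cM_H$ is $Q$-$\tau$ consistent with $\cM_L$. Build the canonical projected abstraction $\cM_H^{\ast}$ from $\cM_L$ via Algorithm~\ref{alg:map-abstraction}; by Theorem~\ref{thm:alg-construct-mh} it is $\bbZ$-$\tau$ consistent and $Q$-$\tau$ consistent with $\cM_L$, so $\tau(\bbZ)^{\cM_H^{\ast}} = \bbZ^{\cM_L}$ and $\tau(Q)^{\cM_H^{\ast}} = Q^{\cM_L}$. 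Now both $\cM_H$ and $\cM_H^{\ast}$ lie in $\Omega_H(\cG_{\bbC}^{\dagger})$ and, by the two consistency relations chained through $\cM_L$, they agree on $\tau(\bbZ)$: $\tau(\bbZ)^{\cM_H} = \bbZ^{\cM_L} = \tau(\bbZ)^{\cM_H^{\ast}}$. Classical identifiability of $\tau(Q)$ then forces $\tau(Q)^{\cM_H} = \tau(Q)^{\cM_H^{\ast}} = Q^{\cM_L}$, which is precisely $Q$-$\tau$ consistency of $\cM_H$ with $\cM_L$. Hence $Q$ is $\tau$-ID.

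For the ($\Rightarrow$) direction, argue the contrapositive: if $\tau(Q)$ is not ID from $\cG_{\bbC}^{\dagger}$ and $\tau(\bbZ)$, there exist $\cM_H^{1}, \cM_H^{2} \in \Omega_H(\cG_{\bbC}^{\dagger})$ agreeing on $\tau(\bbZ)$ but disagreeing on $\tau(Q)$. The task is to realize one of these high-level models together with a compatible low-level $\cM_L$ witnessing a failure of $\tau$-ID. Here I would invoke the constructive direction implicit in Theorem~\ref{thm:proj-cdag-completeness} and the definition of projected abstraction: every $\cM_H \in \Omega_H(\cG_{\bbC}^{\dagger})$ arises as a partial SCM projection of a $\tau$-abstraction of some $\cM_L \in \Omega_L(\cG_{\bbC}^{\dagger})$ (this is the ``realizability'' content of completeness — the projected C-DAG imposes exactly the constraints that such $\cM_L$'s can satisfy, no more). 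Pick such a low-level realizer $\cM_L$ for $\cM_H^{1}$; by construction $\cM_H^{1}$ is $\bbZ$-$\tau$ consistent with $\cM_L$ and $\tau(Q)^{\cM_H^{1}} = Q^{\cM_L}$. Since $\cM_H^{2}$ agrees with $\cM_H^{1}$ on $\tau(\bbZ)$, it is also $\bbZ$-$\tau$ consistent with $\cM_L$; but $\tau(Q)^{\cM_H^{2}} \neq \tau(Q)^{\cM_H^{1}} = Q^{\cM_L}$, so $\cM_H^{2}$ fails $Q$-$\tau$ consistency with $\cM_L$ even though both are compatible with $\cG_{\bbC}^{\dagger}$ and $\bbZ$. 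Thus $Q$ is not $\tau$-ID.

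The main obstacle I anticipate is the realizability claim used in ($\Rightarrow$): asserting that an \emph{arbitrary} high-level SCM inducing $\cG_{\bbC}^{\dagger}$ is the partial projection of some low-level SCM inducing $\cG$ (and hence $\cG_{\bbC}^{\dagger}$ under the construction). One direction — that projected abstractions of $\cG$-compatible low-level models induce $\cG_{\bbC}^{\dagger}$ — is the soundness half and is comparatively routine from Definition~\ref{def:proj-cdag}. The converse (completeness as realizability) is where the partial-projection rules (1)--(3) must be shown tight: one needs to take $\cM_H$ and explicitly re-inflate each AIC-violation variable $X \in \*V_H^{\dagger}$ into an observed part and an exogenous ``lost'' part so that the re-introduced dependences match exactly the edges added by rules (1)--(3), and verify the resulting low-level model induces $\cG$ with no extra structure. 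I would handle this by a direct SCM construction on the variables of $\bbC$, using fresh exogenous variables indexed by the projected portions, mirroring the reconstruction in Example~\ref{ex:proj-abs}; the bookkeeping of confounding arcs (the $\ui{V_{H,i}}^c$ terms and rules (2)--(3)) in the non-Markovian case is the delicate part and would rely on the detailed treatment referenced for the definition of $\*V_H^{\dagger}$ and Eq.~\ref{eq:low-intervention-short}.
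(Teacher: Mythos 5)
Your proof takes essentially the same route as the paper's: your ($\Leftarrow$) direction uses the canonical high-level model produced by Alg.~\ref{alg:map-abstraction} (via Thm.~\ref{thm:alg-construct-mh} and Lemma~\ref{lemma:alg-is-proj-cdag}) as the bridge between low- and high-level consistency, exactly as the paper does, and your contrapositive argument for ($\Rightarrow$) is logically the same as the paper's direct argument. The realizability point you flag as the main obstacle --- that an arbitrary $\cM_H \in \Omega_H(\cG_{\bbC}^\dagger)$ agreeing on $\tau(\bbZ)$ must be $\bbZ$-$\tau$ consistent with some low-level $\cM_L$ --- is also left implicit in the paper's proof (in the step asserting that both $\cM_1$ and $\cM_2$ ``must be $\bbZ$-$\tau$ consistent with $\cM_L$''), so you have not missed anything the paper actually supplies.
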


In words, $\tau$-identification across abstractions is equivalent to classic identification on the high-level space.

\begin{example}
    Continuing Ex.~\ref{ex:noaic-issue}, note that $X_H$ is the only AIC violator in $\*V_H$, since $x_1$ and $x_2$ both map to $x_C$ but have different effects on $Y$. Hence, $\*V_H^\dagger = \{X_H\}$, and the C-DAG $\cG_{\bbC}$ and projected C-DAG $\cG_{\bbC}^\dagger$ are the two graphs in Fig.~\ref{fig:proj-cdag-examples}(a). To answer the query of interest $P(Y_{X_H = x_C} = 1)$, one can apply Thm.~\ref{thm:dual-abs-id} to simply identify the quantity w.r.t.~$P(\*V_H)$ and $\cG_{\bbC}^\dagger$. In this case, note that the causal effect of $X_H$ on $Y$ can be computed via backdoor adjustment on $Z$, so $P(Y_{X_H = x_C} = 1)$ is equal to
    \begin{align}
        & \sum_z P(Y = 1 \mid X_H = x_C, Z = z)P(Z = z) \\
        &= \sum_z P(Y = 1 \mid X_L \in \{x_1, x_2\}, z)P(z) \\
        &= (0.7)(0.74) + (0.3)(0.26) = 0.596.
    \end{align}
    \hfill $\blacksquare$
\end{example}

Thm.~\ref{thm:dual-abs-id} implies that, in practice, $\tau$-ID can be checked by performing any classical ID procedure on the high-level space. This may include algorithmic approaches or other optimization-based approaches. 

\section{Experiments}
\label{sec:experiments}

\begin{figure*}
    \begin{center}
    \includegraphics[width=\textwidth,keepaspectratio]{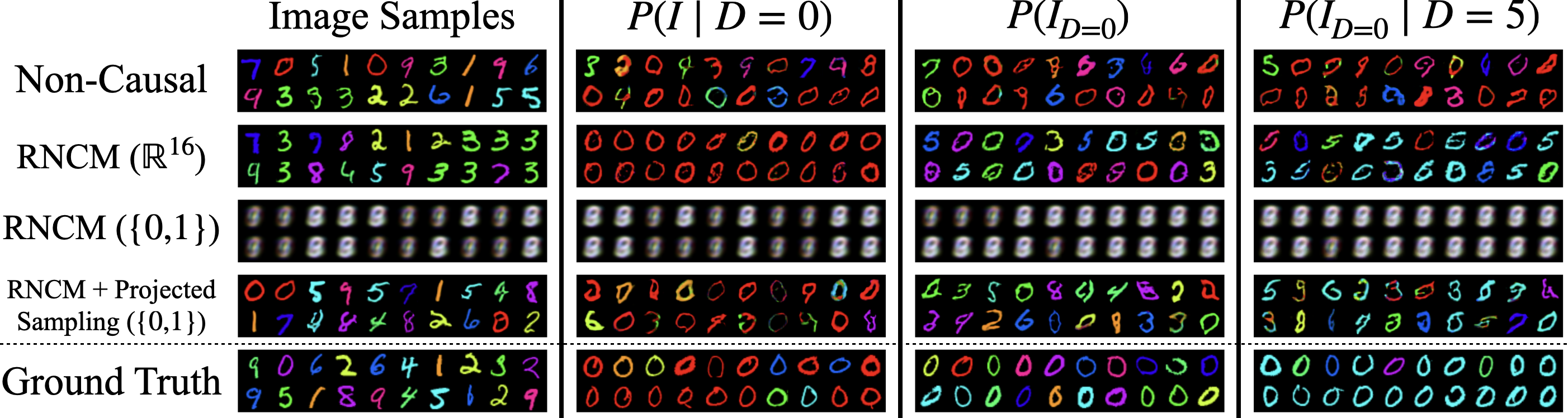}
    \caption{Colored MNIST results. Samples from different causal queries (top) are collected from competing approaches (left). The expressions in parentheses are the representation sizes. The left column shows direct image samples from each of the models, while the second, third, and fourth columns show samples generated from an $\cL_1$, $\cL_2$, and $\cL_3$ query, respectively.
    }
    \label{fig:mnist-bd-samp-results}
    \end{center}
\end{figure*}

\begin{figure}
\centering
\includegraphics[width=1\linewidth]{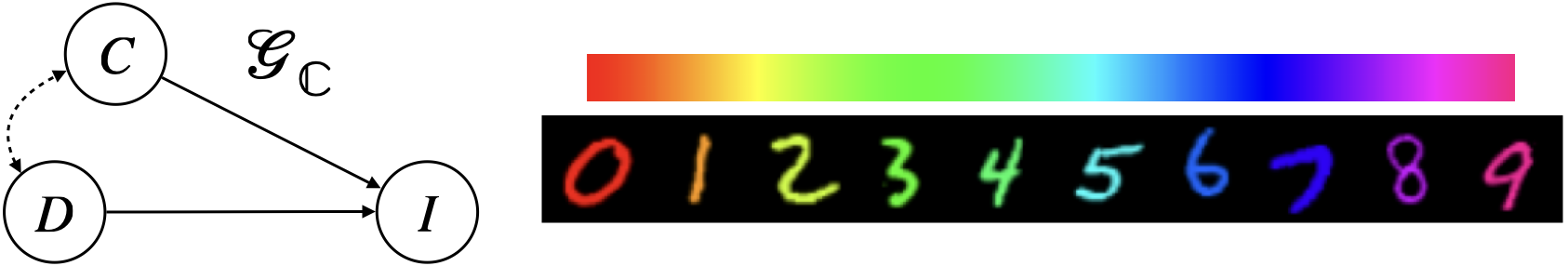}
\caption{(Left) Graph of Colored MNIST experiment. (Right) Correlation shown between color and digit.}
\label{fig:colored-mnist-legend}
\end{figure}

We perform two experiments to demonstrate the benefits of projected abstractions. The models in the experiments leverage Neural Causal Models (NCMs) \citep{xia:etal21, xia:etal23}, specifically the generative adversarial implementation called GAN-NCMs. Details of the experiment setup can be found in App.~\ref{app:experimental-details}, and code can be found at \url{https://github.com/CausalAILab/ProjectedCausalAbstractions}.

In the first experiment, we test the necessity of the projected C-DAGs when the AIC does not hold. The high-level query $\tau(Q) = P(y_{x} \mid  z)$ is estimated in the graph setting shown in Fig.~\ref{fig:proj-cdag-examples}(a), where $Z$ is a digit from 0 to 9, $X$ is a corresponding colored MNIST image, and $Y$ is a label denoting the color prediction of $X$. $\tau(X)$ maps the image to a binary variable representing the shade (light or dark) of $X$.

The results are shown in Fig.~\ref{fig:mnist-est-results}. Three different GAN-NCMs are trained: one directly on the low-level data that does not use abstractions (red), an abstracted one constrained by the C-DAG (yellow), and an abstracted one constrained by the projected C-DAG (blue). 95\% confidence intervals of the errors are plotted in the figure. Note that the abstractionless model and the projected C-DAG model have decreasing error with more samples, but the regular C-DAG model is unable to learn the correct query. The abstractionless model has higher error than the projected C-DAG model since it operates in a higher-dimensional space.

In the second experiment, we test an interesting consequence of the projected abstraction theory: the soft intervention definition in Eq.~\ref{eq:low-intervention-short} can be directly modeled and sampled if attempting to reconstruct the low-level data. We call this approach \emph{projected sampling} and explain it in more detail in App.~\ref{app:projected-sampling}. We show this in the causal colored MNIST experiment \citep{xia:bareinboim24}. In the model, digit $D$ and color $C$ both cause the image $I$, but they are confounded (e.g., 0's are red, 5's are cyan, see Fig.~\ref{fig:colored-mnist-legend}). Three different queries are tested (the right three columns of Fig.~\ref{fig:mnist-bd-samp-results}). $P(I \mid D = 0)$ is an $\cL_1$ query representing images conditioned on digit $=0$, resulting in red 0's. $P(I_{D = 0})$ is an $\cL_2$ query representing images with the digit intervened as 0, cutting the confounding and resulting in 0's of all colors. $P(I_{D = 0} \! \mid \! D \! = \! 5)$ is an $\cL_3$ query representing images with digit intervened as 0, conditioned on the digit originally being 5. This results in 0's with colors of images that were originally 5's, resulting in cyan 0's.

Four methods are compared on these queries in Fig.~\ref{fig:mnist-bd-samp-results}, with the ground truth shown on row 5. The non-causal approach (row 1) simply directly models the conditional distribution between digit and image and therefore fails to model anything higher than $\cL_1$. The representational NCM or RNCM \citep{xia:bareinboim24} (row 2) is able to decently reproduce all queries, but it uses a 16-dimensional representation space, which cannot shrink much further due to AIC limitations. When forced to take a binary representation (row 3), the RNCM clearly lacks the representation power to properly generate images. In contrast, using a projected sampling approach (row 4) can reproduce the images even with a representation size as small as a binary digit.

\section{Conclusion}

This paper introduced projected abstractions (Def.~\ref{def:projected-abs}), which can be constructed algorithmically (Alg.~\ref{alg:map-abstraction}, Thm.~\ref{thm:alg-construct-mh}), to overcome the AIC limitation. When the full model was not available, we leveraged a new graphical model (Def.~\ref{def:proj-cdag}, Thm.~\ref{thm:proj-cdag-completeness}) that allowed for causal inferences through the abstract-ID problem (Def.~\ref{def:abs-id}, Thm.~\ref{thm:dual-abs-id}). Finally, we demonstrated the ability of projected abstractions to leverage representation learning within difficult causal inference settings through high-dimensional image experiments.

\begin{figure}
\centering
\includegraphics[width=\linewidth]{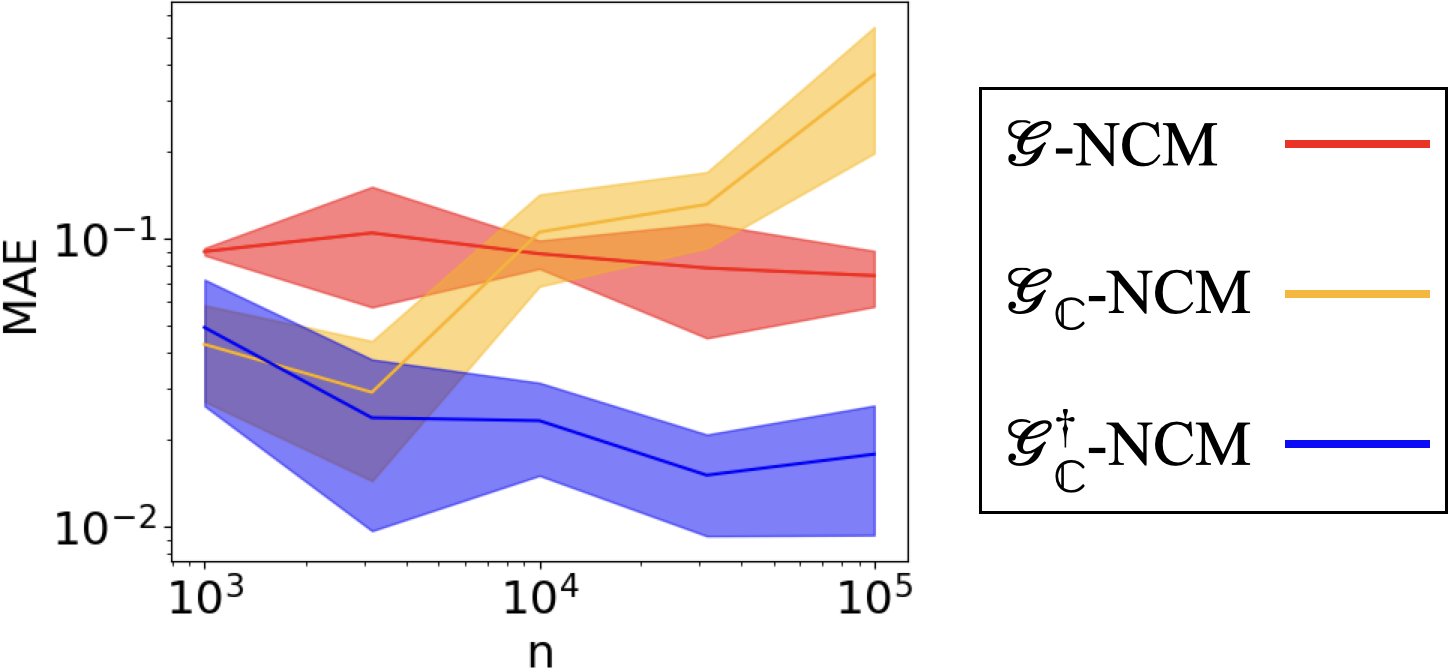}
\vspace{-0.05in}
\caption{Mean absolute error (MAE) v.\ number of samples for the MNIST estimation task. Comparisons between an abstractionless approach (red), a C-DAG approach (yellow), and a projected C-DAG approach (blue).}
\label{fig:mnist-est-results}
\end{figure}

\section*{Impact Statement}

This paper presents work whose goal is to advance the field of causal inference, a subfield of machine learning. The results in this paper may have implications bringing together strong practical results in representation learning and computer vision research with the explainability and generalizability of causal inference results. The trend is that this will lead to smarter AI, which itself has many consequences out of the scope of this work, but the benefit of understanding causal inference is that it can lead to less bias and more accountability of AI models.

\section*{Acknowledgements}

This research is supported in part by the NSF, ONR, AFOSR, DoE, Amazon, JP Morgan, and The Alfred P. Sloan Foundation.

\bibliographystyle{icml2025}
\bibliography{references}

\begin{thebibliography}{39}
\providecommand{\natexlab}[1]{#1}
\providecommand{\url}[1]{\texttt{#1}}
\expandafter\ifx\csname urlstyle\endcsname\relax
  \providecommand{\doi}[1]{doi: #1}\else
  \providecommand{\doi}{doi: \begingroup \urlstyle{rm}\Url}\fi

\bibitem[Ahuja et~al.(2023)Ahuja, Mahajan, Wang, and Bengio]{10.5555/3618408.3618426}
Ahuja, K., Mahajan, D., Wang, Y., and Bengio, Y.
\newblock Interventional causal representation learning.
\newblock In \emph{Proceedings of the 40th International Conference on Machine Learning}, ICML'23. JMLR.org, 2023.

\bibitem[Anand et~al.(2023)Anand, Ribeiro, Tian, and Bareinboim]{anand:etal23}
Anand, T.~V., Ribeiro, A.~H., Tian, J., and Bareinboim, E.
\newblock Causal effect identification in cluster dags.
\newblock In \emph{Proceedings of the 37th AAAI Conference on Artificial Intelligence}. AAAI Press, 2023.

\bibitem[Balke \& Pearl(1997)Balke and Pearl]{balke:pea97}
Balke, A. and Pearl, J.
\newblock {Bounds on treatment effects from studies with imperfect compliance}.
\newblock \emph{Journal of the American Statistical Association}, 92\penalty0 (439):\penalty0 1172--1176, 9 1997.

\bibitem[Bareinboim et~al.(2022)Bareinboim, Correa, Ibeling, and Icard]{bareinboim:etal20}
Bareinboim, E., Correa, J.~D., Ibeling, D., and Icard, T.
\newblock On pearl’s hierarchy and the foundations of causal inference.
\newblock In \emph{Probabilistic and Causal Inference: The Works of Judea Pearl}, pp.\  507–556. Association for Computing Machinery, New York, NY, USA, 1st edition, 2022.

\bibitem[Beckers \& Halpern(2019)Beckers and Halpern]{beckers2019abstracting}
Beckers, S. and Halpern, J.~Y.
\newblock Abstracting causal models.
\newblock In \emph{Proceedings of the Thirty-Third AAAI Conference on Artificial Intelligence and Thirty-First Innovative Applications of Artificial Intelligence Conference and Ninth AAAI Symposium on Educational Advances in Artificial Intelligence}, AAAI'19/IAAI'19/EAAI'19. AAAI Press, 2019.
\newblock ISBN 978-1-57735-809-1.
\newblock \doi{10.1609/aaai.v33i01.33012678}.
\newblock URL \url{https://doi.org/10.1609/aaai.v33i01.33012678}.

\bibitem[Beckers et~al.(2019)Beckers, Eberhardt, and Halpern]{Beckers2019-BECACA-8}
Beckers, S., Eberhardt, F., and Halpern, J.~Y.
\newblock Approximate causal abstraction.
\newblock In \emph{Proceedings of the 35th Conference on Uncertainty in Artificial Intelligence}. 2019.

\bibitem[Bengio et~al.(2013)Bengio, Courville, and Vincent]{10.1109/TPAMI.2013.50}
Bengio, Y., Courville, A., and Vincent, P.
\newblock Representation learning: A review and new perspectives.
\newblock \emph{IEEE Trans. Pattern Anal. Mach. Intell.}, 35\penalty0 (8):\penalty0 1798–1828, aug 2013.
\newblock ISSN 0162-8828.
\newblock \doi{10.1109/TPAMI.2013.50}.
\newblock URL \url{https://doi.org/10.1109/TPAMI.2013.50}.

\bibitem[Brehmer et~al.(2022)Brehmer, de~Haan', Lippe, and Cohen]{10.5555/3600270.3603046}
Brehmer, J., de~Haan', P., Lippe, P., and Cohen, T.
\newblock Weakly supervised causal representation learning.
\newblock In \emph{Proceedings of the 36th International Conference on Neural Information Processing Systems}, NeurIPS '22, Red Hook, NY, USA, 2022. Curran Associates Inc.
\newblock ISBN 9781713871088.

\bibitem[Chalupka et~al.(2015{\natexlab{a}})Chalupka, Eberhardt, and Perona]{Chalupka2015MultiLevelCS}
Chalupka, K., Eberhardt, F., and Perona, P.
\newblock Multi-level cause-effect systems.
\newblock In \emph{International Conference on Artificial Intelligence and Statistics}, 2015{\natexlab{a}}.

\bibitem[Chalupka et~al.(2015{\natexlab{b}})Chalupka, Perona, and Eberhardt]{10.5555/3020847.3020867}
Chalupka, K., Perona, P., and Eberhardt, F.
\newblock Visual causal feature learning.
\newblock In \emph{Proceedings of the Thirty-First Conference on Uncertainty in Artificial Intelligence}, UAI'15, pp.\  181–190, Arlington, Virginia, USA, 2015{\natexlab{b}}. AUAI Press.
\newblock ISBN 9780996643108.

\bibitem[Correa \& Bareinboim(2024)Correa and Bareinboim]{correa2024ctfcalc}
Correa, J. and Bareinboim, E.
\newblock Counterfactual graphical models: Constraints and inference.
\newblock Technical Report R-115, Causal Artificial Intelligence Lab, Columbia University, August 2024.

\bibitem[Correa \& Bareinboim(2019)Correa and Bareinboim]{correa:bar19}
Correa, J.~D. and Bareinboim, E.
\newblock {From Statistical Transportability to Estimating the Effect of Stochastic Interventions}.
\newblock In \emph{Proceedings of the Twenty-Eighth International Joint Conference on Artificial Intelligence}, pp.\  1661--1667. IJCAI Organization, 2019.

\bibitem[Felekis et~al.(2024)Felekis, Zennaro, Branchini, and Damoulas]{felekis:etal24}
Felekis, Y., Zennaro, F.~M., Branchini, N., and Damoulas, T.
\newblock Causal optimal transport of abstractions.
\newblock In \emph{Conference on Causal Learning and Reasoning, CLeaR 2024}, 2024.

\bibitem[Geiger et~al.(2023{\natexlab{a}})Geiger, Ibeling, Zur, Chaudhary, Chauhan, Huang, Arora, Wu, Goodman, Potts, et~al.]{geiger2023causal2}
Geiger, A., Ibeling, D., Zur, A., Chaudhary, M., Chauhan, S., Huang, J., Arora, A., Wu, Z., Goodman, N., Potts, C., et~al.
\newblock Causal abstraction: A theoretical foundation for mechanistic interpretability.
\newblock \emph{arXiv preprint arXiv:2301.04709}, 2023{\natexlab{a}}.

\bibitem[Geiger et~al.(2023{\natexlab{b}})Geiger, Potts, and Icard]{geiger2023causal}
Geiger, A., Potts, C., and Icard, T.
\newblock Causal abstraction for faithful model interpretation, 2023{\natexlab{b}}.

\bibitem[Lee \& Bareinboim(2019)Lee and Bareinboim]{lee:bar19a}
Lee, S. and Bareinboim, E.
\newblock {Structural Causal Bandits with Non-manipulable Variables}.
\newblock In \emph{Proceedings of the 33rd AAAI Conference on Artificial Intelligence}, 2019.

\bibitem[Li et~al.(2024)Li, Pan, and Bareinboim]{li:etal24}
Li, A., Pan, Y., and Bareinboim, E.
\newblock Disentangled representation learning in non-markovian causal systems.
\newblock In \emph{Advances in Neural Information Processing Systems}, 2024.

\bibitem[Massidda et~al.(2023)Massidda, Geiger, Icard, and Bacciu]{pmlr-v213-massidda23a}
Massidda, R., Geiger, A., Icard, T., and Bacciu, D.
\newblock Causal abstraction with soft interventions.
\newblock In van~der Schaar, M., Zhang, C., and Janzing, D. (eds.), \emph{Proceedings of the Second Conference on Causal Learning and Reasoning}, volume 213 of \emph{Proceedings of Machine Learning Research}, pp.\  68--87. PMLR, 11--14 Apr 2023.
\newblock URL \url{https://proceedings.mlr.press/v213/massidda23a.html}.

\bibitem[Pearl(1995)]{pearl:95a}
Pearl, J.
\newblock {Causal diagrams for empirical research}.
\newblock \emph{Biometrika}, 82\penalty0 (4):\penalty0 669--688, 1995.

\bibitem[Pearl(2000)]{pearl:2k}
Pearl, J.
\newblock \emph{{Causality: Models, Reasoning, and Inference}}.
\newblock Cambridge University Press, New York, NY, USA, 2nd edition, 2000.

\bibitem[Pearl(2017)]{pearl:17-r359}
Pearl, J.
\newblock Physical and metaphysical counterfactuals: Evaluating disjunctive actions.
\newblock \emph{Journal of Causal Inference}, 5\penalty0 (2), 2017.

\bibitem[Pearl \& Mackenzie(2018)Pearl and Mackenzie]{pearl:mackenzie2018}
Pearl, J. and Mackenzie, D.
\newblock \emph{{The Book of Why}}.
\newblock Basic Books, New York, 2018.

\bibitem[Rubenstein et~al.(2017)Rubenstein, Weichwald, Bongers, Mooij, Janzing, Grosse-Wentrup, and Sch{\"{o}}lkopf]{rubenstein:etal17-causalsem}
Rubenstein, P.~K., Weichwald, S., Bongers, S., Mooij, J., Janzing, D., Grosse-Wentrup, M., and Sch{\"{o}}lkopf, B.
\newblock {Causal Consistency of Structural Equation Models}.
\newblock In \emph{Proceedings of the Thirty-Third Conference on Uncertainty in Artificial Intelligence}, 2017.

\bibitem[Sch{\"o}lkopf* et~al.(2021)Sch{\"o}lkopf*, Locatello*, Bauer, Ke, Kalchbrenner, Goyal, and Bengio]{Scholkopfetal21}
Sch{\"o}lkopf*, B., Locatello*, F., Bauer, S., Ke, N.~R., Kalchbrenner, N., Goyal, A., and Bengio, Y.
\newblock Toward causal representation learning.
\newblock \emph{Proceedings of the IEEE}, 109\penalty0 (5):\penalty0 612--634, 2021.
\newblock URL \url{https://ieeexplore.ieee.org/stamp/stamp.jsp?arnumber=9363924}.
\newblock *equal contribution.

\bibitem[Shen et~al.(2022)Shen, Liu, Dong, Lian, Chen, and Zhang]{10.5555/3586589.3586830}
Shen, X., Liu, F., Dong, H., Lian, Q., Chen, Z., and Zhang, T.
\newblock Weakly supervised disentangled generative causal representation learning.
\newblock \emph{J. Mach. Learn. Res.}, 23\penalty0 (1), January 2022.
\newblock ISSN 1532-4435.

\bibitem[Squires et~al.(2023)Squires, Seigal, Bhate, and Uhler]{10.5555/3618408.3619756}
Squires, C., Seigal, A., Bhate, S., and Uhler, C.
\newblock Linear causal disentanglement via interventions.
\newblock In \emph{Proceedings of the 40th International Conference on Machine Learning}, ICML'23. JMLR.org, 2023.

\bibitem[Steinberg(2007)]{steinberg2007}
Steinberg, D.
\newblock Copyright.
\newblock In \emph{The Cholesterol Wars}. Academic Press, Oxford, 2007.
\newblock ISBN 978-0-12-373979-7.
\newblock \doi{https://doi.org/10.1016/B978-0-12-373979-7.50003-0}.
\newblock URL \url{https://www.sciencedirect.com/science/article/pii/B9780123739797500030}.

\bibitem[Truswell(2010)]{truswell2010}
Truswell, A.
\newblock \emph{Cholesterol and Beyond: The Research on Diet and Coronary Heart Disease 1900-2000}.
\newblock 01 2010.
\newblock ISBN 978-90-481-8874-1.
\newblock \doi{10.1007/978-90-481-8875-8}.

\bibitem[Varici et~al.(2023)Varici, Acarturk, Shanmugam, Kumar, and Tajer]{varici2023scorebasedcausalrepresentationlearning}
Varici, B., Acarturk, E., Shanmugam, K., Kumar, A., and Tajer, A.
\newblock Score-based causal representation learning with interventions, 2023.
\newblock URL \url{https://arxiv.org/abs/2301.08230}.

\bibitem[von K\"{u}gelgen et~al.(2021)von K\"{u}gelgen, Sharma, Gresele, Brendel, Sch\"{o}lkopf, Besserve, and Locatello]{10.5555/3540261.3541519}
von K\"{u}gelgen, J., Sharma, Y., Gresele, L., Brendel, W., Sch\"{o}lkopf, B., Besserve, M., and Locatello, F.
\newblock Self-supervised learning with data augmentations provably isolates content from style.
\newblock In \emph{Proceedings of the 35th International Conference on Neural Information Processing Systems}, NeurIPS '21, Red Hook, NY, USA, 2021. Curran Associates Inc.
\newblock ISBN 9781713845393.

\bibitem[Wang \& Jordan(2024)Wang and Jordan]{10.5555/3722577.3722852}
Wang, Y. and Jordan, M.~I.
\newblock Desiderata for representation learning: a causal perspective.
\newblock \emph{J. Mach. Learn. Res.}, 25\penalty0 (1), January 2024.
\newblock ISSN 1532-4435.

\bibitem[Wendong et~al.(2023)Wendong, Keki\'{c}, von K\"{u}gelgen, Buchholz, Besserve, Gresele, and Sch\"{o}lkopf]{10.5555/3666122.3667533}
Wendong, L., Keki\'{c}, A., von K\"{u}gelgen, J., Buchholz, S., Besserve, M., Gresele, L., and Sch\"{o}lkopf, B.
\newblock Causal component analysis.
\newblock In \emph{Proceedings of the 37th International Conference on Neural Information Processing Systems}, NeurIPS '23, Red Hook, NY, USA, 2023. Curran Associates Inc.

\bibitem[Xia \& Bareinboim(2024)Xia and Bareinboim]{xia:bareinboim24}
Xia, K. and Bareinboim, E.
\newblock Neural causal abstractions.
\newblock In \emph{Proceedings of the 38th AAAI Conference on Artificial Intelligence}. AAAI Press, 2024.

\bibitem[Xia \& Bareinboim(2025)Xia and Bareinboim]{xialossy:tr}
Xia, K. and Bareinboim, E.
\newblock {Causal Abstraction Inference under Lossy Representations}.
\newblock Technical Report Technical Report R-124, Columbia University, Department of Computer Science, New York, 2025.

\bibitem[Xia et~al.(2021)Xia, Lee, Bengio, and Bareinboim]{xia:etal21}
Xia, K., Lee, K.-Z., Bengio, Y., and Bareinboim, E.
\newblock The causal-neural connection: Expressiveness, learnability, and inference.
\newblock In Ranzato, M., Beygelzimer, A., Dauphin, Y., Liang, P., and Vaughan, J.~W. (eds.), \emph{Advances in Neural Information Processing Systems}, volume~34, pp.\  10823--10836. Curran Associates, Inc., 2021.

\bibitem[Xia et~al.(2023)Xia, Pan, and Bareinboim]{xia:etal23}
Xia, K., Pan, Y., and Bareinboim, E.
\newblock Neural causal models for counterfactual identification and estimation.
\newblock In \emph{Proceedings of the 11th International Conference on Learning Representations (ICLR-23)}, 2023.

\bibitem[Zennaro et~al.(2023)Zennaro, Dr{\'{a}}vucz, Apachitei, Widanage, and Damoulas]{DBLP:conf/clear2/ZennaroDAWD23}
Zennaro, F.~M., Dr{\'{a}}vucz, M., Apachitei, G., Widanage, W.~D., and Damoulas, T.
\newblock Jointly learning consistent causal abstractions over multiple interventional distributions.
\newblock In van~der Schaar, M., Zhang, C., and Janzing, D. (eds.), \emph{Conference on Causal Learning and Reasoning, CLeaR 2023, 11-14 April 2023, Amazon Development Center, T{\"{u}}bingen, Germany, April 11-14, 2023}, volume 213 of \emph{Proceedings of Machine Learning Research}, pp.\  88--121. {PMLR}, 2023.
\newblock URL \url{https://proceedings.mlr.press/v213/zennaro23a.html}.

\bibitem[Zhang et~al.(2022)Zhang, Jin, and Bareinboim]{zhang:bareinboim21b}
Zhang, J., Jin, T., and Bareinboim, E.
\newblock Partial counterfactual identification from observational and experimental data.
\newblock In \emph{Proceedings of the 39th International Conference on Machine Learning (ICML-22)}, 2022.

\bibitem[Zhang et~al.(2024)Zhang, Xie, Ng, and Zheng]{10.5555/3692070.3694554}
Zhang, K., Xie, S., Ng, I., and Zheng, Y.
\newblock Causal representation learning from multiple distributions: a general setting.
\newblock In \emph{Proceedings of the 41st International Conference on Machine Learning}, ICML'24. JMLR.org, 2024.

\end{thebibliography}

\clearpage
\appendix
\onecolumn

\section{Proofs}
\label{app:proofs}

This section contains all proofs of the paper as well as the necessary theoretical background and notation used.

\subsection{Notation}
\label{app:notation-table}

This paper uses a heavy amount of notation, combining notation from the counterfactual inference, causal generative models, and causal abstractions literature. See Table \ref{tab:notation} for a summary of the notation used in this paper.

\begin{table}[h]
    \center
    \begin{tabular}{l|l}
    \hline \hline
    Notation & Description \\ \hline \hline
    Uppercase letters ($X$)       & Random variables            \\ \hline
    Lowercase letters ($x$)       & Corresponding values of random variables            \\ \hline
    Bold uppercase letters ($\*X$) & Set of random variables \\ \hline
    Bold lowercase letters ($\*x$) & Corresponding set of values \\ \hline
    $\cD_{X}$ & Domain of $X$ \\ \hline
    $\cD_{\*X}$ & $\cD_{X_1} \times \dots \times \cD_{X_k}$ for $\*X = \{X_1, \dots, X_k\}$ \\ \hline
    $\cM$ & SCM (Def.~\ref{def:scm}) \\ \hline
    $\*U$ & Exogenous variables of an SCM \\ \hline
    $\*V$ & Endogenous variables of an SCM \\ \hline
    $\cF = \{f_{V_i} : V_i \in \*V\}$ & Set of functions of an SCM \\ \hline
    $\*U_{V_i}$ & Subset of $\*U$ that are inputs to $f_{V_i}$ \\ \hline
    $\Pai{V_i}$ & Subset of $\*V$ that are inputs to $f_{V_i}$ \\ \hline
    $P(\*X = \*x)$ or $P(\*x)$ & Probability of $\*X$ taking values $\*x$ under distribution $P(\*X)$ \\ \hline
    $P^{\cM}(\*X = \*x)$ or $P^{\cM}(\*x)$ & Probability of $\*X$ taking values $\*x$ in SCM $\cM$ \\ \hline
    $\cG$ & Causal diagram (Def.~\ref{def:cg}) \\ \hline
    Variable subscript ($\*Y_{\*x}$, $\*Y_{i[\*x_i]}$) & $\*Y$ ($\*Y_i$) under intervention on $\*X = \*x$ ($\*X_i = \*x_i$)  \\ \hline
    Star in subscript ($\*Y_*$) & Counterfactual term set (Def.~\ref{def:l3-semantics}) \\ \hline
    $\cL_i$ & Distributions of Layer $i$ of the PCH \\ \hline
    $\bbZ$ & Set of interventional distributions $P(\*V_{\*z_k})_{k=1}^{\ell}$ (e.g., available data) \\ \hline
    $\cL_i(\cM)$, $\bbZ(\cM)$ & Set of distributions induced by $\cM$ \\ \hline
    $\bbC$, $\bbD$ & Intervariable and intravariable clusters (Def.~\ref{def:var-clusterings}) \\ \hline
    Subscript with $L$ ($\cM_L$, $\*V_L$) & Low-level model/variables \\ \hline
    Subscript with $H$ ($\cM_H$, $\*V_H$) & High-level model/variables \\ \hline
    $\tau$ & Mapping from $\cD_{\*V_L}$ to $\cD_{\*V_H}$ \\ \hline
    $\delta$ & Partial SCM projection mapping (Prop.~\ref{prop:partial-scm-projection}) \\ \hline
    Superscript with $o$ ($\*W^o$) & Observed portion of projection (Prop.~\ref{prop:partial-scm-projection}) \\ \hline
    Superscript with $u$ ($\*W^u$) & Unobserved portion of projection (Prop.~\ref{prop:partial-scm-projection}) \\ \hline
    $\sigma_{\*X}$ & Soft (probabilistic) intervention applied on $\*X$ \\ \hline
    $Q$ & Query representing a probability distribution \\ \hline
    $Q^{\cM}$ & Query $Q$ evaluated in SCM $\cM$ \\ \hline
    $\tau(Q)$ & High-level corresponding query of $Q$ (Def.~\ref{def:q-tau-consistency}) \\ \hline
    $\*V_H^{\dagger}$ & AIC violation variables (Def.~\ref{def:aic-violators}) \\ \hline
    $\cG_{\bbC}$ & Cluster causal diagram (CDAG, Def.~\ref{def:cdag}) \\ \hline
    $\cG_{\bbC}^{\dagger}$ & Partially projected CDAG (Def.~\ref{def:proj-cdag}) \\ \hline
    $\Omega$ & Set of SCMs \\ \hline
    Hat ($\widehat{M}$, $\widehat{\cF}$, $\widehat{\tau}$) & Neural-parameterized variant (Def.~\ref{def:gncm}) \\ \hline
    $\bm{\theta}$ & Parameters of a neural-parameterized object \\ \hline
    \end{tabular}
    \caption{Table of notation.}
    \label{tab:notation}
\end{table}

\subsection{Extended Preliminaries}
\label{app:extended-prelim}

Given the brevity of Sec.~\ref{sec:prelims}, the definitions are explained further in this section.

\begin{customdef}{\ref{def:scm}}[Structural Causal Model (SCM)]
    A structural causal model $\cM$ is a 4-tuple $\langle \*U, \*V, \cF, P(\*U) \rangle$, where
    \begin{itemize}
        \item $\*U$ is a set of background (exogenous) variables that are determined by factors outside the model;
        \item $\*V$ is a set $\{V_1, V_2, \dots, V_n\}$ of variables, called endogenous, that are determined by other variables in the model -- that is, variables in $\*U \cup \*V$;
        \item $\cF$ is a set of functions $\{f_{V_1}, f_{V_2}, \dots, f_{V_n}\}$ such that each $f_{V_i}$ is a mapping from exogenous parents $\Ui{V_i} \subseteq \*U$ and endogenous parents $\Pai{V_i} \subseteq \*V \setminus V_i$ to $V_i$;
        \item $P(\*U)$ is a probability function defined over $\cD_{\*U}$.
        \hfill $\blacksquare$
    \end{itemize}
\end{customdef}
An SCM $\cM$ consists of endogenous variables $\*V$, exogenous variables $\*U$ with distribution $P(\*U)$, and mechanisms $\cF$. $\cF$ contains functions $f_{V_i}$ (for all $V_i \in \*V$) that map endogenous parents $\Pai{V_i}$ and exogenous parents $\Ui{V_i}$ to $V_i$. Given a fixed instance of $\*U \gets \*u$, each function $f_{V_i} \in \cF$ (when evaluated in topological order) deterministically outputs a value for $V_i$, which is passed into downstream functions. Collectively, the entire set of $\cF$ maps $\*U$ to $\*V$.

This provides the semantical framework for defining the three layers of the Pearl Causal Hierarchy (PCH).

\begin{definition}[Layer 1 Valuation {\citep[Def.~2]{bareinboim:etal20}}]
    \label{def:l1}
    An SCM $\cM = \langle \*U, \*V, \cF, P(\*U) \rangle$ defines a joint probability distribution $P^{\cM}(\*V)$ such that for each $\*Y \subseteq \*V$:
    \begin{equation}
        P^{\cM}(\*y) = \int_{\cD_{\*U}} \mathbbm{1}\{\*Y(\*u) = \*y\} dP(\*u)
    \end{equation}
    where $\*Y(\*u)$ is the solution for $\*Y$ after evaluating $\cF$ with $\*U = \*u$.
    \hfill $\blacksquare$
\end{definition}

In words, as discussed above, $\cL_1$ quantities are evaluated by fixing $\*U$ and passing the values topologically through $\cF$ to obtain a value for $\*Y \subseteq \*V$.

\begin{definition}[Layer 2 Valuation {\citep[Def.~5]{bareinboim:etal20}}]
    \label{def:l2}
    An SCM $\cM = \langle \*U, \*V, \cF, P(\*U) \rangle$ induces a family of joint distributions over $\*V$, one for each intervention $\*x$. For each $\*Y \subseteq \*V$:
    \begin{equation}
        P^{\cM}(\*y_{\*x}) = \int_{\cD_{\*U}} \mathbbm{1}\{\*Y_{\*x}(\*u) = \*y\} dP(\*u)
    \end{equation}
    where $\*Y_{\*x}(\*u)$ is the solution for $\*Y$ in the submodel $\cM_{\*x} = \langle \*U, \*V, \cF_{\*x}, P(\*U) \rangle$, where $\cF_{\*x} := \{f_{V}: V \in \*V \setminus \*X\} \cup \{f_X \gets x : X \in \*X\}$.
    \hfill $\blacksquare$
\end{definition}

In words, $\cL_2$ quantities are evaluated by performing the same topological evaluation, but this time through $\cF_{\*x}$, the set of functions obtained from applying the mutilation procedure, setting the functions of $\*X$ directly to the values $\*x$.

\begin{definition}[Layer 3 Valuation {\citep[Def.~7]{bareinboim:etal20}}]
    An SCM $\cM = \langle \*U, \*V, \cF, P(\*U) \rangle$ induces a family of joint distributions over counterfactual events $\*Y_{1[\*x_1]}, \*Y_{2[\*x_2]}, \dots$ for any $\*Y_i, \*X_i \subseteq \*V$:
    \begin{equation}
        \label{eq:l3}
        P^{\cM}(\*y_{1[\*x_1]}, \*y_{2[\*x_2]}, \dots) = \int_{\cD_{\*U}} \mathbbm{1}\{\*Y_{1[\*x_1]}(\*u) = \*y_1, \*Y_{2[\*x_2]}(\*u) = \*y_2, \dots\} dP(\*u).
    \end{equation}
    \hfill $\blacksquare$
\end{definition}

In words, $\cL_3$ quantities are evaluated by passing the same values of $\*U$ through multiple different submodels. Each intervention $\*x_i$ induces a different submodel $\cM_{\*x_i}$ through the mutilation procedure, with a new set of intervened functions $\cF_{\*x_i}$. Each $\*Y_i$, under the intervention $\*x_i$, must evaluate to $\*y_i$ under the same value of $\*u$ to contribute to Eq.~\ref{eq:l3}.

Note that the results of this work are general to all quantities of $\cL_3$, which includes $\cL_2$ and $\cL_1$. Specifically $\cL_2$ is the subset of $\cL_3$ such that there is only a single intervention $\*X_1$, and $\cL_1$ is the subset such that the single intervention $\*X_1 = \emptyset$.

Each SCM induces a graphical structure called a causal diagram, defined as follows.

\begin{definition}[Causal Diagram {\citep[Def.~13]{bareinboim:etal20}}]
    \label{def:cg}
    Each SCM $\cM$ induces a causal diagram $\cG$, constructed as follows:
    \begin{enumerate}
        \item add a vertex for each $V_i \in \*V$;
        \item add a directed arrow $(V_j \rightarrow V_i)$ for every $V_i \in \*V$ and $V_j \in \Pai{V_i}$; and
        \item add a dashed-bidirected arrow $(V_j  \dashleftarrow \dashrightarrow V_i)$ for every pair $V_i, V_j \in \*V$ such that $\Ui{V_i}$ and $\Ui{V_j}$ are not independent (Markovianity is not assumed).
        \hfill $\blacksquare$
    \end{enumerate}
\end{definition}

In words, a causal diagram visually represents variables and their functional dependencies.

In the context of causal abstraction theory, low-level variables can be constructively mapped to higher-level variables through clustering, defined below.

\interintraclusters*

In words, intervariable clusters group low-level variables together to form high-level variables while intravariable clusters group low-level values together to form high-level values. As an example, suppose $V_1, V_2, \dots, V_{100} \in \*V_L$ are binary variables that represent votes for approving a new law, but instead of storing individual voter information, it is preferred to work with an abstraction that uses a variable $V_T$ representing the total votes for the new law. Then, there may be a cluster $\*C \in \bbC$ such that $\*C = \{V_1, \dots, V_{100}\}$ that corresponds to $V_T$. Moreover, it is not necessary that $V_T$ takes a value that is a tuple of $(V_1, V_2, \dots, V_{100})$. For example, if 60 people voted for the new law, it does not matter if the votes are coming from $V_1$ to $V_{60}$ or if they were some other combination. Then, while $\cD_{\*C}$ contains all possible tuples of $(V_1, \dots, V_{100})$, some of the values can be clustered together. For example $\cD_{\*C}^{(60)}$ can be the subset of $\cD_{\*C}$ such that $\sum_{i} V_i = 60$. Each $\cD_{\*C}^{j}$ corresponds to a different value of $V_T$ (e.g., $\cD_{\*C}^{(60)}$ is the set of values that corresponds with $V_T = 60$). The definition of inter/intravariable clusters outlines a natural abstraction function $\tau$, defined below.

\consabsfunc*

Points 1 and 2 simply refer to the idea that intervariable clusters correspond to high-level variables while intravariable clusters (e.g., $\*C$ corresponds with $V_T$) correspond to high-level values (e.g., $\cD_{\*C}^{(60)}$ corresponds to $V_T = 60$). Point 3 simply states that $\tau$ separately maps each individual low-level intervariable and intravariable cluster to their corresponding high-level variable and value.

Arbitrarily choosing intravariable clusters can be problematic because clustering too many values together can result in a loss of information. This is characterized by the abstract invariance condition (AIC), defined below.

\aicdef*

In words, the AIC is violated if two values that are clustered together (and therefore map to the same high-level value) have different effects on downstream variables. For example, suppose $Y \in \*V_L$ is a variable that represents whether the law in the above example passes. However, it turns out that $V_1$ is the only vote that matters for determining $Y$. Then, clustering a value with $V_1 = 0$ together with another value with $V_1 = 1$ would violate the AIC since it is ambiguous which value should be used for determining $Y$. See Example \ref{ex:noaic-issue} for more details.

\subsection{Additional Definitions}
\label{app:additional-defs}

The following definitions about $\tau$-abstractions from \citet{beckers2019abstracting} set the groundwork for many discussions on abstraction theory.

\begin{restatable}[$\tau$-Abstraction {\citep[Def.~3.13]{beckers2019abstracting}}]{definition}{beckerstauabs}
    \label{def:tau-abs}
    Let $\cM_L = \langle \*U_L, \*V_L, \cF_L, P(\*U_L) \rangle$ and $\cM_H = \langle \*U_H, \*V_H, \cF_H, P(\*U_H)\rangle$ be two SCMs. Let $\cI_L$ and $\cI_H$ be the sets of allowed interventions respectively. Given $\tau: \cD_{\*V_L} \rightarrow \cD_{\*V_H}$, we say that $(\cM_H, \cI_H)$ is a $\tau$-abstraction of $(\cM_L, \cI_L)$ if:
    \begin{enumerate}
        \item $\tau$ is surjective;
        \item There exists surjective $\tau_{\*U}: \cD_{\*U_L} \rightarrow \cD_{\*U_H}$ that is compatible with $\tau$, i.e.
        \begin{equation}
            \label{eq:tau-u-compatibility}
            \tau(\cM_{L[\*X_L \gets \*x_L]}(\*u_L)) = \cM_{H[\omega_{\tau}(\*X_L \gets \*x_L)]}(\tau_{\*U}(\*u_L)),
        \end{equation}
        for all $\*u_L \in \cD_{\*U_L}$ and all $(\*X_L \gets \*x_L) \in \cI_L$;
        \item $\cI_H = \omega_{\tau}(\cI_L)$.
    \end{enumerate}
    \hfill $\blacksquare$
\end{restatable}

Further, we will assume that if $(\cM_H, \cI_H)$ is a $\tau$-abstraction of $(\cM_L, \cI_L)$, then $P(\*U_H) = \tau_{\*U}(P(\*U_L)) = P(\tau_{\*U}(\*U_L))$, that is, the distribution of $P(\*U_H)$ can be obtained from $P(\*U_L)$ via the push-forward measure through $\tau_{\*U}$. While it is not explicitly stated in the definition, this property aligns with the intention of linking the spaces of $\*U_L$ and $\*U_H$ through $\tau_{\*U}$.

\begin{restatable}[Strong $\tau$-Abstraction {\citep[Def.~3.15]{beckers2019abstracting}}]{definition}{beckersstrongtauabs}
    \label{def:strong-tau-abs}
    We say that $\cM_H$ is a strong $\tau$-abstraction of $\cM_L$ if $(\cM_H, \cI_H)$ is a $\tau$-abstraction of $(\cM_L, \cI_L)$ and $\cI_H = \cI_H^*$.
    \hfill $\blacksquare$
\end{restatable}

\begin{restatable}[Constructive $\tau$-Abstraction {\citep[Def.~3.19]{beckers2019abstracting}}]{definition}{beckersconstauabs}
    \label{def:cons-tau-abs}
    $\cM_H$ is a constructive $\tau$-abstraction of $\cM_L$ if $\cM_H$ is a strong $\tau$-abstraction of $\cM_L$, and there exists a partition of $\*V_L$, $\bbC = \{\*C_1, \*C_2, \dots, \*C_{n+1}\}$ (where $n = |\*V_H|$) with nonempty $\*C_1$ to $\*C_n$, such that $\tau$ can be decomposed as $\tau = (\tau_{\*C_1}, \tau_{\*C_2}, \dots, \tau_{\*C_n})$, where each $\tau_{\*C_i} : \cD_{\*C_i} \rightarrow \cD_{V_{H, i}}$ maps the $i$th partition to the $i$th variable of $\*V_H$.
    \hfill $\blacksquare$
\end{restatable}

For constructive abstraction functions, there is a notion of $Q$-$\tau$ consistency that connects low and high-level quantities. The formal definition is below.

\begin{definition}[$Q$-$\tau$ Consistency {\citep[Def.~7]{xia:bareinboim24}}]
    \label{def:q-tau-consistency}
    Let $\cM_L$ and $\cM_H$ be SCMs defined over variables $\*V_L$ and $\*V_H$, respectively. Let $\tau: \cD_{\*V_L} \rightarrow \cD_{\*V_H}$ be a constructive abstraction function w.r.t.~clusters $\bbC$ and $\bbD$. Let
    \begin{equation}
        \label{eq:q-valid}
        Q = \sum_{\*y_{L, *} \in \cD_{\*Y_{L, *}}(\*y_{H, *})} P(\*Y_{L, *} = \*y_{L, *})
    \end{equation}
    be a low-level Layer 3 quantity of interest (for some $\*y_{H, *} \in \cD_{\*Y_{H, *}}$), as expressed in Eq.~\ref{eq:valid-low-ctf}, and let
    \begin{equation}
        \label{eq:tauq-valid}
        \tau(Q) = P(\*Y_{H, *} = \*y_{H, *})
    \end{equation}
    be its high level counterpart.
    We say that $\cM_H$ is $Q$-$\tau$ consistent with $\cM_L$ if
    \begin{equation}
        \label{eq:q-tau-consistency}
        \begin{split}
            & \sum_{\*y_{L, *} \in \cD_{\*Y_{L, *}}(\*y_{H, *})} P^{\cM_L}(\*Y_{L, *} = \*y_{L, *}) \\
            &= P^{\cM_H}(\*Y_{H, *} = \*y_{H, *}),
        \end{split}
    \end{equation}
    that is, the value of $Q$ induced by $\cM_L$ is equal to the value of $\tau(Q)$ induced by $\cM_H$\footnote{Note that the equality in Eq.~\ref{eq:q-tau-consistency} is consistent with the push-forward measure through $\tau$.}. Furthermore, if $\cM_H$ is $Q$-$\tau$ consistent with $\cM_L$ for all $Q \in \cL_i(\cM_L)$ of the form of Eq.~\ref{eq:q-valid}, then $\cM_H$ is said to be $\cL_i$-$\tau$ consistent with $\cM_L$.
    \hfill $\blacksquare$
\end{definition}

The following result relates constructive abstraction functions and the concept of $Q$-$\tau$ consistency with $\tau$-abstractions.

\begin{proposition}[Abstraction Connection {\citep[Prop.~1]{xia:bareinboim24}}]
    \label{prop:abs-connect}
    Let $\tau: \cD_{\*V_L} \rightarrow \cD_{\*V_H}$  be a constructive abstraction function (Def.~\ref{def:tau}). $\cM_H$ is $\cL_3$-$\tau$ consistent (Def.~\ref{def:q-tau-consistency}) with $\cM_L$ if and only if there exists SCMs $\cM_L'$ and $\cM_H'$ s.t.~$\cL_3(\cM_L') = \cL_3(\cM_L)$, $\cL_3(\cM_H') = \cL_3(\cM_H)$, and $\cM_H'$ is a constructive $\tau$-abstraction of $\cM'_L$.
    
    \hfill $\blacksquare$
\end{proposition}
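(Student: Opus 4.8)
The plan is to prove the biconditional by treating $\cL_3$-$\tau$ consistency as a purely \emph{distributional} property and the constructive $\tau$-abstraction condition as its structural (SCM-level) witness, bridging the two through a canonical-SCM representation. A preliminary observation I would record first is that both sides of the claimed equivalence depend on $\cM_L,\cM_H$ only through their Layer 3 distributions: $\cL_3$-$\tau$ consistency (Def.~\ref{def:q-tau-consistency}) is phrased entirely in terms of $\cL_3$ quantities, and the existential hypothesis only constrains $\cL_3(\cM_L'),\cL_3(\cM_H')$. Hence ``$\cM_H$ is $\cL_3$-$\tau$ consistent with $\cM_L$'' is invariant under replacing either model by any $\cL_3$-equivalent one, which lets me move freely between the given models and chosen representatives.

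For the ($\Leftarrow$) direction I would assume $\cM_H'$ is a constructive $\tau$-abstraction of $\cM_L'$ and show it is $\cL_3$-$\tau$ consistent with $\cM_L'$ (which transfers to $\cM_H,\cM_L$ by the invariance above). Fix an admissible high-level query $P(\*Y_{H,*}=\*y_{H,*})$ and expand both sides of Eq.~\ref{eq:q-tau-consistency} via the Layer 3 valuation (Def.~\ref{def:l3-semantics}). For fixed $\*u_L$, the compatibility equation Eq.~\ref{eq:tau-u-compatibility} gives $\tau(\cM_{L[\*x_i]}(\*u_L)) = \cM_{H[\omega_\tau(\*x_i)]}(\tau_{\*U}(\*u_L))$ for each counterfactual term, so the fiber-summed low-level integrand $\mathbf{1}[\,\tau(\*Y_{L,i[\*x_i]}(\*u_L)) = \*y_{H,i}\ \forall i\,]$ equals the high-level indicator $\mathbf{1}[\,\*Y_{H,i[\omega_\tau(\*x_i)]}(\tau_{\*U}(\*u_L)) = \*y_{H,i}\ \forall i\,]$. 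Summing the low-level event over $\cD_{\*Y_{L,*}}(\*y_{H,*}) = \{\*y_{L,*} : \tau(\*y_{L,*})=\*y_{H,*}\}$ produces exactly this indicator, and the push-forward assumption $P(\*U_H)=\tau_{\*U}(P(\*U_L))$ lets me change variables $\*u_L\mapsto\tau_{\*U}(\*u_L)$ in the integral to obtain $P^{\cM_H'}(\*y_{H,*})$. This is routine once the change of variables is set up, and it covers all of $\cL_3$.

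For the ($\Rightarrow$) direction I would build canonical SCMs. Let $\cM_L'$ be the canonical SCM $\cL_3$-equivalent to $\cM_L$: its exogenous variable $\*U_L'$ ranges over low-level response functions (each $\*u_L'$ records the joint potential outcome $\*V_L(\*x_L)$ for every hard intervention), its mechanisms read off the relevant coordinate, and $P(\*U_L')$ is the type distribution induced by $\cL_3(\cM_L)$; build $\cM_H'$ from $\cM_H$ analogously, taking its exogenous domain to be the support of the induced high-level type distribution. Define $\tau_{\*U}:\cD_{\*U_L'}\to\cD_{\*U_H'}$ by applying $\tau$ coordinatewise, sending the low-level type $\*x_L\mapsto\*V_L(\*x_L)$ to the high-level type $\*x_H\mapsto\tau(\*V_L(\*x_L))$ for any representative $\*x_L\in\tau^{-1}(\*x_H)$. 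By construction $\tau_{\*U}$ makes Eq.~\ref{eq:tau-u-compatibility} hold verbatim on canonical representatives, and the decomposition $\tau=(\tau_{\*C_1},\dots)$ required by Def.~\ref{def:cons-tau-abs} is inherited directly from the constructive abstraction function. It remains to verify (i) $\tau_{\*U}$ is well-defined and surjective, and (ii) the push-forward identity $\tau_{\*U}(P(\*U_L'))=P(\*U_H')$, after which $\cM_H'$ is a constructive $\tau$-abstraction of $\cM_L'$ and the $\cL_3$-equivalences close the argument.

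The main obstacle is precisely (i)--(ii), where the full strength of $\cL_3$-$\tau$ consistency must be spent. Well-definedness of $\tau_{\*U}$ requires that any two interventions in a common fiber $\tau^{-1}(\*x_H)$ yield $\tau$-identical high-level outcomes along every type in the support --- a type-level invariance that $\cL_3$-$\tau$ consistency forces because consistency holds for the grand joint counterfactual ranging over all variables under all interventions simultaneously, and this grand joint pins down the entire joint type distribution. The push-forward identity reads off the same grand joint: $\cL_3$-$\tau$ consistency applied to this maximal query says exactly that the $\tau$-fiber sum of the low-level type distribution equals the high-level type distribution, i.e.\ $\tau_{\*U}(P(\*U_L'))=P(\*U_H')$, and surjectivity onto the trimmed support follows since every high-level type carrying positive mass must receive it from some low-level type. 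I would therefore organize the forward direction around first proving, as a lemma, that $\cL_3$-$\tau$ consistency is equivalent to this grand-joint type-distribution characterization, and then reading (i) and (ii) off from it.
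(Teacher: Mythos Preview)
The paper does not actually prove this proposition: it is stated in Appendix~A.3 as a result imported from \citet[Prop.~1]{xia:bareinboim24}, ending with a $\blacksquare$ but no accompanying \texttt{proof} environment. So there is no in-paper argument to compare your proposal against.

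That said, your plan is sound and is the natural way to establish the result. The ($\Leftarrow$) direction is routine once you invoke Eq.~\ref{eq:tau-u-compatibility} term-by-term and push forward along $\tau_{\*U}$. For the ($\Rightarrow$) direction, the canonical-SCM construction is the right tool, and your key observation---that $\cL_3$-$\tau$ consistency applied to the \emph{grand joint} counterfactual (all variables under all hard interventions) pins down the joint type distribution---is exactly what makes the argument go through. In particular, to see why $\tau_{\*U}$ is well-defined on the support, you can make explicit the pairwise version: for any $\*x_L^{(1)},\*x_L^{(2)}$ with $\tau(\*x_L^{(1)})=\tau(\*x_L^{(2)})=\*x_H$, consistency of the two-term query $(\*V_{L[\*x_L^{(1)}]},\*V_{L[\*x_L^{(2)}]})$ against the degenerate high-level counterpart $(\*V_{H[\*x_H]},\*V_{H[\*x_H]})$ forces $P\bigl(\tau(\*V_{L[\*x_L^{(1)}]})\neq\tau(\*V_{L[\*x_L^{(2)}]})\bigr)=0$, which in the finite-domain setting of the paper is exactly invariance on every supported type. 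Your choice to trim both exogenous domains to the supports of the respective type distributions is also the clean way to secure surjectivity of $\tau_{\*U}$ without needing invariance off-support; just note explicitly that restricting to the support preserves all of $\cL_3$, since zero-probability types contribute nothing to any Layer~3 valuation.
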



For abstraction inference, C-DAGs can often be leveraged in place of causal diagrams, defined below.

\begin{definition}[Cluster Causal Diagram (C-DAG) {\citep[Def.~1]{anand:etal23}}]
    \label{def:cdag}
    Given a causal diagram $\cG = \langle \*V, \*E \rangle$ and an admissible clustering $\bbC = \{\*C_1, \dots, \*C_k\}$ of $\*V$, construct a graph $\cG_{\bbC} = \langle \bbC, \*E_{\bbC}\rangle$ over $\bbC$ with a set of edges $\*E_{\bbC}$ defined as follows:
    \begin{enumerate}
        \item A directed edge $\*C_i \rightarrow \*C_j$ is in $\*E_{\bbC}$ if there exists some $V_i \in \*C_i$ and $V_j \in \*C_j$ such that $V_i \rightarrow V_j$ is an edge in $\*E$.

        \item A dashed bidirected edge $\*C_i \leftrightarrow \*C_j$ is in $\*E_{\bbC}$ if there exists some $V_i \in \*C_i$ and $V_j \in \*C_j$ such that $V_i \leftrightarrow V_j$ is an edge in $\*E$.
        \hfill $\blacksquare$
    \end{enumerate}
\end{definition}

This paper shows that they are insufficient for inferences when the AIC does not hold, but they are used as the base graph for constructing projected C-DAGs.

\subsection{Proofs of Sec.~\ref{sec:soft-abs}}
In this section, we prove the theoretical results stated in Sec.~\ref{sec:soft-abs}.

The first observation is that although the AIC is a property of the entire abstraction, one can clearly distinguish individual high-level variables that violate the AIC, as shown in the following definition.

\begin{definition}[AIC Violation Set]
    \label{def:aic-violators}
    Let $\cM_L$ be an SCM defined over $\*V_L$ and $\tau: \cD_{\*V_L} \rightarrow \cD_{\*V_H}$ be a constructive abstraction function w.r.t.~ clusters $\bbC$ and $\bbD$. Let $\*V_H^\dagger \subseteq \*V_H$ be the set of high-level variables such that $V_{H, i} \in \*V_H^\dagger$ iff there exists $V_{H, j} \in \*V_H$ with $V_{H, i} \in \Pai{V_{H, j}}$ such that Eq.~\ref{eq:aic} is violated for $\*C_j$, some $\*u \in \cD_{\*U}$, and some $\*v_1, \*v_2 \in \cD_{\*V_L}$ where $\*v_1$ and $\*v_2$ only differ in the values associated with $\*C_i$ ($\*C_i$ and $\*C_j$ are the corresponding clusters of $\bbC$ respectively). $\*V_H^\dagger$ is called the AIC violation set of $\tau$ w.r.t.~$\cM_L$.
    \hfill $\blacksquare$
\end{definition}

In words, a high-level variable is an AIC violator if two of its values that have different effects on its children are clustered together (e.g., $X$ is an AIC violator in Ex.~\ref{ex:noaic-issue}). Now recall the definition of an SCM projection.

\begin{restatable}[SCM Projection]{proposition}{scmproj}
    \label{def:scm-proj}
    Given an SCM $\cM = \langle \*U, \*V, \cF, P(\*U) \rangle$, there exists an SCM $\cM' = \langle \*U, \*W, \cF', P(\*U) \rangle$ such that, for all $\*u \in \cD_{\*U}$, $\*X \subseteq \*W$, and $\*x \in \cD_{\*X}$,
    \begin{equation}
        \cM_{\*x}(\*u)[\*W] = \cM'_{\*x}(\*u)
    \end{equation}
    $\cM'$ is called an SCM projection of $\cM$ over $\*W$.
    \hfill $\blacksquare$
\end{restatable}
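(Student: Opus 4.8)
The plan is to project out the variables of $\*V \setminus \*W$ one at a time and then compose the steps. First I would fix a topological order $\prec$ of the endogenous variables of $\cM$ (which exists since $\cM$ is recursive) and reduce, by induction on $|\*V \setminus \*W|$, to the case of removing a single variable $V^* \in \*V \setminus \*W$.

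For the single-variable step I would build $\cM'' = \langle \*U, \*V \setminus \{V^*\}, \cF'', P(\*U) \rangle$ by leaving the exogenous part untouched and ``inlining'' the mechanism of $V^*$ into each of its children: for a variable $V$ that is not a child of $V^*$ set $f''_V = f_V$, and for a child $C$ of $V^*$ let $f''_C$ take as endogenous arguments $(\Pai{C} \setminus \{V^*\}) \cup \Pai{V^*}$ and as exogenous arguments $\Ui{C} \cup \Ui{V^*}$, first compute $f_{V^*}(\pai{V^*}, \ui{V^*})$ internally, and then return $f_C$ evaluated at that recomputed value of $V^*$ together with the remaining parents of $C$. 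The only new directed edges are $W' \to C$ with $W' \in \Pai{V^*}$ and $C$ a child of $V^*$; since $W' \prec V^* \prec C$, the restriction of $\prec$ to $\*V \setminus \{V^*\}$ is still a valid topological order, so $\cM''$ is a well-defined recursive SCM with the same $\*U$ and $P(\*U)$.

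The core of the argument is to show that for every $\*u \in \cD_{\*U}$, every $\*X \subseteq \*V \setminus \{V^*\}$, and every $\*x \in \cD_{\*X}$ one has $\cM_{\*x}(\*u)[\*V \setminus \{V^*\}] = \cM''_{\*x}(\*u)$. I would prove this by induction along $\prec$: evaluating both submodels in that order, I would argue that every surviving variable receives the same value in $\cM_{\*x}(\*u)$ and in $\cM''_{\*x}(\*u)$. For intervened variables and for variables that are not children of $V^*$ this is immediate from the induction hypothesis, since their mechanisms and (surviving) arguments coincide. \emph{The hard (and only subtle) part} is a child $C$ of $V^*$ that is not intervened on: there I must check that the value of $V^*$ recomputed inside $f''_C$ coincides with the value $V^*$ actually takes in $\cM_{\*x}(\*u)$. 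This is exactly where the restriction $\*X \subseteq \*W$ (hence $V^* \notin \*X$) is used: $V^*$ is never intervened on, so its mechanism $f_{V^*}$ survives mutilation, and its parents $\Pai{V^*}$ precede $C$ in $\prec$ and thus already match by the induction hypothesis, forcing the recomputed value to agree. Finally I would compose: removing the variables of $\*V \setminus \*W$ in any order yields a chain of recursive SCMs $\cM = \cM^{(0)}, \dots, \cM^{(m)} = \cM'$ over shrinking variable sets ending at $\*W$, all sharing $\*U$ and $P(\*U)$ and all admitting the restriction of $\prec$ as a valid topological order; since $\*W$ survives every step, every $\*X \subseteq \*W$ is a legitimate intervention at each stage, and chaining the single-step identities over all such $\*u$, $\*X$, $\*x$ gives $\cM_{\*x}(\*u)[\*W] = \cM'_{\*x}(\*u)$, as required.
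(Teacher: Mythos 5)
Your proposal is correct and follows essentially the same route as the paper's proof: both construct $\cM'$ by inlining the mechanisms of the removed variables into their children's functions and rely on the fact that $\*X \subseteq \*W$ means removed variables are never intervened on, so their recomputed values agree with those in $\cM_{\*x}(\*u)$. The only difference is organizational — you remove one variable at a time and compose, while the paper performs the recursive substitution in a single pass — and your version is, if anything, more explicit about the inductive verification along the topological order.
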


\begin{proof}
    We show how to construct $\cM'$. For each $Y \in \*V$ in topological order according to the inputs of the functions of $\cF$, choose $f'_Y \gets f_Y(\Ui{Y}, \Pai{Y})$, where for each $X \in \Pai{Y}$,
    \begin{enumerate}
        \item If $X \in \*W$, then keep $X$ as an input of $f'_Y$;
        \item Otherwise if $X \notin \*W$, then replace $X$ with $f'_X(\Ui{X}, \Pai{X})$. Denote $\Ui{Y}'$ and $\Pai{Y}'$ as the new exogenous variables and parents of $Y$ after recursively applying this rule until all endogenous inputs are in $\*W$.
    \end{enumerate}
    Then, construct $\cF' = \{f'_Y; Y \in \*W\}$ and $\cM' = \langle \*U, \*W, \cF', P(\*U)\rangle$. Note that for all $\*u \in \cD_{\*U}$, $\*X \subseteq \*W$, and $\*x \in \cD_{\*X}$,
    \begin{align}
        & \cM_{\*x}(\*u)[\*W] \\
        &= \*W_{\*x}(\*u) \\
        &= \{f_Y(\ui{Y}, \Pai{Y[\*x]}(\*u)): Y \in \*W\} \\
        &= \{f'_Y(\ui{Y}', \Pai{Y[\*x]}'(\*u)): Y \in \*W\} \\
        &= \cM'_{\*x}(\*u).
    \end{align}
\end{proof}

Note that a version of this proposition was proven in \citet{lee:bar19a}, specifically for all $\cL_2$ queries of $\cM'$. Our proof uses a similar argument, but we show that the implied result is stronger: $\cM'$ matches $\cM$ on the SCM level for all exogenous settings $\*u$ and interventions $\*x$. This implies not only matching in $\cL_2$ query but also $\cL_3$ queries.

\partialscmproj*

\begin{proof}
    $\cM'$ can be created through Alg.~\ref{alg:map-abstraction}, and Thm.~\ref{thm:alg-construct-mh} proves that Alg.~\ref{alg:map-abstraction} is sound. See the proof of Thm.~\ref{thm:alg-construct-mh} for details on this construction.
\end{proof}

\algconstructmh*

\begin{proof}
    Let $\cM_H$ be the output from Alg.~\ref{alg:map-abstraction} given $\cM_L$ and $\tau$ constructed from clusters $\bbC$ and $\bbD$. Let $\tau(Q) = P(Y_{H, *} = \*y_{H, *})$ be any arbitrary high-level query from $\cL_3(\cM_H)$, and let $Q = \sum_{\*y_{L, *} \in \cD_{Y_{L, *}}(\*y_{H, *})}P(\*Y_{L, *} = \*y_{L, *})$ be its low-level counterpart. Without loss of generality, we can assume that the set of AIC violators $\*V_H^{\dagger} = \*V_H$, since any variable $V \notin \*V_H^{\dagger}$ can be mapped by a trivial $\delta$ that ignores $V^u$.

    We first show that $\cM_H$ is a projected abstraction of $\cM_L$. First, consider the SCM $\cM'_H$ defined over the variables $\*V'_H = \tau'(\*V_L)$, where $\tau'$ is the constructive abstraction function constructed from the same intervariable clusters $\bbC$ and the trivial intravariable clusters $\bbD' = \cD_{\*V_L}$ (i.e., each value of $\*V_L$ is its own cluster, and $\bbD$ is ignored). Note that each $V_j \in \*V_H'$ corresponds to a cluster $\*C_j \in \bbC$. Suppose $\cM'_H$ is constructed such that it is $\cL_3$-$\tau$ consistent with $\cM_L$, which is possible through Alg.~1 of \citet{xia:bareinboim24}. Then, Prop.~\ref{prop:abs-connect} states that $\cM'_H$ must be $\cL_3$-consistent with a constructive $\tau$-abstraction of $\cM_L$. Without loss of generality, suppose $\cM_H'$ is this constructive $\tau$-abstraction.

    For any $\*C_j \in \bbC$, one can construct variables $X_H^o$ and $X_H^u$ such that $X_H^o = \tau(\*C_j)$ and there exists a function $\delta$ such that $\delta(X_H^o, X_H^u) = \*C_j$, as done so in line 4 of the algorithm. This can be done by simply giving $X_H^u$ an arbitrarily large domain and using $X_H^u$ to disambiguate any information lost in the transformation from $\*C_j$ to $X_H^o$ when constructing $\delta$. Note that in the construction of $\cM_H$, the variables of $X_H^u$ are placed into $\*U_H$.
    
    To show that $\cM_H$ is a projected abstraction, we must show that it is a partial SCM projection of $\cM'_H$. Looking at Prop.~\ref{prop:partial-scm-projection}, simply choose $\*W = \*V'_H$. In Alg.~\ref{alg:map-abstraction}, each $\*C_j \in \bbC$ is split into $\*C_j^o, \*C_j^u$ such that $\delta(\*C_j^o, \*C_j^u) = \*C_j$. Denote $\*V_H^o$ and $\*V_H^u$ as the corresponding sets of variables in $\*V'_H$. By construction, indeed $\*U_H = \*U_L \cup \*V_H^u$, and $\*V_H = \*V_H^o$. Fix $\*u_L \in \*U_L$, $\*X \subseteq \*V'_H$, and $\*x \in \cD_{V'_H}$. Let $\delta(\*v_H^o, \*v_H^u) = \*V'_{H[\*x]}(\*u_L)$. Let $\*Z^u = \*V_H^u \setminus \*X^u$ and $\*z^u$ be the corresponding values from $\*w_{\*x}^u$. Then, observe that
    \begin{align}
        & \*v_H^o \\
        &= \tau(\*V'_{H[\*x]}(\*u_L)) \\
        &= \tau(\{f'_Y(\ui{Y}, \Pai{Y[\*x]}'(\*u)): Y \in \*V_H'\}) \\
        &= \tau(\{f'_Y(\ui{Y}, \delta(\Pai{Y[\*x]}^o(\*u), \Pai{Y[\*x]}^u(\*u))): Y \in \*V_H'\}) \\
        &= \tau(\{f_{H, Y}(\ui{Y}, \Pai{Y[\*x]}^o(\*u), \Pai{Y[\*x]}^u(\*u)): Y \in \*V_H\}) \\
        &= \cM_{H[\*x_o]}(\*u_L, \*x^u, \*z^u),
    \end{align}
    matching Eq.~\ref{eq:partial-scm-projection}.

    Now we show that $\cM_H$ is $\cL_3$-$\tau$ consistent with $\cM_L$. Denote $\*x_{L, *}$ and $\*x_{H, *}$ as the corresponding sets of interventions of $Q$ and $\tau(Q)$ respectively, and denote $\*y_{L,[\*x_{L, *}]}$ as the the values of $\*y_{L, *}$ specifically under the hard intervention $\*x_{L, *}$ (as opposed to the soft interventions under $\sigma_{\*X_{L, i}}$). Denote $\cD_{\*U_L}(\*y_{L,[\*x_{L, *}]}) \subseteq \cD_{\*U_L}$ as the values of $\*U$ such that $\*Y_{L,[\*x_{L, *}]}(\*u_L) = \*y_{L, *}$ (similar notation applies to $\cD_{\*U_H}$).

    Now observe that
    {
    \allowdisplaybreaks
    \begin{align}
        & Q^{\cM_L} \\
        &= \sum_{\*y_{L, *} \in \cD_{Y_{L, *}}(\*y_{H, *})}P^{\cM_L}(\*Y_{L, *} = \*y_{L, *}) \label{eq:construct-proof-1} \\
        &= \sum_{\*y_{L, *} \in \cD_{Y_{L, *}}(\*y_{H, *})}\sum_{\*x_{L, *} \in \cD_{X_{L, *}}(\*x_{H, *})} P^{\cM_L}(\*Y_{L, *} = \*y_{L, *} \mid \sigma_{\*X_{L, *}} = \*x_{L, *})P(\sigma_{\*X_{L, *}} = \*x_{L, *}) \label{eq:construct-proof-2} \\
        &= \sum_{\*y_{L, *}, \*x_{L, *}} P(\*U_L \in \cD_{\*U_L}(\*y_{L,[\*x_{L, *}]}))P(\sigma_{\*X_{L, *}} = \*x_{L, *}) \label{eq:construct-proof-3} \\
        &= \sum_{\*y_{L, *}, \*x_{L, *}} P(\*U_L \in \cD_{\*U_L}(\*y_{L,[\*x_{L, *}]})) \prod_{\*c_j \in \*x_{L, *}}P(\sigma_{\*C_j} = \*c_j) \label{eq:construct-proof-4} \\
        &= \sum_{\*y_{L, *}, \*x_{L, *}} P(\*U_L \in \cD_{\*U_L}(\*y_{L,[\*x_{L, *}]})) \prod_{\*c_j \in \*x_{L, *}}P(\*c_j \mid \tau(\*c_j) = v_{H, j}, \Pai{V_{H, j}}(\Ui{\*C_j}), \*R_{\*V_H^c(V_{H, j})}(\Ui{\*C_j})) \label{eq:construct-proof-5} \\
        &= \sum_{\*y_{L, *}, \*x_{L, *}} P(\*U_L \in \cD_{\*U_L}(\*y_{L,[\*x_{L, *}]})) \prod_{\*c_j \in \*x_{L, *}}P(\delta(\*c_j^o, \*C_j^u) = \*c_j \mid \*U_L) \label{eq:construct-proof-6} \\
        &= \sum_{\*y_{L, *}, \*x_{L, *}} P(\*U_L \in \cD_{\*U_L}(\*y_{L,[\*x_{L, *}]}))P(\delta(\*x_{H, *}, \*X_{H, *}^u) = \*x_{L, *} \mid \*U_L) \label{eq:construct-proof-7} \\
        &= \sum_{\*y_{L, *}, \*x_{L, *}} P(\*U_L \in \cD_{\*U_L}(\*y_{L,[\*x_{L, *}]}), \delta(\*x_{H, *}, \*X_{H, *}^u) = \*x_{L, *}) \label{eq:construct-proof-8} \\
        &= P\left(\bigvee_{\*y_{L, *} \in \cD_{Y_{L, *}}(\*y_{H, *}), \*x_{L, *} \in \cD_{X_{L, *}}(\*x_{H, *})}\*U_L \in \cD_{\*U_L}(\*y_{L,[\*x_{L, *}]}), \delta(\*x_{H, *}, \*X_{H, *}^u) = \*x_{L, *} \right) \label{eq:construct-proof-9} \\
        &= P((\*U_L, \*V_H^u) \in \cD_{\*U_H}(\*y_{H, *})) \label{eq:construct-proof-10} \\
        &= P(\*U_H \in \cD_{\*U_H}(\*y_{H, *})) \label{eq:construct-proof-11} \\
        &= \tau(Q)^{\cM_H}. \label{eq:construct-proof-12}
    \end{align}
    }

    Explaining each line, line \ref{eq:construct-proof-1} starts by applying the definition of $Q$. Since the interventions $\*x_{L, *}$ are determined through soft interventions $\sigma$, we can expand the soft intervention through all possible values of $\*x_{L, *}$ (via marginalization), as done so in line \ref{eq:construct-proof-2}. The first term is simply computed as the probability of all values of $\*U_L$ where $\*Y_{L,[\*x_{L, *}]}(\*u_L) = \*y_{L, *}$ (Def.~\ref{eq:def:l3-semantics}), resulting in line \ref{eq:construct-proof-3}. The second term can be broken down into the soft interventions of each intravariable cluster (line \ref{eq:construct-proof-4}), whose probability is computed through Eq.~\ref{eq:low-intervention-general}, resulting in line \ref{eq:construct-proof-5}. Line \ref{eq:construct-proof-6} is true by construction of $\cM_H$, through line 8 of the algorithm. Finally, we consolidate all terms back into $\*x_{L, *}$ and merge back into the joint distribution in lines \ref{eq:construct-proof-7} and \ref{eq:construct-proof-8}. Line \ref{eq:construct-proof-9} holds because the probabilities of each individual value of $\*y_{L,[\*x_{L, *}]}$ are disjoint since $\*X_{L, *}$ and $\*Y_{L, *}$ can both only be equal to one value at a time. Line \ref{eq:construct-proof-10} holds since $\*U_H = \*U_L \cup \*V_H^u$ and by construction of line 9 in the algorithm, we have exhausted every possible value that $\cM_H((\*u_L, \*v_H^{u})) = \*y_{H, *}$. This allows us to finish the comparison with $\tau(Q)$ on lines \ref{eq:construct-proof-11} and \ref{eq:construct-proof-12}.

    Therefore, $Q^{\cM_L} = \tau(Q)^{\cM_H}$ for all $\tau(Q) \in \cL_3(\cM_H)$, concluding the proof.
\end{proof}


\subsection{Proofs of Sec.~\ref{sec:inference}}
The proofs in this section are concerned with the properties of the partially projected C-DAG in Def.~\ref{def:proj-cdag}.

First, we must define what it means for a causal graph to be ``sufficient''. In general, the role of causal graphs in causal inference tasks is typically to encode the constraints of the model, which are useful for allowing one to make inferences of higher layers using lower layer data. These constraints, on layers 1, 2, and 3 of the PCH, can be described by the Counterfactual Bayesian Network, defined below.
\begin{definition}[Counterfactual Bayesian Network (CTF-BN) {\citep[Def.~D.1, D.2]{correa2024ctfcalc}}]
    Let $\*P_{**}$ be the collection of all distributions of the form $P(W_{1[x_1]}, W_{2[x_2]}, \dots)$, where $W_i \in \*V$, $\*X_i \subseteq \*V$, $\*x_i \in \cD_{\*X_i}$. A directed acyclic graph (possibly with bidirected edges) $\cG$ is a Counterfactual Bayesian Network for $\*P_{**}$ if:
    \begin{enumerate}[label=(\roman*)]
        \item (Independence Restrictions) Let $\*W_*$ be a set of counterfactuals of the form $W_{\pai{w}}$, $\*Z_1, \dots, \*Z_{l}$ the c-components of $\cG[\*V(\*W_*)]$ (two variables are in the same c-component if there is a bidirected path between them in $\cG$ within the variables $\*V(\*W_*)$), and $\*Z_{1*}, \dots, \*Z_{l*}$ the corresponding partition over $\*W_*$. Then $P(\*W_*)$ factorizes as
        \begin{equation}
            \label{eq:ctf-factorization}
            P \left( \bigwedge_{W_{\pai{w} \in \*W_*}} W_{\pai{w}} \right) = \prod_{j=1}^{l} P \left( \bigwedge_{W_{\pai{w}} \in \*Z_{j*}} W_{\pai{w}} \right).
        \end{equation}

        \item (Local Exclusion Restrictions) For every variable $Y \in \*V$ with parents $\Pai{y}$ for every set $\*Z \subseteq \*V \setminus (\Pai{y} \cup \{Y\})$, and any counterfactual set $\*W_{*}$, we have
        \begin{equation}
            \label{eq:exclusion}
            P(Y_{\pai{y}, \*z}, \*W_*) = P(Y_{\pai{y}}, \*W_*).
        \end{equation}

        \item (Local Consistency) For every variable $Y$ with parents $\Pai{y}$, let $\*X \subseteq \Pai{y}$, then for every set $\*Z \subseteq \*V \setminus (\*X \cup \{Y\})$, and any set of counterfactuals $\*W_*$, we have
        \begin{equation}
            P(Y_{\*z} = y, \*X_{\*z} = \*x, \*W_*) = P(Y_{\*z \*x} = y, \*X_{\*z} = \*x, \*W_*).
        \end{equation}
    \end{enumerate}
    \hfill $\blacksquare$
\end{definition}

Although a full discussion of these constraints is out of the scope of this paper, the two that are particularly of insight in regards to the difference between C-DAGs and projected C-DAGs are points (i) and (ii) in the definition. In words, point (ii) is stating that a lack of a directed edge implies a lack of an interventional effect, and point (i) is says that a lack of a bidirected edge (or lack of unobserved confounding) implies independence of functions. One particularly useful result is that causal diagrams are guaranteed to satisfy the CTF-BN constraints of the distributions induced by the SCM that generated the graph, shown below.

\begin{lemma}[\citep{correa2024ctfcalc}]
    \label{lemma:cg-is-ctfbn}
    For any SCM $\cM$ inducing causal diagram $\cG$, $\cG$ is a CTF-BN for $\cL_3(\cM)$.
    \hfill $\blacksquare$
\end{lemma}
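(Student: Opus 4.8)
The plan is to verify directly that $\cG$ satisfies the three defining conditions of a CTF-BN — independence restrictions, local exclusion, and local consistency — for the collection $\*P_{**}$ of counterfactual distributions induced by $\cM$. Since $\cL_1(\cM), \cL_2(\cM) \subseteq \cL_3(\cM)$ and the three conditions already quantify over arbitrary counterfactual families, it suffices to argue at the level of general counterfactuals $P(W_{1[\*x_1]}, W_{2[\*x_2]}, \dots)$, reading each off from Def.~\ref{def:l3-semantics} as a pushforward of $P(\*U)$ through the relevant collection of potential-response functions.

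I would dispatch local exclusion (ii) and local consistency (iii) first, as both reduce to a statement that is pointwise in $\*u$ and uses only the fact that each $f_V$ reads its inputs from $\Pai{V}\cup\Ui{V}$ — i.e.\ the construction of the directed edges of $\cG$. For exclusion: in the submodel $\cM_{\pai{y},\*z}$ the value of $Y$ is $f_Y(\pai{y},\ui{Y})$, which does not depend on the intervened values $\*z$ of non-parents, so $Y_{\pai{y},\*z}(\*u) = Y_{\pai{y}}(\*u)$ for every $\*u$; the joint indicator against any further counterfactual set $\*W_*$ is therefore unchanged, and integrating over $P(\*u)$ gives the claimed equality. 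For consistency: if $\*X\subseteq\Pai{y}$ and $\*X_{\*z}(\*u)=\*x$, then in $\cM_{\*z}$ the function $f_Y$ already receives $\*x$ as part of its parent input, so additionally fixing $\*X\gets\*x$ leaves $Y$'s value untouched, giving $Y_{\*z}(\*u)=Y_{\*z\*x}(\*u)$ precisely on the event $\{\*X_{\*z}(\*u)=\*x\}$, which is all that is needed; again integrate.

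The independence restriction (i) is the substantive part and where I expect the real work to lie. Here the counterfactuals are of the aligned form $W_{\pai{w}}$, so $W_{\pai{w}}(\*u)=f_W(\pai{w},\ui{W})$ is a deterministic function of the exogenous parents $\Ui{W}$ alone, and the joint law of the family is the pushforward of $P(\*U)$ through these maps. Partitioning $\*V(\*W_*)$ into the c-components $\*Z_1,\dots,\*Z_l$ of $\cG[\*V(\*W_*)]$, the factorization in Eq.~\ref{eq:ctf-factorization} follows once one knows the exogenous bundles $\{\Ui{W}:W\in\*Z_j\}$ are mutually independent across $j$: then the indicator $\mathbf{1}\{\*Z_{j*}(\*u)=\*z_j\}$ depends on $\*u$ only through $\{\Ui{W}:W\in\*Z_j\}$, the indicators for distinct $j$ are independent, and the integral of Def.~\ref{def:l3-semantics} splits as the stated product. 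The crux — and the main obstacle — is making precise the passage from ``no bidirected edge (equivalently, no bidirected path within $\*V(\*W_*)$) between distinct c-components'' to \emph{joint} independence of the corresponding exogenous groups; in the non-Markovian setting this is a genuine bookkeeping issue, and I would handle it either by invoking the structural analysis of \citet{correa2024ctfcalc} directly, or by the standard reduction in which $P(\*U)$ is taken to factorize over the confounded components of $\cG$, so that the bundles attached to different c-components of any induced subgraph are independent by construction. Assembling (i)–(iii) then yields that $\cG$ is a CTF-BN for $\cL_3(\cM)$.
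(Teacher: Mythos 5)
The paper does not actually prove this lemma: it is imported wholesale as a result of \citet{correa2024ctfcalc}, with no proof environment following the statement. Your proposal therefore does strictly more than the paper, attempting a direct verification of the three CTF-BN conditions from the SCM semantics. Your treatment of local exclusion (ii) and local consistency (iii) is correct and is the standard pointwise-in-$\*u$ argument; the only compression worth flagging in (iii) is that ``$f_Y$ already receives $\*x$ as part of its parent input'' implicitly also requires that the remaining parents $\Pai{y}\setminus\*X$ take the same values in $\cM_{\*z}$ and $\cM_{\*z\*x}$ on the event $\{\*X_{\*z}(\*u)=\*x\}$ --- these parents may be descendants of $\*X$, so you need the composition property $V_{\*z}(\*u)=V_{\*z\*x}(\*u)$ applied recursively along a topological order, not just the local reading of $f_Y$. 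That is routine but should be said.

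On the independence restriction (i), you have correctly located the only genuinely nontrivial content of the lemma. With the paper's Def.~\ref{def:cg}, a missing bidirected edge between $V_i$ and $V_j$ encodes only \emph{pairwise} independence of $\Ui{V_i}$ and $\Ui{V_j}$, whereas Eq.~\ref{eq:ctf-factorization} needs \emph{mutual} independence of the exogenous bundles attached to distinct c-components of $\cG[\*V(\*W_*)]$; pairwise does not imply mutual, so the factorization does not follow from the graph construction as literally stated. Your two proposed escapes --- invoking the structural analysis of \citet{correa2024ctfcalc}, or strengthening the semantics so that $P(\*U)$ factorizes over confounded components --- are exactly the standard resolutions, and the second is the assumption under which the cited result is actually established. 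Since the first option collapses to what the paper itself does (cite the lemma), your proposal is best read as a correct self-contained proof of (ii) and (iii) plus an honest reduction of (i) to the factorization convention on $P(\*U)$; with that convention made explicit, the argument goes through.
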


Using the constraints of the CTF-BN, we can state the results of Sec.~\ref{sec:inference} more formally.

\begin{customprop}{\ref{prop:cdag-insufficiency}}[C-DAG Insufficiency (Formal)]
    There exists SCM $\cM_L$ and constructive abstraction function $\tau$ defined over clusters $\bbC$ and $\bbD$ with C-DAG $\cG_{\bbC}$ such that, for $\cM_H$ that is $\cL_3$-$\tau$ consistent with $\cM_L$, $\cG_{\bbC}$ is not a CTF-BN for $\cL_3(\cM_H)$.
    \hfill $\blacksquare$
    
    \begin{proof}
        Since there exist projected C-DAGs that have a superset of the edges of the corresponding C-DAG, this result is a consequence of the necessity of projected C-DAGs, stated in Thm.~\ref{thm:proj-cdag-completeness}.
    \end{proof}
\end{customprop}

The above result states that C-DAGs are insufficient for the general case abstraction problem, where the AIC may be violated. The below result shows that projected C-DAGs have precisely the correct constraints.

\begin{lemma}
    \label{lemma:alg-is-proj-cdag}
    Let $\cM_H$ be the SCM generated from running Alg.~\ref{thm:alg-construct-mh} on $\cM_L$ and $\tau$. Then, the causal diagram of $\cM_H$ is the projected C-DAG of $\cM_L$ over $\*V_H$.
    \hfill $\blacksquare$

    \begin{proof}
        This proof considers a slight modification of Alg.~\ref{thm:alg-construct-mh} that incorporates AIC violators $\*V_H^{\dagger}$. Specifically, in line 9, $\Pai{V}$ can be split into $\Pai{V}^0$ and $\Pai{V}^{\dagger}$, where $\tau(\Pai{V}^0) \cap \*V_H^\dagger = \emptyset$, and $\delta (\pai{V}^o, \pai{V}^u)$ is only applied to $\Pai{V}^\dagger$, while for parents in $\Pai{V}^0$, $\delta$ is replaced by an arbitrary $\pai{V}^0$ such that $\tau(\pai{V}^0) = \pai{V_H}$, since they all map to the same value due to the lack of AIC violation.

        The causal diagram of $\cM_H$ is a graph $\cG_H = \langle \*V_H, \*E \rangle$. $\*E$ must at least contain the edges of the C-DAG $\cG_{\bbC}$, since every function of $\cM_L$ is incorporated into $\cM_H$. Extra edges are only added through line 9 of the algorithm, where $\delta$ may introduce new dependencies through $\pai{V}^u$. If, for some $W \in \Pai{V}$, $W \notin \*V_H^\dagger$, then new edges are not added w.r.t.~$W$. Otherwise, the existence of $W^u$ may confound other functions that also take $W^u$ as an input, implying rule 3 of Def.~\ref{def:proj-cdag}. For the other rules, as stated in line 8, $W^u$ depends on its own parents $\Pai{W}$ (or the grandparents of $V$), which implies rule 1 of Def.~\ref{def:proj-cdag}. Additionally, $W^u$ also depends on $\ui{\tau(W)}^c$, which implies a dependence on any unobserved confounder that influences the parents of $W$, implying rule 2 of Def.~\ref{def:proj-cdag}. No other dependences are introduced through lines 8 and 9 of the algorthm, meaning that $\*E$ contains precisely the edges of $\cG^{\dagger}$ plus those introduced by the rules of Def.~\ref{def:proj-cdag}. Hence, $\cG_H = \cG_{\bbC}^\dagger$.
    \end{proof}
\end{lemma}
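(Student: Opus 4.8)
The plan is to identify the causal diagram $\cG_H = \langle \*V_H, \*E \rangle$ that $\cM_H$ induces through Def.~\ref{def:cg} and show that it coincides with the partially projected C-DAG $\cG_{\bbC}^\dagger$ of Def.~\ref{def:proj-cdag}. The node sets already match, since $\cM_H$ is built over $\*V_H \cong \bbC$, so the work is entirely about edges: I would prove $\*E = \*E_{\bbC}^\dagger$ by double inclusion. A convenient first reduction is that a cluster $\*C_j \notin \*V_H^\dagger$ (Def.~\ref{def:aic-violators}) can be split in line~4 of Alg.~\ref{alg:map-abstraction} by a trivial $\delta$ whose $\*C_j^u$ is degenerate and feeds no mechanism, so only AIC violators introduce genuinely new exogenous arguments; accordingly in line~9 I split each endogenous parent set $\Pai{V}$ into the AIC-violating part (on which $\delta$ truly reconstructs a lost low-level value) and the rest (on which $\delta$ is trivial).

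For $\*E \supseteq \*E_{\bbC}^\dagger$ I would argue in two stages. Stage one, $\*E \supseteq \*E_{\bbC}$: each $f_i^H$ is assembled in line~9 from the low-level mechanisms $f^L_V$, $V \in \*C_i$, keeping their exogenous inputs $\ui{V}$ and keeping the observed parents $\pai{V}^o$ (the clusters containing the original low-level parents) among its endogenous inputs, and $P(\*U_H)$ restricted to the $\ui{V}$'s equals $P(\*U_L)$; hence every directed and bidirected C-DAG edge survives in $\cG_H$. Stage two is the added edges: fix an AIC violator $X = V_{H,i}$ with cluster $\*C_i$ and unfold the line~8 assignment, writing $\*C_i^u$ as a deterministic function of an independent noise source together with the quantities Eq.~\ref{eq:low-intervention-short} conditions on, namely the parents $\pai{V_{H,i}}$ of $X$ and the confounded exogenous parents $\ui{V_{H,i}}^c$ of $X$. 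Because $f_j^H$ reconstructs $X$'s low-level value through $\delta$ for every child $Y = V_{H,j}$ of $X$, the bundle $\*C_i^u$ is an input to $f_j^H$; reading this through Def.~\ref{def:cg} produces exactly (1) a directed edge $Z \to Y$ for each C-DAG parent $Z$ of $X$ (from the $\pai{V_{H,i}}$-dependence), (2) bidirected edges $Z \leftrightarrow Y$ and $X \leftrightarrow Y$ whenever $Z \leftrightarrow X$ (the confounder inside $\ui{V_{H,i}}^c$ correlates $\*C_i^u$ with $\ui{Z}$ and $\ui{X}$), and (3) a bidirected edge $Z \leftrightarrow Y$ whenever $Z \leftarrow X \to Y$ (then $f_Z^H$ and $f_Y^H$ share the exogenous bundle $\*C_i^u$), i.e.\ rules (1)--(3), applied over all of $\*V_H^\dagger$ in topological order as in the footnote of Def.~\ref{def:proj-cdag}.

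For the reverse inclusion $\*E \subseteq \*E_{\bbC}^\dagger$ I would check that lines~8--9 add nothing else: the endogenous inputs of $f_i^H$ are only $\pai{V}^o$ and, via $\delta$ on AIC-violating parents, their grandparents (precisely the rule-(1) edges); its exogenous inputs are the original $\ui{V}$ and the new bundles $\*C_i^u$; and the law of $\*C_i^u$ assigned in line~8 depends only on fresh noise, on $\pai{V_{H,i}}$, on the value of $X$ itself (a consistency condition, not a new parent of anything), and on $\ui{V_{H,i}}^c$, so the only correlations it creates are with the exogenous inputs of $X$'s confounded neighbors, and it is shared only among children of a common $X \in \*V_H^\dagger$. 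Combining the two inclusions yields $\cG_H = \cG_{\bbC}^\dagger$.

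The main obstacle will be the ``reading off'' step in the second paragraph. Line~8 specifies $\*C_i^u$ as an exogenous variable whose law is given conditionally on endogenous quantities ($\pai{V_{H,i}}$) and on the somewhat intricate set $\ui{V_{H,i}}^c$, which is not standard SCM syntax, so the argument must justify rewriting $\*C_i^u$ canonically as a function of $\pai{V_{H,i}}$, $\ui{V_{H,i}}^c$, and independent noise, and then verify that Def.~\ref{def:cg} applied to that rewriting gives directed edges exactly from the parents of $X$ (in particular no spurious edge into $X$, since $X$'s own value enters only through the conditioning event $\tau(\*c_i) = v_{H,i}$) and bidirected edges exactly matching the confounding structure around $X$. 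Pinning down $\ui{V_{H,i}}^c$ and matching it to the ``$Z \leftrightarrow X$'' antecedent of rule (2) and the shared-parent configuration of rule (3), especially outside the Markovian case, is where the bookkeeping is most delicate.
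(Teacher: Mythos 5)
Your proposal follows essentially the same route as the paper's proof: both reduce to the observation that non-violators admit a trivial $\delta$, both establish that the C-DAG edges are preserved because line~9 reuses the low-level mechanisms, and both trace the three rules of Def.~\ref{def:proj-cdag} to the three ways the bundle $\*C_i^u$ from line~8 creates dependencies (on $\Pai{V_{H,i}}$ for rule 1, on $\ui{V_{H,i}}^c$ for rule 2, and through being shared among children for rule 3), closing with the claim that nothing else is introduced. Your version is if anything slightly more careful than the paper's, in that you explicitly flag the need to canonically rewrite the conditionally specified law of $\*C_i^u$ as a structural function of fresh noise before reading off edges via Def.~\ref{def:cg} — a step the paper's proof passes over silently.
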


\begin{customthm}{\ref{thm:proj-cdag-completeness}}[Projected C-DAG Sufficiency and Necessity (Formal)]
    Let $\cM_L = \langle \*U_L, \*V_L, \cF_L, P(\*U_L) \rangle$ be a low-level model with causal diagram $\cG$, and let $\tau: \cD_{\*V_L} \rightarrow \cD_{\*V_H}$ be a constructive abstraction function defined over clusters $\bbC$ and $\bbD$. Let $\*V_H^{\dagger}$ be the set of AIC violators of $\tau$. Let $\cG_{\bbC}^{\dagger} = \langle \*V_H, \*E \rangle$ be the partially projected C-DAG of $\cG$ w.r.t.\ $\bbC$ and $\*V_H^{\dagger}$. Let $\cM_H = \langle \*U_H, \*V_H, \cF_H, P(\*U_H) \rangle$ be a high-level model that is $\cL_3$-$\tau$ consistent with $\cM_L$. Then
    \begin{enumerate}
        \item (Sufficiency) $\cG_{\bbC}^{\dagger}$ is a CTF-BN for $\cL_3(\cM_H)$
        \item (Necessity) For any other graph $\cG' = \langle \*V_H, \*E' \rangle$ such that $\cG' \neq \cG$ and $\cG'$ is a CTF-BN for $\cL_3(\cM_H)$, it must be the case that $\*E \subset \*E'$.
    \end{enumerate}
    \hfill $\blacksquare$

    \begin{proof}
        The proof for sufficiency is straightforward. Alg.~\ref{thm:alg-construct-mh} generates $\cM_H$ that is $\cL_3$-$\tau$ consistent with $\cM_L$ by Thm~\ref{thm:alg-construct-mh}. By Lemma \ref{lemma:alg-is-proj-cdag}, the causal diagram of $\cM_H$ is $\cG_{\bbC}^{\dagger}$. Then, by Lemma \ref{lemma:cg-is-ctfbn}, $\cG_{\bbC}^{\dagger}$ must be a CTF-BN for $\cL_3(\cM_H)$.

        The proof for necessity is more involved. We argue that every single edge in $\*E$ must be included for $\cG_{\bbC}^{\dagger}$ to maintain the correct CTF-BN constraints.

        First, at least every edge in the C-DAG $\cG_{\bbC}$ is necessary. This is because every edge in the C-DAG corresponds to an edge in the original graph $\cG$. This edge cannot be removed without adding new constraints to the original graph generated by $\cM_L$.

        Next, we step through each of the three rules of Def.~\ref{def:proj-cdag} and argue that they must hold. For each of the rules, consider the basic case with $Z, X, Y \in \*V_H$ (and their corresponding clusters $\*C_Z, \*C_X, \*C_Y \in \bbC$).
        \begin{enumerate}
            \item If $Z \rightarrow X \rightarrow Y$ in $\cG_{\bbC}$ and $X \in \*V_H^{\dagger}$, consider the query $P(y_{\pai{y}, z})$. According to Eq.~\ref{eq:exclusion}, $P(y_{\pai{y}, z}) = P(y_{\pai{y}})$ if there is no edge from $Z \rightarrow Y$. However, we see that
            \begin{align}
                & P(y_{\pai{y}, z}) \\
                &= P(\*c_{Y[\sigma_{\Pai{\*C_Y}}, \sigma_{\*C_Z}]}) \\
                &= P(\*c_{Y[\sigma_{\Pai{\*C_Y} \setminus \*C_X}, \sigma_{\*C_X}, \sigma_{\*C_Z}]}) \\
                &= \sum_{\*c_X \in \cD_{\*C_X}} P(\*c_{Y[\sigma_{\Pai{\*C_Y} \setminus \*C_X}, \*c_X, \sigma_{\*C_Z}]})P(\sigma_{\*C_X} = \*c_X) \\
                &= \sum_{\*c_X \in \cD_{\*C_X}} P(\*c_{Y[\sigma_{\Pai{\*C_Y} \setminus \*C_X}, \*c_X, \sigma_{\*C_Z}]})P(\*c_X \mid \tau(\*c_X) = x, \pai{\*C_X}, \ui{\*C_X}^c). \label{eq:cdag-nec-step1}
            \end{align}
            Clearly, if $Z \notin \Pai{Y}$, then including $\sigma_{\*C_Z}$ into the low-level query can impact the value of the query, since $Z \in \Pai{X}$, so the right term in Eq.~\ref{eq:cdag-nec-step1} depends on $\sigma_{\*C_Z}$. This would break Eq.~\ref{eq:exclusion}.

            \item If $Z \xdasharrow[<->]{}  X \rightarrow Y$ in $\cG_{\bbC}$ and $X \in \*V_H^{\dagger}$, two types of edges must be considered.
            \begin{enumerate}
                \item If there is no bidirected edge between $Z$ and $Y$, then according to Eq.~\ref{eq:ctf-factorization}, $P(y_{\pai{y}}, z_{\pai{z}}) = P(y_{\pai{y}})P(z_{\pai{z}})$. However, we see that
                \begin{align}
                    & P(y_{\pai{y}}, z_{\pai{z}}) \\
                    &= P(\*c_{Y[\sigma_{\Pai{\*C_Y}}]}, \*c_{Z[\sigma_{\Pai{\*C_Z}}]}) \\
                    &= P(\*c_{Y[\sigma_{\Pai{\*C_Y} \setminus \*C_X}, \sigma_{\*C_X}]}, \*c_{Z[\sigma_{\Pai{\*C_Z}}]}) \\
                    &= \sum_{\*c_X \in \cD_{\*C_X}, \pai{\*C_Z} \in \cD_{\Pai{\*C_Z}}} P(\*c_{Y[\sigma_{\Pai{\*C_Y} \setminus \*C_X}, \*c_X]}, \*c_{Z[\pai{\*C_Z}]})P(\sigma_{\*C_X} = \*c_X)P(\sigma_{\Pai{\*C_Z}} = \pai{\*C_Z}).
                \end{align}
                From here,
                \begin{equation}
                    P(\sigma_{\*C_X} = \*c_X) = P(\*c_X \mid \tau(\*c_X) = x, \pai{\*C_X}, \ui{\*C_X}^c)
                \end{equation}
                and
                \begin{equation}
                    P(\*c_{Y[\sigma_{\Pai{\*C_Y} \setminus \*C_X}, \*c_X]}, \*c_{Z[\pai{\*C_Z}]})
                    = P\left( \*c_{Y[\sigma_{\Pai{\*C_Y} \setminus \*C_X}, \*c_X]}, \bigwedge_{V \in \*C_Z} f_V(\pai{V}, \ui{V}) = v \right)
                \end{equation}
                for $v$ consistent with $\*c_Z$. However, since there is a bidirected edge between $Z$ and $X$, there may be a dependence between $\Ui{V}$ for some $V \in \*C_Z$ and $\ui{\*C_X}^c$. This would make the independence between the two terms $y_{\pai{y}}$, $z_{\pai{z}}$ impossible, violating Eq.~\ref{eq:ctf-factorization}.

                \item If there is no bidirected edge between $X$ and $Y$, then according to Eq.~\ref{eq:ctf-factorization}, $P(y_{\pai{y}}, x_{\pai{x}}) = P(y_{\pai{y}})P(x_{\pai{x}})$. However, following the same argument as above, this cannot hold either because
                \begin{equation}
                    P(\*c_{Y[\sigma_{\Pai{\*C_Y} \setminus \*C_X}, \*c_X]}, \*c'_{X[\pai{\*C_X}]})
                    = P\left( \*c_{Y[\sigma_{\Pai{\*C_Y} \setminus \*C_X}, \*c_X]}, \bigwedge_{V \in \*C_X} f_V(\pai{V}, \ui{V}) = v \right)
                \end{equation}
                for $v$ consistent with $\*c'_X$. Certainly, there could be a dependence between $\*U_V$ for some $V \in \*C_X$ and $\*u_{\*C_X}^c$, since both terms influence the functionality of $\*C_X$.
                
            \end{enumerate}

            \item If $Z \leftarrow X \rightarrow Y$ in $\cG_{\bbC}$ and $X \in \*V_H^{\dagger}$, consider the query $P(y_{\pai{y}}, z_{\pai{z}})$. According to Eq.~\ref{eq:ctf-factorization}, $P(y_{\pai{y}}, z_{\pai{z}}) = P(y_{\pai{y}})P(z_{\pai{z}})$ if there is no bidirected edge between $Z$ and $Y$. However, we see that
            \begin{align}
                & P(y_{\pai{y}}, z_{\pai{z}}) \\
                &= P(\*c_{Y[\sigma_{\Pai{\*C_Y}}]}, \*c_{Z[\sigma_{\Pai{\*C_Z}}]}) \\
                &= P(\*c_{Y[\sigma_{\Pai{\*C_Y} \setminus \*C_X}, \sigma_{\*c_X}]}, \*C_{Z[\sigma_{\Pai{\*C_Z} \setminus \*C_X}, \sigma_{\*C_X}]}).
            \end{align}
            Note that $\sigma_{\*C_X}$ is computed once for both terms, so clearly the two terms cannot be independent as they both depend on $\sigma_{\*C_X}$. This would break Eq.~\ref{eq:ctf-factorization}.
        \end{enumerate}
        With all rules covered, no edge can be removed without breaking the CTF-BN condition, ensuring that the edge set of $\cG_{\bbC}^{\dagger}$ is minimal.
    \end{proof}
\end{customthm}

Finally, we prove how the projected C-DAG can be used for cross-layer inferences by solving the abstraction identification problem. First consider the classical identification problem.

For the following proofs, consider the classical definition of identifiability.
\begin{definition}
    \label{def:classic-id}
    Let $\Omega^*$ be the space containing all SCMs defined over endogenous variables $\*V$. We say that a causal query $Q$ is identifiable (ID) from the available data $\bbZ$ and the causal diagram $\cG$ if $Q(\cM_1) = Q(\cM_2)$ for every pair of models $\cM_1, \cM_2 \in \Omega^*$ such that $\cM_1$ and $\cM_2$ both induce $\cG$ and $\bbZ(\cM_1) = \bbZ(\cM_2)$.
    \hfill $\blacksquare$
\end{definition}

Now we show how abstract identification is equivalent.

\dualabsid*

\begin{proof}
    Let $\Omega_L$ and $\Omega_H$ be the space of SCMs defined over $\*V_L$ and $\*V_H$ respectively, and let $\Omega_L(\cG_{\bbC}^\dagger)$ and $\Omega_H(\cG_{\bbC}^\dagger)$ be their corresponding subsets that induce graph $\cG_{\bbC}^\dagger$. For clarity, $\cM_L \in \Omega_L(\cG_{\bbC}^\dagger)$ if $\cG_{\bbC}^\dagger$ is a partially projected C-DAG of its causal diagram $\cG$ w.r.t.~$\bbC$ and AIC violation set $\*V_H^{\dagger}$. $\cM_H \in \Omega_H(\cG_{\bbC}^\dagger)$ if $\cM_H$ induces $\cG_{\bbC}^\dagger$ as its causal diagram.
    
    If $Q$ is $\tau$-ID from $\cG_{\bbC}^\dagger$ and $\bbZ$, then every pair of $\cM_L \in \Omega_L(\cG_{\bbC}^\dagger), \cM_H \in \Omega_H(\cG_{\bbC}^\dagger)$ such that $\cM_H$ is $\bbZ$-$\tau$ consistent with $\cM_L$ must have $\cM_H$ be $Q$-$\tau$ consistent with $\cM_L$. For all such $\cM_H$, $\bbZ$-$\tau$ consistency and $Q$-$\tau$ consistency with $\cM_L$ implies that $\cM_H$ is $\tau(\bbZ)$-consistent and $\tau(Q)$-consistent by Def.~\ref{def:q-tau-consistency}. For any pair $\cM_1, \cM_2 \in \Omega_H$ that induce $\cG_{\bbC}^\dagger$, $\tau(\bbZ)(\cM_1) = \tau(\bbZ)(\cM_2)$ therefore implies that both $\cM_1$ and $\cM_2$ must be $\bbZ$-$\tau$ consistent with $\cM_L$ and must therefore both be $Q$-$\tau$ consistent, so $\tau(Q)(\cM_1) = \tau(Q)(\cM_2)$. Hence, $\tau(Q)$ is ID from $\cG_{\bbC}$ and $\tau(\bbZ)$ by Def.~\ref{def:classic-id}.

    Conversely, if $\tau(Q)$ is ID from $\cG_{\bbC}^\dagger$ and $\tau(\bbZ)$, then for any $\cM_1, \cM_2 \in \Omega_H$ that induces $\cG_{\bbC}^{\dagger}$ such that $\tau(\bbZ)(\cM_1) = \tau(\bbZ)(\cM_2)$, it must be the case that $\tau(Q)(\cM_1) = \tau(Q)(\cM_2)$. For every $\cM_L \in \Omega_L(\cG_{\bbC}^{\dagger})$, Thm.~\ref{thm:alg-construct-mh} and Lemma \ref{lemma:alg-is-proj-cdag} state that there exists some $\cM_H \in \Omega_H(\cG_{\bbC}^{\dagger})$ that is $\cL_3$-$\tau$ consistent with $\cM_L$, implying that $\cM_H$ is both $\bbZ$-$\tau$ consistent and $Q$-$\tau$ consistent with $\cM_L$. Since all $\cM_H \in \Omega_H(\cG_{\bbC}^\dagger)$ that match in $\tau(\bbZ)$ must also match in $\tau(Q)$, it must be the case that all such $\cM_H$ that are $\bbZ$-$\tau$ consistent with $\cM_L$ must also be $Q$-$\tau$ consistent with $\cM_L$. Hence, by definition, $Q$ is $\tau$-ID from $\cG_{\bbC}^\dagger$ and $\bbZ$.
\end{proof}

\section{Additional Results}
\label{app:add-results}

In this section, we add additional technical results and expand on the ideas presented in the main body.

\subsection{Choosing an Intervention Mapping Definition}
\label{app:choose-soft-intervention}

When the AIC is violated, a high-level intervention may be ambiguous because it is not clear which corresponding low-level intervention is being applied. This section discusses the derivation and reasoning of Eq.~\ref{eq:low-intervention-short}, used in this paper, which assigns a specific soft intervention over the corresponding low-level interventions, as well as explaining why some natural alternatives are undesirable. This disambiguation process is similar to the idea of disjunctive interventions from \citet{pearl:17-r359}, which aim to define interventions that include disjunctions (e.g., $do(X = x_1 \text{ or } X = x_2)$). Although the results of this section are specific to causal abstractions, some of the derivations provide similar or more general results.

Consider a basic setting with only two variables $\*V_L = \{X_L, Y\}$, where $X_L$ is ternary ($\cD_{X_L} = \{x_0, x_1, x_2\}$), and $Y$ is binary ($\cD_{Y} = \{y_0, y_1\}$). Let $\*V_H = \{X_H, Y\}$, where $\cD_{X_H} = \{x_A, x_B\}$, such that
\begin{equation}
    \tau(x_L, y) = 
    \begin{cases}
        (x_A, y) & x_L = x_0 \\
        (x_B, y) & x_L = x_1, x_2,
    \end{cases}
\end{equation}
that is, $x_1$ and $x_2$ are both mapped to the same high-level value $x_B$. Naturally, one may be interested in causal queries on the high-level model such as $P(Y_{X_H = x_B} = y_1)$. However, making no assumptions about the AIC or the structural equations and probability distributions of the low-level model, how would such a quantity be defined on the low-level?

When the AIC holds, the answer is simple, since the AIC would imply that $P(Y_{X_L = x_1} = y_1) = P(Y_{X_L = x_2} = y_1)$. Since both of these values are equal, it must be the case that $P(Y_{X_H = x_B} = y_1) = P(Y_{X_L = x_1} = y_1) = P(Y_{X_L = x_2} = y_1)$. When the AIC does not hold, however, the answer is ambiguous. It is possible that $P(Y_{X_L = x_1} = y_1) \neq P(Y_{X_L = x_2} = y_1)$, so the choice of $P(Y_{X_H = x_B} = y_1)$ is not clear.

To illustrate the full range of possible options of $P(Y_{X_H = x_B} = y_1)$, consider a perspective of the problem akin to the canonical model formulation used for causal partial identification \citep{balke:pea97, zhang:bareinboim21b}. Note that there are eight possible functions from $X_L$ to $Y$, since there are three possible values of $X_L$ and two possible values for $Y$. Define $R_X = f_X(\mathbf{U})$, $R_Y^0 = f_Y(X_L = x_0, \mathbf{U})$, $R_Y^1 = f_Y(X_L = x_1, \mathbf{U}), R_Y^2 = f_Y(X_L = x_2, \mathbf{U})$, all of which are random variables that depend on $\mathbf{U}$. Now define

\begin{equation}
    p_{ijk\ell} = P(R_X = x_i, R_Y^0 = y_j, R_Y^1 = y_k, R_Y^2 = y_{\ell}).
\end{equation}

Note that $Y_{X_L = x_1} = y_1$ holds as long as $R_Y^1 = y_1$ and $Y_{X_L = x_2} = y_1$ holds as long as $R_Y^2 = y_1$. Expanding this result, we get

\begin{align}
    & P(Y_{X_L = x_1} = y_1) \label{eq:soft-q-choice-x1} \\
    &= {\color{purple}p_{0010}} + p_{0011} + {\color{purple}p_{0110}} + p_{0111} \nonumber \\
    &+ {\color{blue}p_{1010}} + p_{1011} + {\color{blue}p_{1110}} + p_{1111} \nonumber \\
    &+ {\color{red}p_{2010}} + p_{2011} + {\color{red}p_{2110}} + p_{2111}, \nonumber \\
    & P(Y_{X_L = x_2} = y_1) \label{eq:soft-q-choice-x2} \\
    &= {\color{purple}p_{0001}} + p_{0011} + {\color{purple}p_{0101}} + p_{0111} \nonumber \\
    &+ {\color{red}p_{1001}} + p_{1011} + {\color{red}p_{1101}} + p_{1111} \nonumber \\
    &+ {\color{blue}p_{2001}} + p_{2011} + {\color{blue}p_{2101}} + p_{2111}. \nonumber
\end{align}

The terms that are colored black are terms that are contained in both equations. This implies that $P(Y_{X_H = x_B} = y_1)$ must at least contain all of the black terms and may potentially contain any of the colored terms to any proportion. In other words,
\begin{align}
&P(Y_{X_H = x_B} = y_1) \label{eq:soft-q-choice-min} \\
&\geq p_{0011} + p_{0111} \nonumber \\
&+ p_{1011} + p_{1111} \nonumber \\
&+ p_{2011} + p_{2111}, \nonumber
\end{align}
and
\begin{align}
&P(Y_{X_H = x_B} = y_1) \label{eq:soft-q-choice-max} \\
&\leq {\color{purple}p_{0010}} + {\color{purple}p_{0110}} + {\color{purple}p_{0001}} + {\color{purple}p_{0101}} + p_{0011} + p_{0111} \nonumber \\
&+ {\color{blue}p_{1010}} + {\color{blue}p_{1110}} + {\color{red}p_{1001}} + {\color{red}p_{1101}} + p_{1011} + p_{1111} \nonumber \\
&+ {\color{red}p_{2010}} + {\color{red}p_{2110}} + {\color{blue}p_{2001}} + {\color{blue}p_{2101}} + p_{2011} + p_{2111}. \nonumber
\end{align}

The question is then how to choose which of these colored terms to include in the definition of $P(Y_{X_H = x_B} = y_1)$. It is entirely possible to define $P(Y_{X_H = x_B} = y_1)$ as simply being equal to $P(Y_{X_L = x_1} = y_1)$ or $P(Y_{X_L = x_2} = y_1)$ (i.e., choosing Eq.~\ref{eq:soft-q-choice-x1} or \ref{eq:soft-q-choice-x2}). It could also be defined as Eq.~\ref{eq:soft-q-choice-min} or Eq.~\ref{eq:soft-q-choice-max}, which can be interpreted as the minimum or maximum possible value of the query. However, these choices are somewhat arbitrary and extreme---it is unlikely that a practitioner would intuitively mean one of these definitions when studying the high-level query $P(Y_{X_H = x_B} = y_1)$.

More specifically, the reason that the above definitions are undesirable is because they do not take into account the nuance of when $X_H = x_B$ should be interpreted as $X_L = x_1$ or as $X_L = x_2$. Indeed, all of the colored terms in the above equations show a disconnect between $Y_{X_L = x_1}$ and $Y_{X_L = x_2}$. For example, $\color{blue}p_{1010}$ represents a case where $R_Y^1 = y_1$ and $R_Y^2 = y_0$, which means that $Y$ will take the value of $y_1$ if $X_L = x_1$ and $y_0$ if $X_L = x_2$. In such cases, it is important to distinguish whether $X_L = x_1$ or $X_L = x_2$. In contrast, both $R_Y^1 = y_1$ and $R_Y^2 = y_1$ for the black terms. By interpreting $X_H = x_B$ as the disjunctive intervention $X_H = x_1 \vee x_2$, it becomes clear that the ambiguity largely has to do with which particular value is used as $X_H$. From the unit-level perspective, how should $x_B$ be interpreted for any particular individual datapoint?

One answer to making this decision is to look at the natural value of the intervened variable. In this case, if the intervention $X_H = x_B$ is applied, one can check if $X_L$ was originally going to be $x_1$ or $x_2$. In such cases, the blue terms would be included, while the red terms would be excluded. For example, in $\color{blue}p_{1010}$, $R_Y^1 = y_1$ and $R_Y^2 = y_0$. However, since $R_X = x_1$, we know that the natural value of $X_L = x_1$, so we would apply the value of $R_Y^1$ instead of $R_Y^2$, implying that $Y = y_1$. In contrast, in $\color{red}p_{2010}$, $R_Y^1$ and $R_Y^2$ are identical, but this time $R_X = x_2$, so we would apply the value of $R_Y^2$ instead of $R_Y^1$, implying that $Y = y_0$. Such a definition would look like
\begin{align}
&P(Y_{X_H = x_B} = y_1) \label{eq:soft-q-choice-natural} \\
&= p_{0011} + p_{0111} + p_{1011} + p_{1111} + p_{2011} + p_{2111} \nonumber \\
&+ {\color{blue}p_{1010}} + {\color{blue}p_{1110}} + {\color{blue}p_{2001}} + {\color{blue}p_{2101}} \nonumber \\
&+ \beta_1{\color{purple}p_{0010}} + \beta_2{\color{purple}p_{0110}} + \beta_3{\color{purple}p_{0001}} + \beta_4{\color{purple}p_{0101}}, \nonumber
\end{align}
for $\beta_1, \beta_2, \beta_3, \beta_4 \in [0, 1]$.

\begin{figure}
\centering
\includegraphics[width=0.5\linewidth]{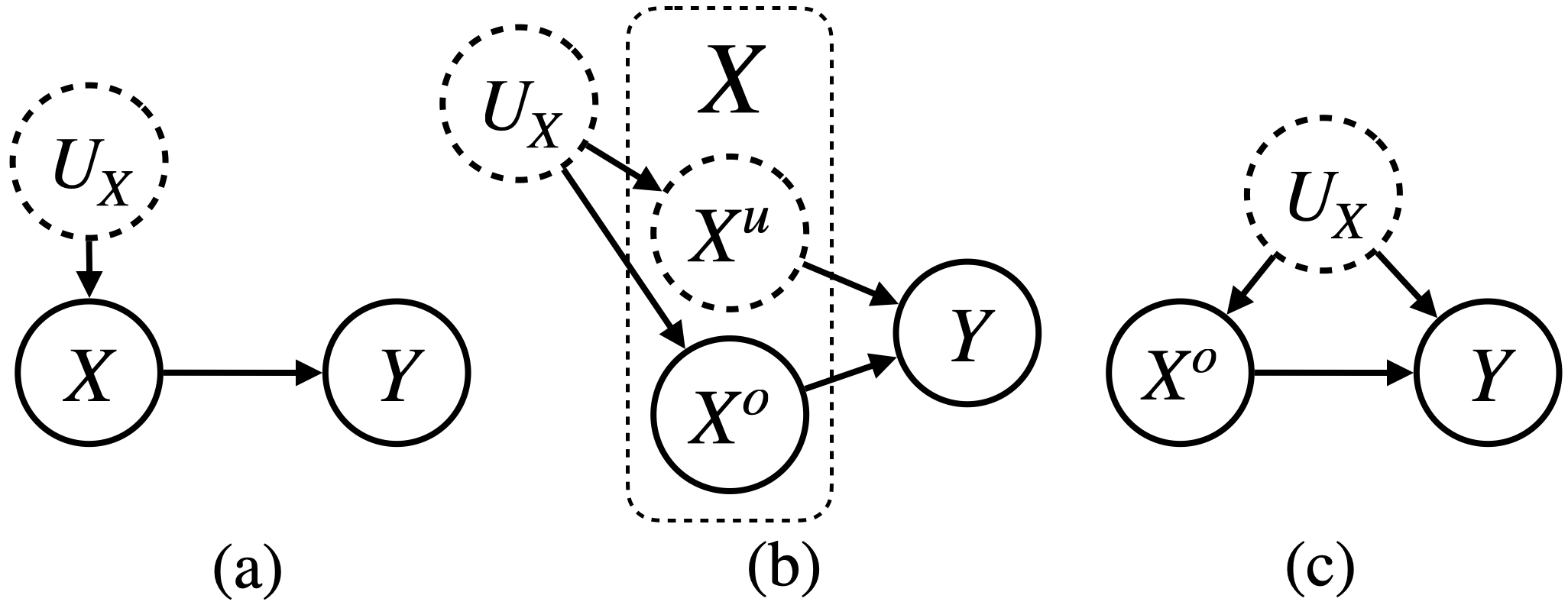}
\caption{The confounding issue of using the natural value in partial projections. (a) In the original system, there is no confounding between $X$ and $Y$. (b) When $X$ is partially projected into $X^o$ and $X^u$ , there is now a path from $U_X$ to $Y$ both through $X^o$ and $X^u$ which is used for calculating the natural value of $X$. (c) The path through $X^u$ results in confounding between the remaining $X^o$ and $Y$.}
\label{fig:partial-proj-confounding}
\end{figure}

This seems like an appealing method of deciding between $x_1$ and $x_2$, but it has many issues. For one, it is unclear what should happen with the purple terms, in which $R_Y^1 \neq R_Y^2$, but $R_X = x_0$. In other words, the natural value takes a value that does not map to the high-level value $X_H = x_B$, so it is unhelpful for deciding between whether $x_B$ implies $x_1$ or $x_2$. Another issue is that the mechanisms of deciding the natural value of $X_L$ are usually unobserved, so requiring this information adds a layer of unobserved confounding. Specifically, all variables will now be confounded with their parents since they will need the exogenous information used to generate the parents to find their natural value (see Fig.~\ref{fig:partial-proj-confounding}). This extra confounding adds difficulty in practical applications that require identifying the high-level quantities from minimal data.


In general, it is preferable to avoid the extra level of complication added by considering the natural value of variables given that the natural value is typically not observable in practice if an intervention was performed. For that reason, the interpretation of whether $X_H = x_B$ should be disambiguated as $x_1$ or $x_2$ should allow for either possibility without depending on the mechanism of $X_L$. This leads to the following formulation.

\begin{align}
&P(Y_{X_H = x_B} = y_1) \label{eq:soft-q-choice-sample-expanded} \\
&= \alpha{\color{purple}p_{0010}} + \alpha{\color{purple}p_{0110}} + (1-\alpha){\color{purple}p_{0001}} + (1-\alpha){\color{purple}p_{0101}} + p_{0011} + p_{0111} \nonumber \\
&+ \alpha{\color{blue}p_{1010}} + \alpha{\color{blue}p_{1110}} + (1-\alpha){\color{red}p_{1001}} + (1-\alpha){\color{red}p_{1101}} + p_{1011} + p_{1111} \nonumber \\
&+ \alpha{\color{red}p_{2010}} + \alpha{\color{red}p_{2110}} + (1-\alpha){\color{blue}p_{2001}} + (1-\alpha){\color{blue}p_{2101}} + p_{2011} + p_{2111}, \nonumber
\end{align}

where $\alpha \in [0, 1]$. In other words,

\begin{equation}
    \label{eq:soft-q-choice-sample}
    P(Y_{X_H = x_B} = y_1) = \alpha P(Y_{X_L = x_1} = y_1) + (1-\alpha) P(Y_{X_L = x_2} = y_1).
\end{equation}

This formulates the high-level intervention as a soft intervention over the low-level interventions, where $\alpha$ determines the probability of each possible low-level value. From the canonical model perspective, every possible value from Eq.~\ref{eq:soft-q-choice-max} is considered, but its weight is determined by $\alpha$. This resolves issues arising from using the natural value, since Eq.~\ref{eq:soft-q-choice-sample} can be computed for a fixed $\alpha$ as long as $P(Y_{X_L = x_1} = y_1)$ and $P(Y_{X_L = x_2} = y_1)$ can be computed. However, the question of choosing $\alpha$ remains. Indeed, an arbitrary value such as $\alpha = 0.5$ may not make the most sense. For example, if $P(X_L = x_1) >>> P(X_L = x_2)$, it may make more sense to pick a choice of $\alpha$ that is biased towards $x_1$. By this line of reasoning, the ideal choice of $\alpha$ should be

\begin{equation}
\label{eq:soft-q-alpha-choice}
    \alpha = P(X_L = x_1 \mid X_L \in \{x_1, x_2\}) = \frac{P(X_L = x_1, X_L \in \{x_1, x_2\})}{P(X_L \in \{x_1, x_2\})} = \frac{P(X_L = x_1)}{P(X_L = x_1) + P(X_L = x_2)}.
\end{equation}

More generally, this choice of $\alpha$ implies that a high-level intervention should be a soft intervention over the corresponding low-level interventions with probabilities based on their proportions. Formally, for a high-level intervention $\*X_H \gets \*x_H$, there is a corresponding soft intervention $\sigma_{\*X_L}$ that is a distribution over all low level interventions $\*X_L \gets \*x_L$ where $\tau(\*x_L) = \*x_H$. In general, $\*X_L$ must be a union of clusters for the abstraction mapping to be well defined, that is, $\*X_L = \bigcup_{\*C \in \bbC'} \*C$ for some $\bbC' \subseteq \bbC$. $\sigma_{\*X_L}$ must be decomposed into $\{\sigma_{\*C} : \*C \in \bbC'\}$, which must be sampled independently, otherwise having multiple interventions may introduce unintentional confounding. Following the above example, $\sigma_{\*C_i}$ for some $\*C_i \in \bbC$ (and corresponding $V_{H, i} \in \*V_H$) would be defined as

\begin{equation}
    \label{eq:low-intervention-agnostic}
    P(\sigma_{\*C_i} = \*c_i) = P(\*c_i \mid \tau(\*c_i) = v_{H, i}).
\end{equation}

Still, Eq.~\ref{eq:low-intervention-agnostic} may not be expressive enough for many applications. While this choice of $\alpha$ in Eq.~\ref{eq:soft-q-alpha-choice} works for the two-variable study shown here, it may fail to hold in general cases with more variables. Consider the following example.

\begin{example}
    \label{ex:soft-inter-connection-markov}
    Recall the setting discussed in Ex.~\ref{ex:noaic-issue}. For convenience, the setting is described again here. Different insurance companies ($Z$) offer various insurance plans ($X$), which affect whether an insurance claim is approved ($Y$). For simplicity, suppose there are two insurance companies ($z_1$ and $z_2$) which offer three different insurance plans ($x_1$, $x_2$, and $x_3$), and the claim is either approved ($Y = 1$) or not approved ($Y = 0$). Suppose the true model $\cM^* = \cM_L = \langle \*U_L, \*V_L, \cF_L, P(\*U_L)\rangle$ is described as follows.

    \begin{equation}
        \label{eq:ex-insurance-scm-query-form}
        \begin{split}
            \*U_L &= \{U_Z, U_X^{z_1}, U_X^{z_2}, U_Y^{x_1}, U_Y^{x_2}, U_Y^{x_3}\} \\
            \*V_L &= \{Z, X, Y\} \\
            \cF_L &=
            \begin{cases}
                f^L_Z(u_Z) &= u_Z \\
                f^L_X(z, u_X^{z_1}, u_X^{z_2}) &= u_X^{z} \\
                f^L_Y(x, u_Y^{x_1}, u_Y^{x_2}, u_Y^{x_3}) &= u_Y^x
            \end{cases} \\
            P(\*U_L) &= \begin{cases}
                P(U_Z = z_1) = 0.7 \\
                P(U_X^{z_1} = x_1) = 0.4, P(U_X^{z_1} = x_2) = 0.1, P(U_X^{z_1} = x_3) = 0.5 \\
                P(U_X^{z_2} = x_1) = 0.1, P(U_X^{z_2} = x_2) = 0.4, P(U_X^{z_2} = x_3) = 0.5 \\
                P(U_Y^{x_1} = 1) = 0.9, P(U_Y^{x_2} = 1) = 0.1, P(U_Y^{x_3} = 1) = 0.9
            \end{cases}
        \end{split}
    \end{equation}

    The interpretation of the model is as follows: Insurance plans $x_1$ and $x_3$ are very effective, with $0.9$ probability of claim acceptance, while $x_2$ is very ineffective at only $0.1$ probability. Insurance company $z_1$ is more reputable than $z_2$ and is more likely to offer plan $x_1$ over $x_2$, while company $z_2$ prefers to offer plan $x_2$ over $x_1$.

    Moreover, an important factor of consideration not shown in the model is that $x_1$ and $x_2$ are cheaper plans, while $x_3$ is more expensive. A data scientist who is studying this model may choose to abstract the different plans away, categorizing them simply as ``cheap'' and ``expensive'' plans. Formally, they would study a set of higher-level variables $\*V_H = \{Z_H, X_H, Y_H\}$, where $Z_H = Z$, $Y_H = Y$, and $X_H$ has a domain $\cD_{X_H} = \{x_C, x_E\}$ corresponding to cheap and expensive plans respectively. There exists an abstraction function $\tau: \cD_{\*V_L} \rightarrow \cD_{\*V_H}$ such that $\tau$ maps $x_1$ and $x_2$ to $x_C$ and maps $x_3$ to $x_E$. We will use the notation $Z$ and $Y$ instead of $Z_H$ and $Y_H$ since the variables are the same. One question that the data scientist may have is ``What is the causal effect of choosing a cheap plan on claim acceptance rate?'', denoted as $P(Y_{X_H = x_C} = 1)$.

    First, we note that $\tau$ is a constructive abstraction function with clusters $\bbC$ and $\bbD$, where $\bbC$ and $\bbD$ trivially leaves the original variables and values in their own clusters, except $x_1$ and $x_2$ are clustered together. Under this choice of $\tau$, observe that the AIC does not hold, notably that
    \begin{equation}
        0.9 = P(Y_{X = x_1} = 1) \neq P(Y_{X = x_2} = 1) = 0.1.
    \end{equation}

    Then, how could one compute $P(Y_{X_H = x_C} = 1)$ given that both $x_1$ and $x_2$ map to $x_C$? The answer is that the intervention $X_H = x_C$ should correspond to a soft intervention on $X$, denoted as $\sigma_X$ and assigning a different probability to $x_1$ and $x_2$ (but not to $x_3$, since $\tau(x_3)$ does not map to $x_C$). If $P(\sigma_X = x_1) = \alpha$ and $P(\sigma_X = x_2) = 1 - \alpha$, then it is clear that
    \begin{equation}
        P(Y_{X_H = x_C} = 1) = \alpha P(Y_{X = x_1} = 1) + (1 - \alpha) P(Y_{X = x_2} = 1),
    \end{equation}
    for some choice of $\alpha \in [0, 1]$.

    Still, this leaves the question of how to choose $\alpha$. As a first attempt, it may be appealing to choose
    \begin{equation}
        \label{eq:ex-soft-choice1}
        \begin{split}
            \alpha &= P(X = x_1 \mid X = x_1 \vee X = x_2) = \frac{P(X = x_1)}{P(X = x_1) + P(X = x_2)} = 0.5 \\
            (1 - \alpha) &= P(X = x_2 \mid X = x_1 \vee X = x_2) = \frac{P(X = x_2)}{P(X = x_1) + P(X = x_2)} = 0.5,
        \end{split}
    \end{equation}
    implying that $P(Y_{X_H = x_C} = 1) = 0.5$. Indeed, this choice has some appealing properties, notably that
    \begin{equation}
        \label{eq:ex-soft-choice1-markovian}
        P(Y_{X_H = x_C} = 1) = P(Y = 1 \mid X_H = x_C) = P(Y = 1 \mid X = x_1 \vee X = x_2).
    \end{equation}
    
    One immediate observation that arises from this choice of $\sigma_X$ is that the insurance company, $Z$, is not taken into account. Indeed, consider another query $P(Y_{X_H = x_C} = 1 \mid Z = z_1)$, which answers the question ``What is the causal effect of choosing a cheap plan on claim acceptance rate given that the plan was provided by company $z_1$?'' We would expect that, while $P(Y_{X_H = x_C} = 1) = 0.5$, it is very obvious that conditioning on $Z = z_1$ should change the result given that company $z_1$ is much more likely to recommend plan $x_1$ over $x_2$. However, with the choice of $\sigma_X$ from Eq.~\ref{eq:ex-soft-choice1}, we would evaluate $P(Y_{X_H = x_C} = 1 \mid Z = z_1)$ to be equal to $P(Y_{X_H = x_C} = 1)$, since neither $\sigma_X$ nor $f_Y$ takes $Z$ into account. To resolve this issue, it seems that a better choice of $\alpha$ may be deduced as follows
    \begin{equation}
        \label{eq:ex-soft-choice2}
        \begin{split}
            \alpha &= P(X = x_1 \mid X = x_1 \vee X = x_2, Z) = \frac{P(x_1 \mid Z)}{P(x_1 \mid Z) + P(x_2 \mid Z)} \\
            (1 - \alpha) &= P(X = x_2 \mid X = x_1 \vee X = x_2, Z) = \frac{P(x_2 \mid Z)}{P(x_1 \mid Z) + P(x_2 \mid Z)},
        \end{split}
    \end{equation}
    which evaluates as $\alpha = 0.8$ when $Z = z_1$ and $\alpha = 0.2$ when $Z = z_2$. This translates to $P(Y_{X_H = x_C} = 1 \mid Z = z_1) = 0.74$, whereas $P(Y_{X_H = x_C} = 1 \mid Z = z_2) = 0.26$.
    \hfill $\blacksquare$
\end{example}

As illustrated in the example, the soft intervention applied to the low-level should not be agnostic of the parents of the intervened variables. This is shown visually in Fig.~\ref{fig:full-vs-partial-proj}. Given that important information about $X$ is lost through the abstraction, information from $Z$ may be required in downstream functions to supplement the lost information. This brings us to the more general definition:

\begin{equation}
    \label{eq:low-intervention-markov}
    P(\sigma_{\*C_i} = \*c_i) = P(\*c_i \mid \tau(\*c_i) = v_{H, i}, \pai{V_{H, i}}).
\end{equation}

Note that this definition of $\sigma_{\*X_L}$ does not take into account any exogenous variables of $\*U_L$. This is sufficient in cases where the high-level model $\cM_H$ is Markovian (i.e., no unobserved confounders), but this is not a reasonable assumption in most settings, as even Markovian low-level models can translate to non-Markovian high-level models after the abstraction function is applied (see row (c) of Fig.~\ref{fig:proj-cdag-examples} for an example). The next section discusses why this definition is insufficient for non-Markovian cases and presents ideas on possible generalizations.

\subsection{Non-Markovian Considerations}
\label{app:discussion-nonmarkov}

Eq.~\ref{eq:low-intervention-markov} is reasonable for Markovian cases, where there is no unobserved confounding (i.e., all $U_H \in \*U_H$ are independent and are only parents for at most one $V_H \in \*V_H$). The interpretation of Eq.~\ref{eq:low-intervention-markov} is that the disambiguation of low-level interventions for any given high-level intervention should depend on the endogenous parents of the intervened variables, but the exogenous parents should be ignored and resampled. Having a dependence on the exogenous variables would result in identifiability issues (e.g., from newly generated confounding as visualized in Fig.~\ref{fig:partial-proj-confounding}). However, if the exogenous parents are causing unobserved confounding, it does not make sense to simply ignore them. Consider the following example.

\begin{example}
    Consider the same setting as Ex.~\ref{ex:soft-inter-connection-markov}, except in the data collection process, data is collected on hospitals instead of insurance companies. That is, for each person, $Z$ is recorded as their registered hospital instead of their insurance company, which is now left unobserved. For the sake of simplicity, $Z$ will stay as a binary variable, representing two possible hospitals $z_1$ and $z_2$. It turns out that people will choose their hospital based on which hospitals are covered by their insurance company, which now serves as an unobserved confounder between hospital choice and insurance plan. The full SCM $\cM^* = \cM_L = \langle \*U_L, \*V_L, \cF_L, P(\*U_L)\rangle$ is described as follows.

    \begin{equation}
        \label{eq:ex-insurance-scm-query-form-hospital}
        \begin{split}
            \*U_L &= \{U_Z, U_X^{z_1}, U_X^{z_2}, U_Y^{x_1}, U_Y^{x_2}, U_Y^{x_3}\} \\
            \*V_L &= \{Z, X, Y\} \\
            \cF_L &=
            \begin{cases}
                f^L_Z(u_Z) &= u_Z \\
                f^L_X(u_Z, u_X^{z_1}, u_X^{z_2}) &= u_X^{u_Z} \\
                f^L_Y(x, u_Y^{x_1}, u_Y^{x_2}, u_Y^{x_3}) &= u_Y^x
            \end{cases} \\
            P(\*U_L) &= \begin{cases}
                P(U_Z = z_1) = 0.7 \\
                P(U_X^{z_1} = x_1) = 0.4, P(U_X^{z_1} = x_2) = 0.1, P(U_X^{z_1} = x_3) = 0.5 \\
                P(U_X^{z_2} = x_1) = 0.1, P(U_X^{z_2} = x_2) = 0.4, P(U_X^{z_2} = x_3) = 0.5 \\
                P(U_Y^{x_1} = 1) = 0.9, P(U_Y^{x_2} = 1) = 0.1, P(U_Y^{x_3} = 1) = 0.9
            \end{cases}
        \end{split}
    \end{equation}

    Note that the only different between this SCM and the one from Eq.~\ref{eq:ex-insurance-scm-query-form} is that instead of $Z$, $f_X$ now takes $U_Z$ as input. However, the behavior of the two SCMs are identical on the observational level, and moreover, if $Z$ is projected away, the rest of the SCM is completely the same. Therefore, $P(Y_{X_H = x_C} = 1 \mid z)$ should be the same as the result computed in Eq.~\ref{eq:ex-soft-choice2}. However, this is obviously not the case when applying Eq.~\ref{eq:low-intervention-markov}, since $Z$ is no longer a parent of $X$.

    Indeed, the computation of $P(Y_{X_H = x_C} = 1 \mid z)$ according to Eq.~\ref{eq:low-intervention-markov} can now be shown as follows.
    \begin{align}
        &P(Y_{X_H = x_C} = 1 \mid z) \\
        &= P(Y_{\sigma_{X_L}(x_C) = 1} \mid z) \\
        &= P(X_L = x_1 \mid \tau(X_L) = x_C)P(Y_{X_L = x_1} \mid z) + P(X_L = x_2 \mid \tau(X_L) = x_C)P(Y_{X_L = x_2} \mid z) \label{eq:soft-map-step} \\
        &= P(X_L = x_1 \mid \tau(X_L) = x_C)P(Y_{X_L = x_1}) + P(X_L = x_2 \mid \tau(X_L) = x_C)P(Y_{X_L = x_2}) \label{eq:soft-r1-step} \\
        &= 0.5.
    \end{align}
    Notably, line \ref{eq:soft-map-step} applies Eq.~\ref{eq:low-intervention-markov}, which no longer includes $z$ in the probability of choosing the low-level intervention on $X_L$, and line \ref{eq:soft-r1-step} follows since $Z$ and $Y$ are independent when intervening on $X_L$.
    \hfill $\blacksquare$
\end{example}

The discrepancy in the above example follows from the issue that Eq.~\ref{eq:low-intervention-markov} makes a distinction between whether a variable's parent is endogenous or exogenous. In this particular example, the issue could be solved by modifying Eq.~\ref{eq:low-intervention-markov} to include $U_Z$ instead of $Z$. However, it is unclear why $U_Z$ should be included but not $U_X^{z_1}$ or $U_X^{z_2}$. Even in this example, SCM $\cM_L$ could be designed in a way that behaves identically, but the exogenous space is chosen differently. For example $U_Z$, $U_X^{z_1}$, and $U_X^{z_2}$ could be subsumed into a Gaussian distribution, and their behavior can be mimicked using the inverse integral transform. In such a case, one could not pick and choose individual variables from $\*U_L$ to include in the low-level soft intervention.

The key insight for solving this problem in the non-Markovian setting is to find a way to disentangle the confounded parts of the exogenous variables from the parts that are only influencing individual variables. For example, perhaps $\*U_X$ could be split into $\*U_X^c$ and $\*U_X^u$, where $\*U_X^c$ are all the exogenous variables that affect $X$ and also some other variable, while $\*U_X^u$ only affects $X$. Moreover, $\*U_X^u$ needs to be chosen in a way that is ``maximal'', so as to not allow arbitrary flexibility between whether a variable belongs in $\*U_X^u$ or $\*U_X^c$.

Once again, to solve this problem, one can leverage the principles of canonical models \citep{balke:pea97, zhang:bareinboim21b}. For any high-level variable $X_H$, define $R_{X_H}$ as a random variable, where $\cD_{R_{X_H}}$ consists of all possible functions of $f_{X_H}$ w.r.t.~$\Pai{X_H}$. Note that for a fixed choice of $\Ui{X_H}$, $f_{X_H}$ is a deterministic function w.r.t.~$\Pai{X_H}$. Hence,
\begin{equation}
    P(R_{X_H} = r_{X_H}) = \sum_{\*u \in \cD_{\Ui{X_H}} : f_{X_H}(\cdot, \*u) = r_{X_H}} P(\*u).
\end{equation}

For any high-level variable $X_H$, denote $\*V_H^c(X_H) \subseteq \*V_H \setminus \{X_H\}$ as the set of variables of $\*V_H$ that share an confounding exogenous variable with $X_H$. Finally, denote $R_{X_H}(\*u')$ as the random variable $R_{X_H}$ over the distribution $P(R_{X_H} = r_{X_H} \mid \*U' = \*u')$ for some $\*U' \subseteq \*U$.

Now redefine $\sigma_{\*C_i}$ as
\begin{equation}
    \label{eq:low-intervention-general}
    P(\sigma_{\*C_i} = \*c_i) = P(\*c_i \mid \tau(\*c_i) = v_{H, i}, \pai{V_{H, i}}, \*R_{\*V_H^c(V_{H, i})}(\ui{\*C_i})).
\end{equation}

This now matches Eq.~\ref{eq:low-intervention-short} in Sec.~\ref{sec:soft-abs}, with $\ui{V_{H, i}}^c$ being used as a shorthand for $\*R_{\*V_H^c(V_{H, i})}(\ui{\*C_i})$. Intuitively, the soft intervention over $\*C_i$ now also depends on $\Ui{\*C_i}$ but only in the way that it affects the functions of the confounded neighbors of $V_{H, i}$. Notably $\*V_H^c(V_{H, i})$ does not contain $V_{H, i}$ itself, so $\Ui{\*C_i}$ is still free to vary in ways that affect $f_{V_{H, i}}$ but not any other function.

Two important points must be clarified to avoid ambiguity when considering queries that contain interventions over multiple variables. First, $\sigma_{\*C_i}(V_{H, i}, \pai{V_{H, i}}, \ui{\*C_i})$ is applied at most once for each value of $v_{H, i}$ in $\tau(Q)$, so if there are multiple terms $\*Y_{H, i[\*x_{H, i}]}$ that share the same intervention (e.g., $V_{H, i} = v_{H, i}$ in both $\*x_{H, 1}$ and $\*x_{H, 2}$), then $\sigma_{\*C_i}$ is only sampled once and is used for both terms. However, if $V_{H, i} = v_{H, i}$ in $\*x_{H, 1}$ but $V_{H, i} = v'_{H, i}$ in $\*x_{H, 2}$, then it is sampled separately even though $V_{H, i}$ is in both terms. Second, if two high-level variables in the same intervention are confounded, interventions on both variables are performed according to $\sigma_{\*C_i}$ with $\*V_H^c(V_{H, i})$ remaining the same, ignoring the fact that the confounded neighbor is being intervened. These conditions are set to allow low-level queries to match corresponding high-level queries without generating semantic differences between identical high-level queries that are written in different forms (e.g., $P(Y_{x}, Z_x) = P(\{Y, Z\}_x)$).


\subsection{Projected Sampling}
\label{app:projected-sampling}

In the original implementation of the representational NCM (Def.~\ref{def:rncm}), the representation is learned using an autoencoder structure. That is, a neural network is used to implement $\widehat{\tau}$ which maps the original data $\*V_L$ to its high level representation $\*V_H$, and another neural network $\widehat{\tau}^{-1}$ is trained to invert $\*V_H$ back to $\*V_L$. To avoid AIC violations, the dimensionality of $\*V_H$ must be sufficiently large to allow $\widehat{\tau}^{-1}$ to accurately reconstruct $\*V_L$ with no loss of information. This poses a dilemma in practice of choosing the proper representation size, since a small representation will lose information and violate the AIC while a large representation is too difficult to learn in an NCM and poses little benefit compared to directly outputting the image.

The results of Sec.~\ref{sec:soft-abs} that generalize abstraction theory to cases with AIC-violations provides insight on how to work around this issue. Suppose a variable $X_H \in \*V_H^{\dagger}$ is an AIC violator, and the goal is to perform a high-level intervention on $X_H$. The corresponding low-level counterpart of the intervention $do(X_H = x_H)$ can be computed with $\sigma_{\*X_L}$ for $\tau(\*X_L) = X_H$, from Eq.~\ref{eq:low-intervention-short}. We can leverage this equation for the different purpose of sampling a value from $\*X_L$. If $X_H$ violates the AIC, then it is possible that knowing that $X_H = x_H$ does not provide enough information to pinpoint a precise corresponding value of $\*X_L$ (i.e., there could be many choices of $\*x_L \in \cD_{\*X_L}$ such that $\tau(\*x_L) = x_H$. Hence, instead of directly taking $\widehat{\tau}^{-1}(x_H)$ as the corresponding value of $\*x_L$, which is calculated through a deterministic function, one can sample from $\sigma_{\*X_L}$ to obtain a faithful value of $\*x_L$ that corresponds to $x_H$. We call this procedure \emph{projected sampling}.

To explain why projected sampling can provide quality samples that do not depend on the dimensionality of the representation, consider a concrete example of generating image $\*X_L$ from its representation $\*X_H$. Suppose 100 bits of information are required to fully reconstruct $\*X_L$, but $\*X_H$ only contains 10 bits. Using the autoencoding strategy of the original RNCM, it is clear that the reconstruction of $\*X_L$ based on $\*X_H$ will be imperfect since we have lost 90 bits of information. Moreover, it may be expected that as we add bits to $\*X_H$, the quality of the reconstruction will gradually improve as we approach 100 bits. In contrast, using the projected sampling approach, we take the 10 bits of information of $\*X_L$ and then use them to sample the other 90 bits. This forms a complete 100 bits that can be used to construct a full sample of $\*X_L$. This procedure can be used with any representation size of $\*X_H$ without sacrificing in the quality of the reconstruction. This comparison is shown visually in the experiment depicted in Fig.~\ref{fig:scaling-repr-results}.

\section{Additional Examples}
\label{app:examples}

In this section, we provide additional examples to illustrate the key points of the paper.

The main limitation that this paper aims to address is the requirement of the abstract invariance condition (AIC) in Def.~\ref{def:invariance-condition}. A commonly cited example of this issue is about the abstraction of the two types of cholesterol, HDL and LDL, as shown below.

\begin{example}
    \label{ex:noaic-heart-disease}
    Consider a study on the effects of diet on heart disease. Having an unhealthy diet ($X$) can raise the risk of heart disease ($Y$) depending on its cholesterol content. Cholesterol comes in two forms, called high-density and low-density lipoproteins (HDL and LDL, respectively), where HDL is believed to lower heart disease risk while LDL increases it \citep{steinberg2007, truswell2010}. Suppose the study is simplified to binary variables, and the true model $\cM_L$ is:
    \begin{align}
        \*U_L &= \{U_X, U_{C1}, U_{C2}, U_Y\} \\
        \*V_L &= \{X, HDL, LDL, Y\} \\
        \cF_L &=
        \begin{cases}
            X \gets f^L_X(u_X) = u_X \\
            HDL \gets f^L_{HDL}(x, u_{C1}) = x \oplus u_{C1} \\
            LDL \gets f^L_{LDL}(x, u_{C2}) = x \oplus u_{C2} \\
            Y \gets f^L_Y(hdl, ldl, u_Y) = (ldl \wedge \neg hdl) \oplus u_Y 
        \end{cases} \label{eq:ex-cholesterol-F} \\
        P(\*U_L) &=
        \begin{cases}
            P(U_X = 1) = 0.5 \\
            P(U_{C1} = 1) = 0.1 \\
            P(U_{C2} = 1) = 0.1 \\
            P(U_Y = 1) = 0.1
        \end{cases}
    \end{align}
    It can be computed from $\cM_L$ that a person is more likely to get heart disease if their diet consists of higher LDL levels and lower HDL levels, notably
    \begin{align}
        P^{\cM_L}(Y_{HDL = 0, LDL = 1} = 1) &= 0.9, \label{eq:nonaic_q1} \\
        P^{\cM_L}(Y_{HDL = 1, LDL = 0} = 1) &= 0.1. \label{eq:nonaic_q2}
    \end{align}
    
    Now, suppose a data scientist decides to abstract HDL and LDL together into a variable called ``total cholesterol'' (TC), defined as
    \begin{equation}
        TC = HDL + LDL.
    \end{equation}

    This naturally leads to the choice of intervariable clusters
    \begin{equation}
        \bbC = \{\*C_1 = \{X\}, \*C_2 = \{HDL, LDL\}, \*C_3 = \{Y\}\},
    \end{equation}
    and intravariable clusters
    \begin{equation}
    \bbD_{\*C_2} = 
    \begin{cases}
        tc_0 &= \{(HDL = 0, LDL = 0)\} \\
        tc_1 &= \{(HDL = 0, LDL = 1), \\
        & (HDL = 1, LDL = 0)\} \\
        tc_2 &= \{(HDL = 1, LDL = 1)\}.
    \end{cases}
    \end{equation}
    For the other clusters, the variables remain the same. Let $\tau$ be the constructive abstraction function defined with this choice of $\bbC$ and $\bbD$ (i.e.~$\tau_{\*C_2}(hdl, ldl) = hdl + ldl$).

    A violation of the AIC arises due to the grouping of values $(HDL = 0, LDL = 1)$ and $(HDL = 1, LDL = 0)$ into the same intravariable cluster. To witness, note that $\tau_{\*C_1}(HDL = 0, LDL = 1) = \tau_{\*C_2}(HDL = 1, LDL = 0) = (TC = 1)$. Consider two queries $Q_1 = P(Y_{HDL = 0, LDL = 1} = 1)$ and $Q_2 = P(Y_{HDL = 1, LDL = 0} = 1)$, and recall from Eqs.~\ref{eq:nonaic_q1} and \ref{eq:nonaic_q2} that $Q_1^{\cM_L} = 0.9$ and $Q_2^{\cM_L} = 0.1$. However, since $\tau_{\*C_1}(HDL = 0, LDL = 1) = \tau_{\*C_2}(HDL = 1, LDL = 0) = (TC = 1)$, both $Q_1$ and $Q_2$ have the same high-level counterpart (i.e., $\tau(Q_1) = \tau(Q_2) = P(Y_{TC = 1} = 1)$). No choice of $\cM_H$ over $\*V_H$ can be both $Q_1$-$\tau$ consistent and $Q_2$-$\tau$ consistent with $\cM_L$ because $P^{\cM_H}(Y_{TC = 1} = 1)$ cannot both be equal to $0.9$ and $0.1$.

    This holds true fundamentally on the SCM-level as well. Note that a $\tau$-abstraction with this choice of $\tau$ cannot exist for $\cM_L$ for similar reasons. Specifically, note that
    \begin{align}
        Y_{L[HDL = 0, LDL = 1]}(U_Y = 0) = 1, \label{eq:nonaic_y1} \\
        Y_{L[HDL = 1, LDL = 0]}(U_Y = 0) = 0, \label{eq:nonaic_y2}
    \end{align}
    but $Y_{H[TC = 1]}(\tau_{\mathbf{U}}(U_Y = 0))$ cannot both be equal to $0$ and $1$. This violates Eq.~\ref{eq:tau-u-compatibility}, implying that no such $\tau$-abstraction can exist.
    \hfill $\blacksquare$
\end{example}

Below, we give an example of an SCM projection followed by a partial SCM projection for comparison.

\begin{example}
    \label{ex:scm-projection}
    For concreteness, consider a setting in which different insurance companies ($Z$) offer various insurance plans ($X$), which affect whether an insurance claim is approved ($Y$). For simplicity, suppose there are two insurance companies ($z_1$ and $z_2$) which offer three different insurance plans ($x_1$, $x_2$, and $x_3$), and the claim is either approved ($Y = 1$) or not approved ($Y = 0$). Suppose the true model $\cM^* = \cM_L$ is described as follows.

    \begin{equation}
        \label{eq:ex-insurance-scm-2}
        \cM_L = \begin{cases}
            \*U_L &= \{U_Z, U_X^1, U_X^2, U_Y^1, U_Y^2, U_Y^3\} \\
            \*V_L &= \{Z, X, Y\} \\
            \cF_L &=
            \begin{cases}
                f^L_Z(u_Z) &= u_Z \\
                f^L_X(z, u_X^1, u_X^2) &= \begin{cases}
                    u_X^1 & z = z_1 \\
                    u_X^2 & z = z_2
                \end{cases} \\
                f^L_Y(x, u_Y^1, u_Y^2, u_Y^3) &= \begin{cases}
                    u_Y^1 & x = x_1 \\
                    u_Y^2 & x = x_2 \\
                    u_Y^3 & x = x_3
                \end{cases} \\
            \end{cases} \\
            P(\*U_L) &= \begin{cases}
                P(U_Z = z_1) = P(U_Z = z_2) = 0.5 \\
                P(U_X^1 = x_1) = 0.4, P(U_X^1 = x_2) = 0.1, P(U_X^1 = x_3) = 0.5 \\
                P(U_X^2 = x_1) = 0.1, P(U_X^2 = x_2) = 0.4, P(U_X^2 = x_3) = 0.5 \\
                P(U_Y^1 = 1) = 0.9, P(U_Y^2 = 1) = 0.1, P(U_Y^3 = 1) = 0.9
            \end{cases}
        \end{cases}
    \end{equation}

    The interpretation of the model is as follows: Insurance plans $x_1$ and $x_3$ are very effective, with $0.9$ probability of claim acceptance, while $x_2$ is very ineffective at only $0.1$ probability. Insurance company $z_1$ is more reputable than $z_2$ and is more likely to offer plan $x_1$ over $x_2$, while company $z_2$ prefers to offer plan $x_2$ over $x_1$.

    A data scientist may be interested in studying which insurance company ($z_1$ or $z_2$) is the better company for getting claims approved. In this case, the specific plan $X$ that is being used may not be relevant. One may wish to instead study only the set of variables $\{Z, Y\}$, excluding $X$ from the set. In other words, the SCM of interest is the \emph{SCM projection} of $\cM_L$ to the variable set $\*V_H = \{Z, Y\}$. The SCM projection of $\cM_L$ over $\*V_H$ is quite straightforward to specify.

    \begin{equation}
        \label{eq:ex-insurance-projection}
        \cM_H = \begin{cases}
            \*U_H &= \*U_L \\
            \*V_H &= \{Z, Y\} \\
            \cF_H &=
            \begin{cases}
                f^H_Z(u_Z) &= u_Z \\
                f^H_Y(z, u_X^1, u_X^2, u_Y^1, u_Y^2, u_Y^3) &= \begin{cases}
                    u_Y^1 & f^L_X(z, u_X^1, u_X^2) = x_1 \\
                    u_Y^2 & f^L_X(z, u_X^1, u_X^2) = x_2 \\
                    u_Y^3 & f^L_X(z, u_X^1, u_X^2) = x_3
                \end{cases} \\
            \end{cases} \\
            P(\*U_L) &= P(\*U_H)
        \end{cases}
    \end{equation}

    With $X$ excluded from the model, the functionality of $X$ is projected into the function of its child, $Y$. Hence, the natural construction of the SCM projection $\cM_H$ is simply the same as the construction of $\cM_L$, but with $f_Y$ computing $X$ internally using $f_X$ (comparing Eq.~\ref{eq:ex-insurance-scm} with Eq.~\ref{eq:ex-insurance-projection}). It is not difficult to verify that computations of values of $Z$ and $Y$ under any choice of $\*U_L$ remains the same in both models. Consequently, the induced PCH distributions are also the same, and $\cM_H$ can be viewed simply as $\cM_L$ but ignoring $X$.
    
    \hfill $\blacksquare$
\end{example}

\begin{example}
    \label{ex:scm-partial-projection}
    Continuing the insurance example in Ex.~\ref{ex:scm-projection}, suppose an important factor of consideration not shown in the model is that $x_1$ and $x_2$ are cheaper insurance plans, while $x_3$ is more expensive. A data scientist who is studying this model may choose to abstract the different plans away, categorizing them simply as ``cheap'' and ``expensive'' plans. Formally, they would study a set of higher-level variables $\*V_H = \{Z_H, X_H, Y_H\}$, where $Z_H = Z$, $Y_H = Y$, and $X_H$ has a domain $\cD_{X_H} = \{x_C, x_E\}$ corresponding to cheap and expensive plans respectively. There exists an abstraction function $\tau: \cD_{\*V_L} \rightarrow \cD_{\*V_H}$ such that $\tau$ maps $x_1$ and $x_2$ to $x_C$ and maps $x_3$ to $x_E$. We will use the notation $Z$ and $Y$ instead of $Z_H$ and $Y_H$ since the variables are the same. Note that in the new abstraction model $\cM_H$, $X$ is not removed entirely, but it is reduced down to only two possible values instead of three.

    One possible method of accounting for this is as follows. First, redefine $X$ into two parts, $X^o$ and $X^u$, where $X^o$ represents the observed portion of $X$ and $X^u$ represents the unobserved portion. $X^o$ can simply be defined as $\tau(X)$. However, when $X^o = x_C$, it is ambiguous whether $X = x_1$ or $x_2$. Define $X^u$ as a binary variable, where, whenever $X^o = x_C$, $X^u = 0$ represents $X = x_1$ while $X^u = 1$ represents $X = x_2$. $X^u$ can be thought of as an indicator variable disambiguating any loss of information of $X^o$. Putting everything together, one can construct $\cM_H$ as follows.

    \begin{equation}
        \label{eq:ex-insurance-partial-projection}
        \cM_H = \begin{cases}
            \*U_H &= \*U_L \cup \{X^u\} \\
            \*V_H &= \{Z, X_H, Y\} \\
            \cF_H &=
            \begin{cases}
                f^H_Z(u_Z) &= u_Z \\
                f^H_{X}(z, u_X^1, u_X^2) &= \tau(f^L_X(z, u_X^1, u_X^2)) \\
                f^H_Y(z, x^o, x^u, u_Y^1, u_Y^2, u_Y^3) &= \begin{cases}
                    u_Y^1 & x^o = x_C, x^u = 0 \\
                    u_Y^2 & x^o = x_C, x^u = 1 \\
                    u_Y^3 & x^o = x_E
                \end{cases} \\
            \end{cases} \\
            P(\*U_H) &= P(U_L)P(X^u \mid \*U_L) \\
            & P(X^u = 0 \mid \*U_L) = P(X = x_1 \mid X \in \{x_1, x_2\}) \\
            & P(X^u = 1 \mid \*U_L) = P(X = x_2 \mid X \in \{x_1, x_2\})
        \end{cases}
    \end{equation}

    Note that in this model, $f^H_Y$ is trying to retain the same functionality as $f^L_Y$, but it is only given $X_H$ as input instead of $X$. To disambiguate between $X = x_1$ and $X = x_2$, which both map to $X_H = x_C$, it utilizes the new exogenous variable $X^u$, whose probability is based on the probability of whether $X$ is $x_1$ or $x_2$. In doing so, $f^H_Y$ can mimic the functionality of $f^L_Y$ in the sense that the lost information for $X$ is partially projected into the exogenous space.
    \hfill $\blacksquare$
\end{example}




\section{Experimental Details}
\label{app:experimental-details}

In this section, we add further details to the experiments.

\subsection{Neural Causal Models}

Most of the experiments in this paper leverage the $\cG$-constrained neural causal model for practical implementations, defined below.

\begin{definition}[$\cG$-Constrained Neural Causal Model ($\cG$-NCM) {\citep[Def.~7]{xia:etal21}}]
    \label{def:gncm}
    Given a causal diagram $\cG$, a $\cG$-constrained Neural Causal Model (for short, $\cG$-NCM) $\widehat{M}(\bm{\theta})$ over variables $\*V$ with parameters $\bm{\theta} = \{\theta_{V_i} : V_i \in \*V\}$ is an SCM $\langle \widehat{\*U}, \*V, \widehat{\cF}, P(\widehat{\*U}) \rangle$ such that
    \begin{itemize}
        \item $\widehat{\*U} = \{\widehat{U}_{\*C} : \*C \in \bbC(\cG)\}$, where $\bbC(\cG)$ is the set of all maximal cliques over bidirected edges of $\cG$;
        
        \item $\widehat{\cF} = \{\hat{f}_{V_i} : V_i \in \*V\}$, where each $\hat{f}_{V_i}$ is a feedforward neural network parameterized by $\theta_{V_i} \in \bm{\theta}$ mapping values of $\Ui{V_i} \cup \Pai{V_i}$ to values of $V_i$ for $\Ui{V_i} = \{\widehat{U}_{\*C} : \widehat{U}_{\*C} \in \widehat{\*U} \text{ s.t. } V_i \in \*C\}$ and $\Pai{V_i} = \Parents_{\cG}(V_i)$;
        
        \item $P(\widehat{\*U})$ is defined s.t.\ $\widehat{U} \sim \unif(0, 1)$ for each $\widehat{U} \in \widehat{\*U}$.
        \hfill $\blacksquare$
    \end{itemize}
\end{definition}

The representational form of the NCM (RNCM) is incorporated for the sake of leveraging abstractions as representation learning tools, defined below.

\begin{definition}[Representational NCM (RNCM) {\citep[Def.~11]{xia:bareinboim24}}]
    \label{def:rncm}
    A representational NCM (RNCM) is a tuple $\langle \widehat{\tau}, \widehat{M} \rangle$, where $\widehat{\tau}(\*v_L; \bm{\theta}_{\tau})$ is a function parameterized by $\bm{\theta}_{\tau}$ mapping from $\*V_L$ to $\*V_H$, and $\widehat{M}$ is an NCM defined over $\*V_H$. A $\cG_{\bbC}$-constrained RNCM ($\cG_{\bbC}$-RNCM) is an RNCM $\langle \widehat{\tau}, \widehat{M} \rangle$ such that $\widehat{\tau}$ is composed of subfunctions $\widehat{\tau}_{\*C_i}$ for each $\*C_i \in \bbC$ (each with its own parameters $\bm{\theta}_{\tau_{\*C_i}}$), and $\widehat{M}$ is a $\cG_{\bbC}$-NCM (Def.~\ref{def:gncm}).
    \hfill $\blacksquare$
\end{definition}

\subsection{Projected C-DAG Experiment}

The first experiment tests the necessity of the projected C-DAGs in an estimation task where the AIC does not hold. The setting is described by three variables $\*V_L = \{Z, X, Y\}$, and the low level model is described as
\begin{itemize}
    \item $Z$ is a 10-dimensional one-hot encoding ($\cD_Z = \{0, 1\}^{10}$) of a digit from 0-9, and it samples one uniformly at random.
    \item $X$ is an MNIST image ($\bbR^{3 \times 32 \times 32}$) of the digit of $Z$. It is colored either red or blue and is shaded either light or dark. If the digit is odd, there is a 0.9 probability that the color will be red and 0.1 that it will be blue. The odds are flipped if the digit is even. Blue digits have a 0.7 probability of being light and 0.3 of being dark, and the odds are flipped for red digits.
    \item $Y$ is a label  ($\cD_Y = \{0, 1\}$) that predicts whether $X$ is red ($Y = 1$) or blue ($Y = 0$), but it is incorrect with 0.1 probability.
\end{itemize}

On the high level, $Z$ and $Y$ remain the same, but $\tau(X) = X_H$, where $X_H$ is a binary variable ($\cD_{X_H} = \{0, 1\}$) that represents whether $X$ is light or dark.

The corresponding causal diagram $\cG$ is shown in the l.h.s.\ of Fig.~\ref{fig:proj-cdag-examples}(a), which is also the C-DAG $\cG_{\bbC}$. The r.h.s.\ shows the projected C-DAG $\cG_{\bbC}^\dagger$, which is a result of $X$ being an AIC violator.

The query being estimated is $P(Y_{X_H = 1} = 1 \mid Z = 0)$, or the probability that $Y$ predicts red under the intervention of forcing the image to be a light image, and conditioning on the digit being 0. The results are shown in Fig.~\ref{fig:mnist-est-results}. Three different GAN-NCMs \citep{xia:etal23} are trained. The first (red line) is a $\cG$-NCM that is trained directly on the low-level data and attempts to estimate the low-level query without abstractions. The second (yellow line) is a $\cG_{\bbC}$-NCM trained on the high-level data and is constrained by the C-DAG. The third (blue line) is similar to the second except it is a $\cG_{\bbC}^\dagger$-NCM, constrained by the projected C-DAG. 95\% confidence intervals of the errors across 10 trials are plotted in the figure.

\subsection{Colored MNIST Sampling Experiment}

The second experiment shows the ability of causal generative models to generate samples from causal queries involving high-dimensional images. The setting is described by three variables $\*V_L = \{D, C, I\}$, and the low level model is described as
\begin{itemize}
    \item $D$ and $C$ are 10-dimensional one-hot encodings ($\cD_{D} = \cD_{C} = \{0, 1\}^{10}$ representing digits from 0-9 and colors from a spectrum respectively. Each digit is correlated with a color, a consequence of confounding. The correlated colors are shown on the right side of Fig.~\ref{fig:colored-mnist-legend}. A digit has a 0.9 probability of being its assigned color with a 0.1 probability of deviating.
    
    \item $I$ is a corresponding MNIST digit ($\cD_{I} = \bbR^{3 \times 64 \times 64}$) with color $C$ and digit $D$.
\end{itemize}

The corresponding causal diagram is shown on the left side of Fig.~\ref{fig:colored-mnist-legend}. The results are shown in Fig.~\ref{fig:mnist-bd-samp-results}, demonstrating the ability for each of the methods on the left to sample images from the queries on the top. The non-causal approach simply trains a conditional GAN to sample image given digit. The RNCM \citep{xia:bareinboim24} maps images to a learned representation (i.e., $\tau$ is learned), which serves as the high-level space. However, due to AIC limitations, the dimensionality of $X_H$ must remain high. When $\cD_{X_H} \in \bbR^{16}$, the RNCM is able to sample the digits properly. However, when $\cD_{X_H} \in \{0, 1\}$, the RNCM is unable to get enough expressivity from the representation to perform the sampling. In contrast, the projected sampling approach, which trains a sampling model on top of the high-level model to sample from Eq.~\ref{eq:low-intervention-short}, is still able to reproduce the images despite the low-dimensional representation.

\subsection{Additional Results (Scaling Representation)}

\begin{figure*}
    \begin{center}
    \includegraphics[width=\textwidth,keepaspectratio]{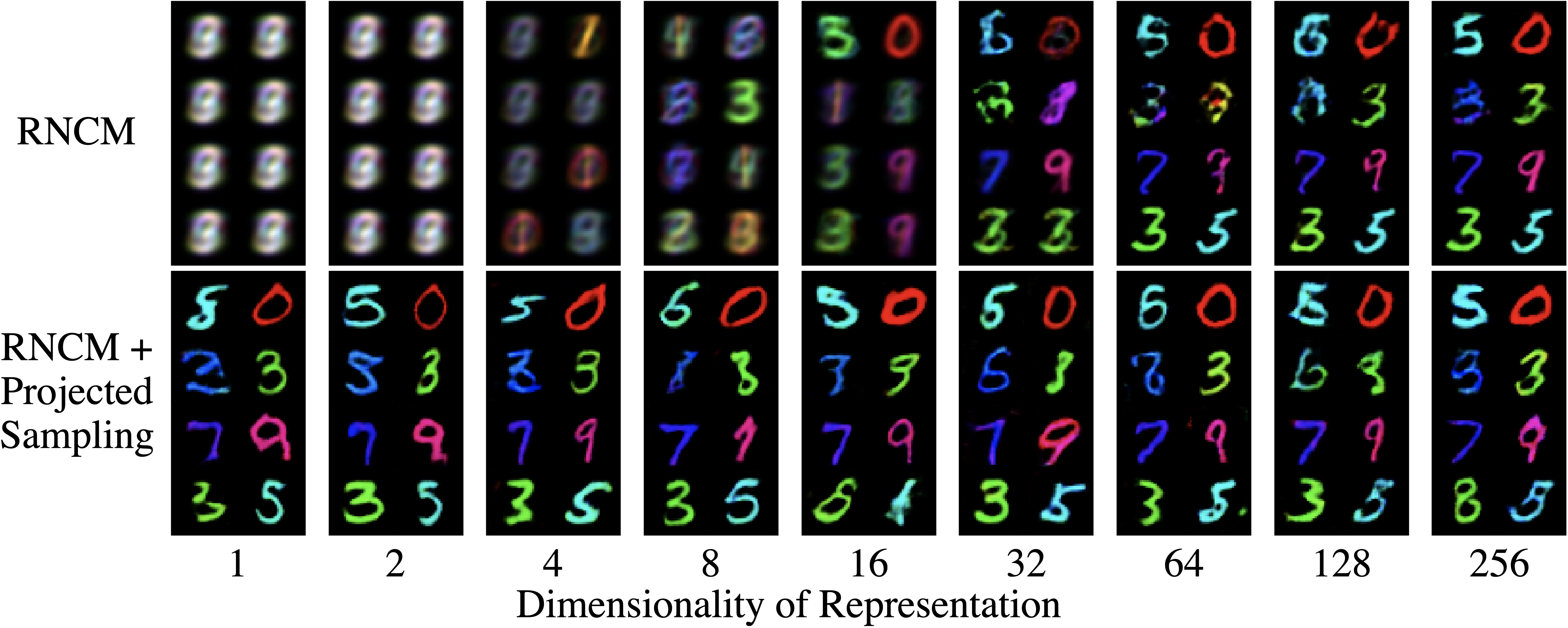}
    \caption{Scaling representation size for the Colored MNIST experiment. Results are shown for generating Colored MNIST digits from an RNCM using the standard strategy (top) and our projected sampling strategy (bottom), with increasing sizes of representation from left to right (shown in number of bits).
    }
    \label{fig:scaling-repr-results}
    \end{center}
\end{figure*}

To demonstrate the drop-off in performance in the original autoencoding representation learning procedure of the RNCM, we generate images from the RNCM trained with a fixed representation size and compare them to images generated from an RNCM trained on the same representation with the projected sampling procedure (described in App.~\ref{app:projected-sampling}). The representations are discrete binary representations ranging from 1 to 256 dimensions, and they are mapped by an untrained CNN with randomly initialized weights.

The results are shown in Fig.~\ref{fig:scaling-repr-results}. The original RNCM sampling procedure is shown in the top row while the projected sampling approach is shown in the bottom row. The size of the representation is increased from 1 to 256 dimensions, from left to right. Note that the quality of the image is very low using the original sampling procedure with a low-dimensional representation, but it gradually improves as the dimensionality increases. This aligns with the logic that a low-dimensional representation is more likely to lose important details of the original image, resulting in violations of the AIC. As expected, higher-dimensional representations capture more information and are less likely to violate the AIC, and violations are likely to be less severe, eventually resulting in perfect reconstruction with sufficiently high representation capacity. In contrast, the quality of images from the projected sampling approach is unaffected by the size of the representation. Any information that cannot be extracted from the representation is supplemented through additional noise in the projected sampling approach. The AIC violation of the low-dimensional representation is handled through the projected abstraction approach described in this paper.

\section{Extended Discussion}

In this section, we include additional discussion related to the contents of this paper.

\subsection{Related Works}

The contents of this work are built on a foundation of prior work, notably on causal inference theory regarding SCMs and the PCH, causal abstraction theory, and causal generative models for the empirical portion. Important definitions are introduced in Sec.~\ref{sec:prelims} and App.~\ref{app:additional-defs}, and further discussion of these prior works are in App.~\ref{app:extended-prelim}.

Many works have achieved strong theoretical and empirical results in the field of mechanistic interpretability by leveraging causal abstraction theory \citep{geiger2023causal2, geiger2023causal, pmlr-v213-massidda23a, DBLP:conf/clear2/ZennaroDAWD23, felekis:etal24}. These works typically operate by treating the black-box neural model as low-level causal model and providing an interpretable high-level SCM as a hypothesis of the functionality of the neural network. An abstraction function $\tau$ is then learned to map the outputs of the low-level neural network with the high-level hypothesis, and the validity of the hypothesis can be evaluated based on whether $\tau$ can achieve the theoretical properties expected of abstractions. This paper focuses on causal abstraction inference, which is a different task that aims to infer causal quantities given lower layer data from the PCH by first mapping the low-level data to a high-level representation. Causal abstraction inference tasks typically leverage abstractions in the form of constructive abstractions, which is not the most general form of abstractions. Nonetheless, challenges surrounding the AIC remain in any causal abstraction task, and leveraging the projected generalizations in this paper to aid in mechanistic interpretability tasks may be a promising avenue of future work.

The strategy of this paper for performing inferences across abstractions under the constraints of the AIC is to generalize the framework to allow for AIC violations. Such an approach allows for one to make an abstraction of any granularity while still having well-defined high-level queries at the cost of additional edges in the C-DAG. Still, this is not the only strategy to avoid the strict constraints placed by the AIC. An alternative approach is to use a weaker version of the AIC that is verifiable by data. This is explored by \citet{Chalupka2015MultiLevelCS, 10.5555/3020847.3020867}, where intravariable clusters of a high-dimensional image data setting are learned. Instead of assuming the full AIC, they compare observational and interventional partitions, which ensure that the outputs of the conditional and interventional distributions given two inputs of the same partition are identical rather than the full SCM function. Notably, they prove the Causal Coarsening Theorem \citep[Thm.~5]{10.5555/3020847.3020867} which shows that the interventional partition is almost always (measure-theoretically speaking) coarser than the observational partition, implying that learning a partition using observational data can guarantee an intervention-level AIC. A more in-depth discussion of this comparison can be found in \citet[App.~D.2]{xia:bareinboim24}.

The process of learning intravariable clusters in causal abstraction inference can be considered a form of representation learning \citep{10.1109/TPAMI.2013.50}. Causal representation learning is an emerging field \citep{Scholkopfetal21}, where the goal is to discover high-level variables from available data. The term ``causal representation learning'' is fairly broad and can refer to several tasks and techniques. One of the most commonly researched tasks is the task of disentanglement causal representation learning \citep{10.5555/3540261.3541519, 10.5555/3586589.3586830, 10.5555/3600270.3603046, varici2023scorebasedcausalrepresentationlearning, 10.5555/3618408.3618426, 10.5555/3618408.3619756, 10.5555/3666122.3667533, 10.5555/3722577.3722852, 10.5555/3692070.3694554, li:etal24}. The goal of disentanglement causal representation learning is to learn high-level variables and their causal relationships from available data and perform causal inferences within those variables. The mapping between the data and the high-level variables is not immediately clear since the variables are often entangled in the data (e.g., a set of several high-level variables may be responsible for explaining a single image). In contrast, this work specifically handles cases where the high-level is studied as a constructive abstraction of the low-level, and there are no claims made on how to disentangle low-level variables. The process of disentanglement is challenging, and resulting high-level inferences are often non-identifiable. To compensate, all of these works incorporate assumptions of some form, such as assuming the availability of certain high-level labels, working in parametric spaces, or having interventional capabilities.

\subsection{Projected Abstraction Limitations and Tradeoffs}

The approaches introduced in this paper are limited by the validity of the assumptions. When the AIC is violated, it is necessary to disambiguate between low-level quantities that are ambiguous on the high-level, since it is otherwise impossible to define high-level queries in terms of their low-level counterparts. This paper uses Eq.~\ref{eq:low-intervention-short} as the method of grounding high-level queries to a specific distribution of its low-level counterparts, and the justification and derivation arises from the discussion in App.~\ref{app:choose-soft-intervention}. Nonetheless, it is possible that some use cases may find benefit from defining the connection differently, and this may still allow for causal inferences provided that the AIC violations are still disambiguated.

The projected abstraction framework allows one to ignore AIC violations and still perform inferences, but notably, this comes at the cost of additional dependencies between high-level variables, indicated by the additional edges of the projected C-DAG in Def.~\ref{def:proj-cdag} and Fig.~\ref{fig:proj-cdag-examples}. It is possible that these newly introduced dependencies may result in the non-identifiability of certain causal queries or present additional computational challenges. If this is undesirable, then it may be preferred to choose a solution that does not violate the AIC at all. If one is trying to learn intravariable clusters but does not wish to violate the AIC, additional assumptions about the variable domains or a weaker definition of the AIC may be required.

Another assumption that may receive criticism is the assumption of the availability of the projected C-DAG for causal inferences in Sec.~\ref{sec:inference}. Note that by the Causal Hierarchy Theorem \citep[Thm.~1]{bareinboim:etal20}, one cannot make inferences about higher layers of the PCH (e.g., interventions from $\cL_2$ or counterfactuals from $\cL_3$) using only data from lower layers of the PCH (e.g., observations from $\cL_1$). This also extends to abstraction inference \citep[Prop.~4]{xia:bareinboim24} and causal inferences with neural networks \citep[Corol.~1]{xia:etal21}. Therefore, assumptions are required in some form to make progress in this problem setting. Many works assume the availability of a causal diagram (Def.~\ref{def:cg}), which provide a qualitative description of the causal relationships of the variables that can enable certain inferences across PCH layers. The qualitative nature of causal diagrams makes this assumption a much weaker one than assuming the availability of the generating SCM. That said, in the context of causal abstraction inference, even this assumption can be relaxed by using the cluster causal diagram or C-DAG (Def.~\ref{def:cdag}) instead. A C-DAG can be interpreted as the high-level counterpart of the low-level causal diagram, which abstracts away any of the detailed causal connections between low-level variables. In this work, it is assumed that the C-DAG $\cG_{\bbC}$ and the AIC violating variables $\*V_H^{\dagger}$ are provided, allowing for the construction of the projected C-DAG $\cG_{\bbC}^{\dagger}$ through Def.~\ref{def:proj-cdag}. Still, it may be possible to perform causal inferences using other types of assumptions. For example, causal discovery is a task where the goal is to learn causal graphical models from observational data using weaker assumptions. However, it is generally the case that weaker assumptions tend to imply weaker inferences.

Finally, the empirical results of this work rely on the assumptions that models are sufficiently large and optimization is perfect. This is never ideal in practice, and it is an open question how to analyze the error bounds of the approach given imperfect optimization. Stronger base architectures may result in better performance that can achieve interesting results in much higher-dimensional problems than what is shown in this paper, and it is also an open question of how to best optimize these models.

\subsection{Duality of SCM Projections and Causal Abstractions}

The concept of SCM projections was first formalized in \citet{lee:bar19a} (see Prop.~\ref{def:scm-proj}), which was leveraged to generalize solutions of causal bandit problems to settings with non-manipulable variables. Notably, non-manipulable variables could be projected away, and the model comprising of the remaining manipulable variables had the same marginal causal distributions as the original model. The definition was compatible with the graphical criterion of projections shown in Fig.~\ref{fig:full-vs-partial-proj}(a).

This paper operates in the problem space of causal abstraction inference, which is orthogonal to the topic of causal bandits, but the concept of SCM projections is integral to both settings. Indeed, SCM projections can be interpreted as a primitive form of an abstraction, where the only operation involves including or excluding specific variables in the transition from low to high-level. This paper generalizes the concepts of SCM projections to the partial case (Def.~\ref{eq:partial-scm-projection}), which can intuitively be interpreted as performing the SCM projection while retaining the originally projected variables. Remarkably, this concept can be tied with the concept of constructive abstractions leveraging inter/intravariable clusters, which is shown throughout the remainder of Sec.~\ref{sec:soft-abs}. Sec.~\ref{sec:inference} further shows how projected C-DAGs are the graphical generalization of partially projections.

\subsection{Soft Intervention Generalization}

Throughout this paper, it is assumed that all interventions applied on the high-level model $\cM_H$ are atomic interventions (i.e., interventions that strictly set variables to one specific value). While some atomic high-level interventions translate to soft (probabilistic) interventions on the low-level through Eq.~\ref{eq:low-intervention-short}, the variation in the system is due to the ambiguity of AIC violations from the high-level intervention.

Still, the setting of the paper can be easily generalized to incorporate soft interventions on the high-level, where interventions can take an arbitrary probabilistic form \citep{correa:bar19}. That is, instead of simply intervening $\*X_H \gets \*x_H$, one can set $\*X_H \gets \sigma_{\*X_H}$, where $\sigma_{\*X_H}$ can be any fixed distribution $P^*(\*X_H \mid \*Z_H)$ for arbitrary $\*Z_H \subseteq \*V_H \setminus \*X_H$. The straightforward way to translate such an intervention to the low-level involves applying Eq.~\ref{eq:low-intervention-short} for every term in the support of $\sigma_{\*X_H}$, multiplied by the corresponding probability. That is,

\begin{align*}
    &P(\*y_{H[\sigma_{\*X_H}]}) \\
    &= \sum_{\*x_H \in \cD_{\*X_H}}P(\sigma_{\*X_H} = \*x_H) P(\*y_{H[\*x_H]}) \\
    &= \sum_{\*x_H \in \cD_{\*X_H}}P(\sigma_{\*X_H} = \*x_H) \sum_{\*y_L \in \cD_{\*Y_L}(\*y_{H[\*x_H]})} P(\*y_{L[\sigma_{\*X_L}]}),
\end{align*}
as from the definition of $Q$-$\tau$ consistency (Def.~\ref{def:q-tau-consistency}). The generalization to counterfactual queries follows similarly.


\end{document}